\documentclass{article}

\usepackage{microtype}
\usepackage{graphicx}
\usepackage{subcaption}
\usepackage{booktabs} 

\usepackage{hyperref}

\usepackage{placeins}

\usepackage{float} 
\floatstyle{plain}
\restylefloat{algorithm}

\usepackage[preprint]{icml2026}

\usepackage{amsmath}
\usepackage{amssymb}
\usepackage{mathtools}
\usepackage{amsthm}

\usepackage{booktabs,tabularx}
\usepackage{multirow}

\usepackage[capitalize,noabbrev]{cleveref}

\usepackage{algorithm}
\usepackage{bbm}
\usepackage[table]{xcolor}

\usepackage[dvipsnames]{xcolor}

\usepackage{arydshln}

\AtBeginDocument{%
  \setlength{\textfloatsep}{12pt}
}

\definecolor{mplmagenta}{HTML}{FF00FF}
\definecolor{rowblue}{RGB}{230,245,255}

\newcommand{\E}{\mathbb{E}}
\newcommand{\R}{\mathbb{R}}
\newcommand{\KL}{\mathrm{KL}}
\newcommand{\tr}{\mathrm{tr}}

\newcommand{\ema}{\text{ema}}

\newcommand{\apxpage}[1]{\dotfill\hyperref[#1]{\pageref*{#1}}}
\newcommand{\refpage}[1]{\hfill\hyperref[#1]{\pageref*{#1}}}

\theoremstyle{plain}
\newtheorem{theorem}{Theorem}[section]
\newtheorem{proposition}[theorem]{Proposition}
\newtheorem{lemma}[theorem]{Lemma}

\theoremstyle{definition}

\theoremstyle{remark}

\usepackage[textsize=tiny]{todonotes}

\icmltitlerunning{EMA Policy Gradient}

\begin{document}

\twocolumn[
  \icmltitle{EMA Policy Gradient:\\
    Taming Reinforcement Learning for LLMs with EMA Anchor and Top-k KL}

  \icmlsetsymbol{equal}{*}

  \begin{icmlauthorlist}
    \icmlauthor{Lunjun Zhang}{uoft}
    \icmlauthor{Jimmy Ba}{uoft}
  \end{icmlauthorlist}

  \icmlaffiliation{uoft}{Department of Computer Science, University of Toronto}

  \icmlcorrespondingauthor{Lunjun Zhang}{lunjun@cs.toronto.edu}

  \icmlkeywords{Large Language Models, Reinforcement Learning}

  \vskip 0.3in
]



\printAffiliationsAndNotice{}  

\begin{abstract}

Reinforcement Learning (RL) has enabled Large Language Models (LLMs) to acquire increasingly complex reasoning and agentic behaviors. In this work, we propose two simple techniques to improve policy gradient algorithms for LLMs. First, we replace the fixed anchor policy during RL with an Exponential Moving Average (EMA), similar to a target network in deep Q-learning. Second, we introduce Top-$k$ KL estimator, which allows for flexible interpolation between exact KL and sampled KL.
We derive the stability conditions for using EMA anchor; moreover, we show that our Top-$k$ KL estimator yields both unbiased KL values and unbiased gradients at any $k$, while bringing the benefits of exact KL.
When combined with GRPO, the two techniques (\textbf{EMA-PG}) lead to a significant performance boost. On math reasoning, it allows R1-distilled Qwen-1.5B to reach 53.9\% on OlympiadBench compared to 50.8\% by GRPO. On agentic RL domains, with Qwen-3B base, EMA-PG improves GRPO by an average of 33.3\% across 7 datasets of Q\&A with search engines, including 29.7\% $\rightarrow$ 44.1\% on HotpotQA, 27.4\% $\rightarrow$ 40.1\% on 2WikiMultiHopQA. Overall, we show that EMA-PG is a simple, principled, and powerful approach to scaling RL for LLMs. Code: \href{https://github.com/LunjunZhang/ema-pg}{github.com/LunjunZhang/ema-pg}.

\vspace{-20pt}
\end{abstract}

\begin{figure}[t]
  \hspace{20pt}
  \includegraphics[width=0.8\linewidth]{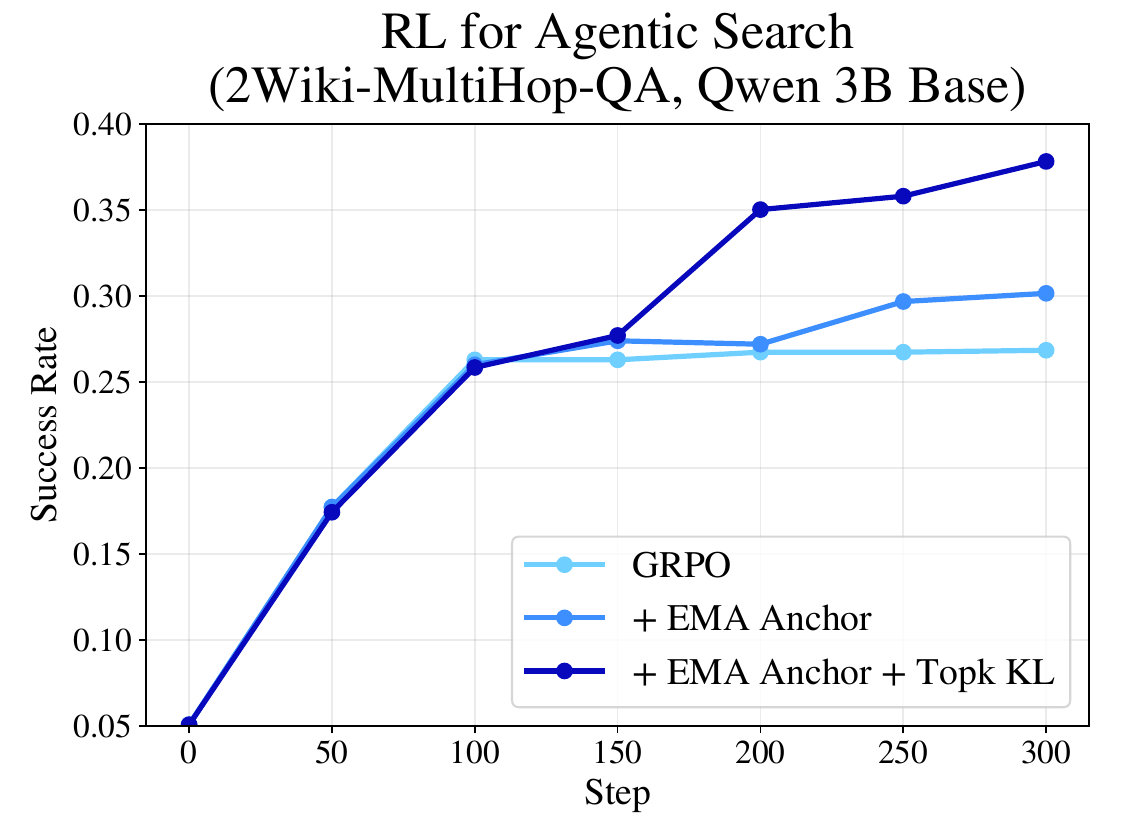} 
  \vspace{-5pt}
  \caption{\footnotesize Using EMA anchor policy and Top-$k$ KL estimator significantly improves the performance of RL algorithms like GRPO.}
  \label{fig:2-wiki-plot}
  \vspace{-2pt}
\end{figure}

\begin{table*}[t]
\setlength{\tabcolsep}{1.5pt}
\centering
    \caption{\centering\textbf{Token-level KL Estimators}. For brevity, we omit the conditioning of $\pi_{\theta}$ and $\pi_{\text{ref}}$ on past tokens. Notations: $\mathrm{sg}$ is stop-gradient,
    ${p\sim \pi_{\theta}(\cdot)}$ is the sampled token, ${r=\pi_{\theta}(p)/\mathrm{sg}(\pi_{\theta}(p))}$, ${w=\pi_{\text{ref}}(p)/\pi_{\theta}(p)}$; $\mathcal{V}$ is the vocabulary, ${\widehat{w}(j)=\pi_{\text{ref}}(j)/\pi_{\theta}(j)}, \forall j\in \mathcal{V}$.\\
    {\color{RoyalBlue}The estimators being highlighted in blue} are the KL estimators that provide both unbiased KL values and unbiased KL gradients.
    }
    \label{tab:kl-estimators}
    \vspace{-5pt}
    \begin{tabular}{lccccccc}
    \toprule
    $\E_{\pi_{\theta}}[\hat{D}_{f}(\pi_{\theta}, \pi_{\text{ref}})]$ & Expression & Value & Unbiased? & Gradient & Unbiased? & Memory & Overall \\
    \midrule
    \textbf{Reverse KL}: exact & $\sum_{j\in \mathcal{V}} \pi_{\theta}(j) [ -\log \widehat{w}(j) ]$ & $\mathrm{KL}(\pi_{\theta}, \pi_{\text{ref}})$ & ${\checkmark}$ & $\nabla_{\theta}\mathrm{KL}(\pi_{\theta}, \pi_{\text{ref}})$ & ${\checkmark}$ & {\color{lightgray}$O(|\mathcal{V}|)$} & {\color{lightgray}$\boldsymbol{\mathbin{\times}}$} \\
    K1 & $-\log w$ & $\mathrm{KL}(\pi_{\theta}, \pi_{\text{ref}})$ & ${\checkmark}$ & {\color{lightgray}$\nabla_{\theta}0$} & {\color{lightgray}$\boldsymbol{\mathbin{\times}}$} & $O(1)$ & {\color{lightgray}$\boldsymbol{\mathbin{\times}}$} \\
    K2 & $(\log w)^{2}/2$ & {\color{lightgray}$\E_{\pi_{\theta}}[(\log w)^{2}/2]$} & {\color{lightgray}$\boldsymbol{\mathbin{\times}}$} & $\nabla_{\theta}\mathrm{KL}(\pi_{\theta}, \pi_{\text{ref}})$ & ${\checkmark}$ & $O(1)$ & {\color{lightgray}$\boldsymbol{\mathbin{\times}}$} \\
    K3 & $-\log w + w -1$ & $\mathrm{KL}(\pi_{\theta}, \pi_{\text{ref}})$ & ${\checkmark}$ & {\color{lightgray}$\nabla_{\theta}\mathrm{KL}(\pi_{\text{ref}}, \pi_{\theta})$} & {\color{lightgray}$\boldsymbol{\mathbin{\times}}$} & $O(1)$ & {\color{lightgray}$\boldsymbol{\mathbin{\times}}$} \\
    \arrayrulecolor{gray!80}\hline
    ${\color{RoyalBlue}\textbf{K3}^{\boldsymbol{++}}}$ & $r \cdot (-\log w + w -1)$ & $\mathrm{KL}(\pi_{\theta}, \pi_{\text{ref}})$ & ${\checkmark}$ & $\nabla_{\theta}\mathrm{KL}(\pi_{\theta}, \pi_{\text{ref}})$ & ${\checkmark}$ & $O(1)$ & {\color{RoyalBlue}$\checkmark$} \\
    {\color{RoyalBlue}$\textbf{K4}$} & $r\cdot \mathrm{sg}(-\log w)$ & $\mathrm{KL}(\pi_{\theta}, \pi_{\text{ref}})$ & ${\checkmark}$ & $\nabla_{\theta}\mathrm{KL}(\pi_{\theta}, \pi_{\text{ref}})$ & ${\checkmark}$ & $O(1)$ & {\color{RoyalBlue}$\checkmark$} \\
    {\color{RoyalBlue}$\mathrm{Topk}$\textbf{Reverse}$\mathrm{KL}$} & Alg \ref{alg:topk-kl-reverse} & $\mathrm{KL}(\pi_{\theta}, \pi_{\text{ref}})$ & ${\checkmark}$ & $\nabla_{\theta}\mathrm{KL}(\pi_{\theta}, \pi_{\text{ref}})$ & ${\checkmark}$ & $O(k)$ & {\color{RoyalBlue}$\checkmark$} \\
    \midrule
    \textbf{Forward KL}: exact & $\sum_{j \in \mathcal{V}} \pi_{\text{ref}}(j) [ \log \widehat{w}(j) ]$ & $\mathrm{KL}(\pi_{\text{ref}}, \pi_{\theta})$ & ${\checkmark}$ & $\nabla_{\theta}\mathrm{KL}(\pi_{\text{ref}}, \pi_{\theta})$ & ${\checkmark}$ & {\color{lightgray}$O(|\mathcal{V}|)$} & {\color{lightgray}$\boldsymbol{\mathbin{\times}}$} \\
    \arrayrulecolor{gray!80}\hline
    {\color{RoyalBlue}\textbf{K5}} & $\mathrm{sg}(w)\log w + \log r$ & $\mathrm{KL}(\pi_{\text{ref}}, \pi_{\theta})$ & ${\checkmark}$ & $\nabla_{\theta}\mathrm{KL}(\pi_{\text{ref}}, \pi_{\theta})$ & ${\checkmark}$ & $O(1)$ & {\color{RoyalBlue}$\checkmark$} \\
    {\color{RoyalBlue}$\mathrm{Topk}$\textbf{Forward}$\mathrm{KL}$} & Alg \ref{alg:topk-kl-forward} & $\mathrm{KL}(\pi_{\text{ref}}, \pi_{\theta})$ & ${\checkmark}$ & $\nabla_{\theta}\mathrm{KL}(\pi_{\text{ref}}, \pi_{\theta})$ & ${\checkmark}$ & $O(k)$ & {\color{RoyalBlue}$\checkmark$} \\
    \bottomrule
  \end{tabular}
\vspace{-13pt}
\end{table*}

\section{Introduction}

As the pre-training paradigm began to show diminishing returns for large-language models (LLMs), reinforcement learning (RL) has re-emerged as a powerful mechanism for LLMs to continually self-improve based on self-generated experience. While RL was initially only used for alignment within the RL from Human Feedback (RLHF) framework \citep{rlhf, instructgpt, anthropic-hh, constitutional-ai}, it has since then been applied to reasoning models such as o1 \citep{openai-o1} and R1 \cite{deepseekr1}. Perhaps surprisingly, the predominant RL algorithm for LLMs today is still REINFORCE \cite{reinforce, sutton1999policy} and its modern off-policy variants like PPO \cite{ppo}, GRPO \cite{grpo}, and CISPO \cite{cispo}. This motivates us to study the foundational building blocks of policy gradient algorithms, so that such advancements can be broadly applied to LLMs.

The use of a target network has been the common practice since the dawn of deep Q-learning \cite{dqn}. A target network uses an Exponential Moving Average (EMA) to update its knowledge while producing stable supervision targets. Using EMA has seen success in self-supervised learning as well \cite{byol}. In RLHF, the idea of an EMA anchor policy has been explored in TR-DPO \cite{tr-dpo} and WARP \cite{warp}. A crucial difference between our work and theirs is how the EMA anchor is used: prior work optimizes sequence-level KL, but we optimize token-level KL, which we show makes a crucial difference in reasoning and agentic settings.

KL regularization is a key component of policy optimization. Initially, it was a way to enforce trust-region \cite{trpo} and ensure training stability. In LLM alignment, it mitigates reward hacking \cite{rm-over-optimization}. In LLM reasoning, it helps preserve base model capabilities \cite{deepseek-v3}. However, what KL estimator to use for LLM has long been a curious question. InstructGPT \cite{instructgpt} adds a sequence-level sampled KL to the reward; GRPO \cite{grpo} uses token-level sampled KL; the two objectives are different \cite{kl-pitfalls}. Moreover, knowledge distillation \cite{kd} shows that leveraging logits in KL computation can significantly facilitate learning. We analyze the tradeoffs between sequence-level KL vs token-level KL, sampled KL vs exact KL, to identify the best practices in Top-k KL.

\begin{algorithm}[t]

\caption{${\color{RoyalBlue}\mathrm{Topk}\textbf{Reverse}\mathrm{KL}}(\pi_{\theta}, \pi_{\text{ref}}, \pi_{\text{sampling}}, p, q)$}
\label{alg:topk-kl-reverse}
\begin{algorithmic}[1]
\REQUIRE Policy $\pi_\theta$, Anchor $\pi_{\text{ref}}$ for $\mathrm{KL}(\pi_{\theta}, \pi_{\text{ref}})$
\REQUIRE {\color{Gray}Sampling policy $\pi_{\text{sampling}}$,} sampled token ${p}$
\REQUIRE Set of top-K logit indices ${q}$ from {\color{RoyalBlue}sampling policy}
\STATE $\widehat{\mathrm{KL}}_{\text{trun}} = \sum_{{\color{RoyalBlue}j\in q}} {\color{RoyalBlue}\pi_{\theta}(j) [ \log \pi_{\theta}(j) - \log \pi_{\text{ref}}(j) ]}$
\STATE $r_{\theta}=\pi_{\theta}(p) / \mathrm{sg}(\pi_{\theta}(p))$, \quad {$w_{\theta}=\pi_{\text{ref}}(p) / \pi_{\theta}(p)$}
\STATE $\widehat{\mathrm{KL}}_{\text{sampled}} = {\color{RoyalBlue}r_{\theta}\cdot \mathrm{sg}(-\log w_{\theta})}$
\STATE {\color{Gray}$s = \mathrm{sg}(\pi_{\theta}(p) / \pi_{\text{sampling}}(p))$ \hfill \{clip to $[s_{\text{min}}, s_{\text{max}}]$\}}
\STATE \textbf{return} $\widehat{\mathrm{KL}}_{\text{trun}} +{\color{Gray}s} \, {\color{RoyalBlue}\mathbbm{1}(p \not\in q)} \, \widehat{\mathrm{KL}}_{\text{sampled}}$ 
\end{algorithmic}
\end{algorithm}

We propose \textbf{two very simple changes} to policy gradients. Given an RL update for LLM $\pi_{\theta}$ on prompt $x$:
\begin{align*}
    \theta \leftarrow \theta + \alpha( \underbrace{{A}(x,y)\nabla_{\theta} \log \pi_{\theta}(y|x)}_{\text{Any policy gradient algorithm}} - \beta \underbrace{\nabla_{\theta} \widehat{\mathrm{KL}}}_{\text{What we change}} )
\end{align*}
where $y$ is the LLM response of length $L$, and $A(x,y)$ is the advantage function. We only change {\color{magenta}\textbf{\textit{what}}} the KL is regularizing against and {\color{RoyalBlue}\textbf{\textit{how}}} the KL is computed:
\begin{align*}
    \boxed{\widehat{\mathrm{KL}} = {\textstyle\sum}_{l=0}^{L-1} {\color{RoyalBlue}\mathrm{TopkKL}}( \pi_{\theta}(\cdot | x, y_{0:l-1}), \pi_{\color{magenta}\theta_{\text{ema}}}(\cdot | x, y_{0:l-1}) )}
\end{align*}
where ${\color{RoyalBlue}\mathrm{TopkKL}}$ is an unbiased token-level KL estimator that computes exact KL on the top-$k$ indices of logits and a sampled KL on the rest; ${\color{magenta}\theta_{\text{ema}}}$ is an EMA target network:
\begin{align*}
    \boxed{{\color{magenta}\theta_{\text{ema}}} \leftarrow  \eta \cdot  {\color{magenta}\theta_{\text{ema}}} + (1-\eta) \cdot \theta}
\end{align*}
The two changes make a significant difference on RL training of reasoning and agentic models. On math, it allows R1-distilled Qwen-1.5B to reach 53.9\% on OlympiadBench (text-only) \cite{olympiadbench} compared to 50.8\% by GRPO. In agentic settings, with Qwen-3B base, EMA-PG improves GRPO by 33.3\% across 7 datasets of Q\&A with search engine, including 29.7\% $\rightarrow$ 44.1\% on HotpotQA \cite{hotpotqa}, 27.4\% $\rightarrow$ 40.1\% on 2WikiMultiHopQA \cite{2wiki}, 12.8\% $\rightarrow$ 33.1\% on Bamboogle \cite{bamboogle}.

\begin{algorithm}[t]

\caption{${\color{RoyalBlue}\mathrm{Topk}\textbf{Forward}\mathrm{KL}}(\pi_{\theta}, \pi_{\text{ref}}, \pi_{\text{sampling}}, p, q)$}
\label{alg:topk-kl-forward}
\begin{algorithmic}[1]
\REQUIRE Policy $\pi_{\theta}$, Anchor $\pi_{\text{ref}}$ for $\mathrm{KL}(\pi_{\text{ref}}, \pi_{\theta})$
\REQUIRE {\color{Gray}Sampling policy $\pi_{\text{sampling}}$,} sampled token ${p}$
\REQUIRE Set of top-K logit indices ${q}$ from {\color{RoyalBlue}anchor policy}
\STATE $\widehat{\mathrm{KL}}_{\text{trun}} = \sum_{{\color{RoyalBlue}j\in q}} {\color{RoyalBlue}\pi_{\text{ref}}(j) [ \log \pi_{\text{ref}}(j) - \log \pi_{\theta}(j)]}$
\STATE $r_{\theta}=\pi_{\theta}(p) / \mathrm{sg}(\pi_{\theta}(p))$, \quad  {$w_{\theta}=\pi_{\text{ref}}(p) / \pi_{\theta}(p)$}
\STATE $\widehat{\mathrm{KL}}_{\text{sampled}} = {\color{RoyalBlue} \mathrm{sg}(w_{\theta}) \log w_{\theta} + \log r_{\theta} }$
\STATE {\color{Gray}$s = \mathrm{sg}(\pi_{\theta}(p) / \pi_{\text{sampling}}(p))$ \hfill \{clip to $[s_{\text{min}}, s_{\text{max}}]$\}}
\STATE \textbf{return} $\widehat{\mathrm{KL}}_{\text{trun}}+ {\color{Gray}s} \, {\color{RoyalBlue}\mathbbm{1}(p \not\in q)} \, \widehat{\mathrm{KL}}_{\text{sampled}}$ 
\end{algorithmic}
\end{algorithm}

The rest of the paper is organized as the following. \textbf{(1)} We derive the stability condition for using an EMA anchor (\S\ref{sec:dynamics-ema-pg}). \textbf{(2)} We explain why token-level KL is more suited for reasoning models than sequence-level KL (\S\ref{sec:seq-kl-vs-token-kl}). \textbf{(3)} For token-level KL, common KL estimators such as K1, K2, K3 are unable to yield unbiased KL estimates and unbiased gradients at the same time, so we construct estimators $\text{K3}^{++}$, K4, and K5 that can (\S\ref{sec:sampled-kl}). \textbf{(4)} We propose Top-k KL estimators, which extract dense supervision from logits with low memory footprint (\S\ref{sec:top-k-kl-estimator}). Overall, EMA anchor and Top-k KL result in notable sample efficiency gain for RL.

\begin{figure*}[t]
  \centering
  \includegraphics[width=0.98\textwidth]{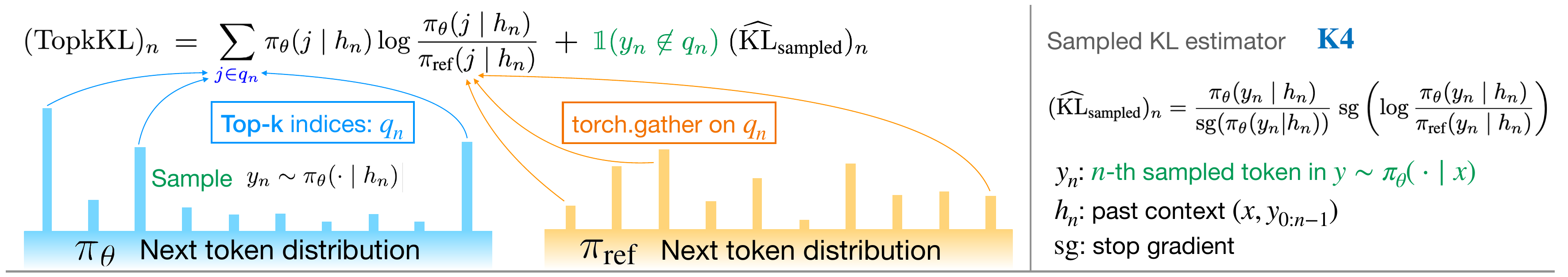}
  \label{fig:ema-illustration}
  \vspace{-3pt}
  \caption{Top-k KL computes exact KL on the top-k indices, and adds a masked sampled KL to keep the estimator \emph{unbiased}.}
  \vspace{-12pt}
\end{figure*}

\vspace{-5pt}
\section{Related Work}
\vspace{-2pt}
\paragraph{Improving RL for LLMs} Leveraging verifiable domains such as math and coding, lots of effort has gone into scaling RL for reasoning \cite{openai-o1, deepseekr1, cispo} and agentic tasks \cite{kimi-k2, gemini-2.5}. The predominant algorithm for RL in reasoning, GRPO \cite{grpo}, has gained widespread adoption due to its effectiveness. Many works have tried to further improve it, which include removing bias in policy gradient \cite{dr-grpo}, better importance weight clipping \cite{dapo}, using sequence-level objective \cite{gspo}, asynchronous data generation \cite{pipeline-rl}, better scaling practices \cite{art-of-scaling-rl}. Since we exclusively focus on the KL component in RL, our work is largely orthogonal to those techniques above.

\vspace{-12pt}
\paragraph{KL Estimators for LLMs} KL estimation for LLMs is challenging due to autoregressive action space and large vocabulary size. Early work on RLHF adds a sequence-level KL penalty to the sequence-level reward \cite{instructgpt, anthropic-hh}. As reasoning models begin to scale the sequence length of thinking traces, token-level KL regularization becomes popular \cite{deepseekr1}. Token-level sampled KL estimators are typically K1, K2, and K3 \citep{approximating-kl}; K3 was used in GRPO. Since then, many works have pointed out that K3 gradients are forward KL gradients \cite{kl-pitfalls, RPG, comedy-of-kl}. Sampled KL ignores the fact that we have white-box access to logits and next-token distributions; knowledge distillation has shown that such information is very useful to learning \cite{kd, sequence-kd, on-policy-distillation, distillspec}. When the vocabulary is large, storing full logits is too memory-intensive; our insight is to reduce the memory usage of logits via Top-k truncation.

\vspace{-12pt}
\paragraph{EMA in Deep Learning} Deep Q-learning almost universally uses an EMA network to produce stable learning targets \cite{dqn, ddpg, sac}. Self-supervised learning has also used EMA for stable bootstrapping \cite{momentum-contrast, byol, how-to-scale-ema}. EMA has been shown to improve generalization in deep learning \cite{reptile-algorithm, lookahead-optimizer}. In the diffusion model literature, EMA is often used to produce the final model \cite{ddpm}. For RL on LLMs, prior work has also suggested using an EMA anchor for alignment \cite{tr-dpo, warp}; our work differs in how the KL is computed. Elastic reset \cite{elastic-reset} also keeps an EMA copy during RL, but does not use it for reference policy. The effectiveness of EMA can be partially explained by Linear Mode Connectivity \cite{linear-mode-connectivity} and Monotonic Linear Interpolation \cite{qualitatively-characterizing-nn, analyzing-monotonic-linear-interpolation} between neural networks that share the same initialization.

\vspace{-5pt}
\section{Background}

\paragraph{Reinforcement Learning (RL)} Given a pre-trained LLM, RL trains the model to maximize a reward function while remaining close to the original model. Concretely, let $\pi_\theta$ denote a policy (language model) parameterized by $\theta \in \R^d$, and let $R(x, y)$ be a reward function scoring response $y$ to prompt $x$. The standard objective is to maximize:
\begin{align}
\label{eq:rlhf_objective}
\E_{x} \{\E_{\pi_\theta(y|x)}\bigl[R(x, y)\bigr] - \beta \KL\bigl(\pi_\theta (\cdot \mid x) \,\|\, \pi_{\text{ref}} (\cdot \mid x\bigr)\}
\end{align}
where $x\sim\mathcal{D}$ is a sampled prompt, $\pi_{\text{ref}}$ is the reference policy (also called the \textbf{anchor} policy), $\beta$ controls how much $\pi_{\theta}$ is allowed to deviate from the anchor. The term $\KL(\pi_\theta \| \pi_{\text{ref}}) = \E_{x}\E_{y \sim \pi_{\theta}(\cdot | x)}[\log (\pi_\theta(y|x) / \pi_{\text{ref}}(y|x))]$ is the reverse KL divergence for the policy $\pi_{\theta}$.

\vspace{-5pt}
\paragraph{Exponential Moving Average (EMA)} EMA is a technique for maintaining a smoothed version of an iteratively updated quantity.
Given a sequence $\{\theta_t\}_{t=0}^{\infty}$ and coefficient $\eta \in (0, 1)$, the EMA sequence $\{\theta_{\text{ema, }t}\}$ is:
\begin{align}
\theta_{\text{ema, } 0} = \theta_{0} \qquad \theta_{\text{ema, }{t+1}} &= \eta\,\theta_{\text{ema, } t} + (1-\eta)\,\theta_{t}
\end{align}
which is widely used in deep Q-learning as \textit{target network}.

\vspace{-5pt}
\paragraph{Local Quadratic Approximation of KL}
The Fisher information matrix of a policy $\pi_\theta$ is defined as:
\begin{equation*}
F_\theta \triangleq \E_{x \sim \mathcal{D}} \E_{y \sim \pi_\theta(\cdot | x)}\bigl[(\nabla_\theta \log \pi_\theta(y|x))(\nabla_\theta \log \pi_\theta(y|x))^\top\bigr]
\end{equation*}
which captures the local sensitivity of a policy’s output distribution to parameter changes. The natural gradient $F_\theta^{-1}g$ is the steepest-ascent direction under a local KL trust-region constraint. For $\theta'=\theta+\delta$ with $\|\delta\|\to 0$,
\begin{equation}
\label{eq:kl_quadratic}
\KL(\pi_\theta \| \pi_{\theta+\delta})
= \delta^\top F_{\theta}\delta / 2 + O(\|\delta\|^3),
\end{equation}
and the gradient w.r.t $\theta$ is:
$
\nabla_{\theta}\KL(\pi_\theta \| \pi_{\theta'}) \approx F_{\theta}(\theta-\theta')
$.

\begin{figure*}[t]
  \centering
  \includegraphics[width=0.92\textwidth]{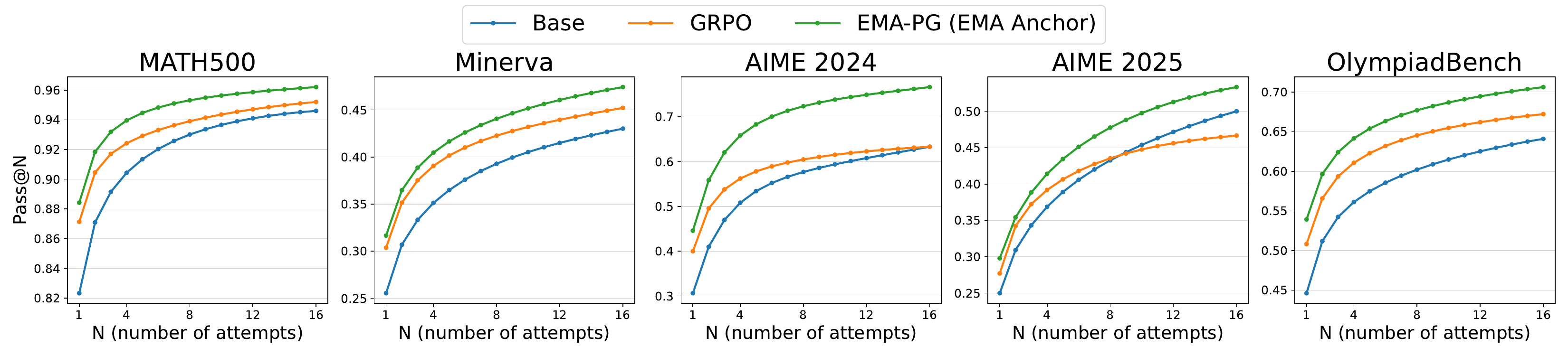}
  \caption{\textbf{On math reasoning datasets, using EMA Anchor improves not only Pass@1 but Pass@N as well compared to GRPO}. Results obtained on a 1.5B model {DeepSeek-R1-Distill-Qwen-1.5B} \cite{deepseekr1}.}
  \label{fig:math-results}
  \vspace{-13pt}
\end{figure*}

\vspace{-5pt}
\section{Training Dynamics of the EMA Anchor}
\label{sec:dynamics-ema-pg}

Consider the policy gradient $g_{t}= (R\,\nabla_{\theta} \log \pi_{\theta})|_{\theta=\theta_{t}}$ at $t$:
\begin{align*}
\theta_{t+1} &= \theta_t + \alpha [g_{t} - \beta\, \nabla_{\theta} \KL(\pi_{\theta} || \pi_{\theta_{\text{ema, } t}})|_{\theta=\theta_{t}}] \\
\theta_{\text{ema, }t+1} &= \eta \,\theta_{\text{ema, }t} + (1-\eta) \, \theta_{t}
\end{align*}
We define the \textit{lag} between current and EMA weights as:
\begin{align}
\delta_t &\coloneqq \theta_t - \theta_{\text{ema, } t}\label{eq:def_delta}
\end{align}
Using the local approximation~\eqref{eq:kl_quadratic}, with $F_{t}=F_{\theta_t}$
\begin{align}
\label{eq:policy_update}
\theta_{t+1} &= \theta_t + \alpha g_{t} - \alpha\beta F_{t}\delta_{t}
\end{align}
The joint dynamics of $\theta_{t}$ and $\delta_{t}$ becomes:
\begin{equation}
\label{eq:tilde_matrix_update_singleline}
\begin{bmatrix}
\theta_{t+1}\\
\delta_{t+1}
\end{bmatrix}
=
\begin{bmatrix}
I & -\alpha\beta F_t \\
0 & \eta I - \alpha\beta F_t \\
\end{bmatrix}
\begin{bmatrix}
\theta_t\\
\delta_t
\end{bmatrix} + \begin{bmatrix}
    \alpha I \\
    \alpha I
\end{bmatrix} g_{t}
\end{equation}

\begin{lemma}{\underline{Quasi-steady state}:} If we assume that the gradient and the Fisher matrix change extremely slowly over $k$ steps, meaning that for $\Delta =0 \cdots k-1$:
\begin{align}
    g_{t+\Delta} \approx g, \qquad F_{t+\Delta} \approx F 
    \label{eq:ema-pg-assumption}
\end{align}
then, under the following definitions of matrices $D$, $S_k$, $M_{k}$:
\begin{align*}
    D &= \eta I-\alpha\beta F \quad S_{k} = {\sum}_{\Delta=0}^{k-1} D^{\Delta} \quad M_{k} = {\sum}_{\Delta=0}^{k-1}S_{\Delta}
\end{align*}
we have: firstly, $I-D$ is invertible and $S_{k} = (I-D^{k})(I-D)^{-1}$, $M_{k} = (kI - S_{k})(I-D)^{-1}$; secondly, we have a closed-form dynamics of EMA Policy Gradient:
\begin{equation*}
\begin{bmatrix}
\theta_{t+k}\\
\delta_{t+k}\\
g
\end{bmatrix}
=
\begin{bmatrix}
I & -\alpha\beta F S_{k} & \alpha kI - \alpha^{2}\beta F M_{k}\\
0 & D^{k} & \alpha S_{k}\\
0 & 0 & I
\end{bmatrix}
\begin{bmatrix}
\theta_t\\
\delta_t\\
g
\end{bmatrix}
\end{equation*}
\end{lemma}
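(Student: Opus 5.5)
The plan is to unroll the linear recursion \eqref{eq:tilde_matrix_update_singleline} under the frozen-coefficient assumption \eqref{eq:ema-pg-assumption}, treating $g$ as a constant state component. Augmenting the state with $g$ (which satisfies $g_{t+1}=g$) turns the recursion into a homogeneous linear system with a fixed block-upper-triangular transition matrix
\begin{equation*}
T=\begin{bmatrix} I & -\alpha\beta F & \alpha I\\ 0 & D & \alpha I\\ 0 & 0 & I\end{bmatrix},
\end{equation*}
so the claim reduces to computing $T^{k}$ and checking that it equals the stated matrix.

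\textbf{Invertibility and the geometric sums.} First I will establish that $I-D=(1-\eta)I+\alpha\beta F$ is invertible. The Fisher matrix $F$ is symmetric positive semidefinite, so every eigenvalue of $I-D$ is at least $1-\eta>0$, and $I-D$ is symmetric positive definite. Since all matrices involved are polynomials in $F$, they commute with one another and with $(I-D)^{-1}$. The telescoping identity $(I-D)S_k=I-D^{k}$ then gives $S_k=(I-D^{k})(I-D)^{-1}$. For $M_k$, I sum this expression over $\Delta=0,\dots,k-1$:
\begin{equation*}
M_k=\Big(kI-\textstyle\sum_{\Delta=0}^{k-1}D^{\Delta}\Big)(I-D)^{-1}=(kI-S_k)(I-D)^{-1}.
\end{equation*}
This uses the convention $S_0=0$, consistent with the empty sum.

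\textbf{Closed-form dynamics.} I will prove the formula by induction on $k$. The base case $k=1$ holds because $S_1=I$ and $M_1=S_0=0$, so the claimed matrix reduces to $T$. For the inductive step I multiply $T$ by the claimed $T^{k}$ (in either order) and verify each block separately:
\begin{itemize}
\item The $(2,2)$ block is $D\cdot D^{k}=D^{k+1}$.
\item The $(2,3)$ block is $\alpha D S_k+\alpha I=\alpha S_{k+1}$, using the recurrence $S_{k+1}=I+DS_k$.
\item The $(1,2)$ block is $-\alpha\beta F\,(I+DS_k)=-\alpha\beta F S_{k+1}$, if the new factor $T$ is placed on the left.
\item The $(1,3)$ block is $\alpha kI-\alpha^{2}\beta FM_k+\alpha I-\alpha^{2}\beta F S_k$ when $T$ multiplies on the right. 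This equals $\alpha(k+1)I-\alpha^{2}\beta F M_{k+1}$ by the recurrence $M_{k+1}=M_k+S_k$.
\end{itemize}
Because $T^{k+1}=T\,T^{k}=T^{k}\,T$, I may use whichever multiplication order is convenient for each block. An equivalent route is to solve the $\delta$-recursion directly, $\delta_{t+k}=D^{k}\delta_t+\alpha S_k g$, and then sum the $\theta$ increments $\alpha g-\alpha\beta F\delta_{t+\Delta}$ over $\Delta=0,\dots,k-1$. This reproduces $-\alpha\beta F S_k\delta_t$ and $\alpha k g-\alpha^{2}\beta F M_k g$ directly.

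\textbf{Main obstacle.} There is no real technical difficulty. The main care point is bookkeeping: keeping the indices of $S_\Delta$ and $M_k$ aligned, so that $\sum_{\Delta<k}S_\Delta$ rather than $\sum_{\Delta\le k}S_\Delta$ appears in the $(1,3)$ block. The only conceptual input is positive semidefiniteness of $F$, which guarantees invertibility of $I-D$ without any step-size condition.
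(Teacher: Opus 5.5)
Your proof is correct. The invertibility of $I-D$ (every eigenvalue of $(1-\eta)I+\alpha\beta F$ is at least $1-\eta>0$) and the closed forms for $S_k$ and $M_k$, including the $S_0=0$ convention, are exactly what the paper does.

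For the dynamics, the paper uses the route you list as your alternative. It unrolls the lag recursion to $\delta_{t+k}=D^k\delta_t+\alpha S_k g$, then sums the $\theta$ increments using $\sum_{j=0}^{k-1}\delta_{t+j}=S_k\delta_t+\alpha M_k g$; this is where $M_k$ naturally appears. Your primary argument instead augments the state with $g$ and proves the formula for $T^k$ by induction, using $S_{k+1}=I+DS_k=S_k+D^k$ and $M_{k+1}=M_k+S_k$. That requires knowing the answer in advance. In exchange it makes explicit that the $k$-step map is the $k$-th power of a fixed block-triangular matrix, which is the form the lemma states. The paper's unrolling, by contrast, is constructive and shows where each block comes from.

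One harmless slip: your expression $-\alpha\beta F(I+DS_k)$ for the $(1,2)$ block arises from $T^kT$, i.e.\ with the new factor on the right. The product $TT^k$ gives $-\alpha\beta F(S_k+D^k)$ instead. Both equal $-\alpha\beta F S_{k+1}$, so nothing breaks.
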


\begin{lemma}{\underline{Stability Condition}:} Let $\{v_{i}\}_{i=0}^{d-1}$ be the eigenbasis of the Fisher matrix $F$, and $\{\lambda_{i}\}_{i=0}^{d-1}$ be its corresponding eigenvalues (with $\lambda_{\max}$ being the largest eigenvalue). The dynamics of the lag $\delta_{t}$ in the eigenbasis of the Fisher, where $\delta_{t}^{(i)}\coloneqq {\delta_{t}}^{\top} {v_{i}}$ and $g^{(i)}\coloneqq {g}^{\top} {v_{i}}$, can be written as:
\begin{align}
    \delta_{t+k}^{(i)} = \underbrace{(\chi_{i})^{k} \delta_{t}^{(i)}}_{\text{transient}} + \underbrace{\alpha {\textstyle\frac{ 1- (\chi_{i})^{k} }{1 - \chi_{i}}}g^{(i)}}_{\text{particular solution}} \quad \chi_i = \eta - \alpha\beta \lambda_i
\end{align}
Thus, the stability condition of $\delta$ is:
\begin{align}
    \boxed{\alpha\beta \lambda_{\max} < 1 + \eta} \label{eq:ema-pg-stability-condition}
\end{align}
\end{lemma}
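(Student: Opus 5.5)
The proof projects the quasi-steady-state recursion for the lag onto the eigenbasis of $F$, where it decouples into independent scalar linear recursions. The Quasi-steady state lemma gives $\delta_{t+k} = D^{k}\delta_t + \alpha S_k g$ with $D = \eta I - \alpha\beta F$ and $S_k = \sum_{\Delta=0}^{k-1} D^{\Delta}$. Alternatively, it can be read off directly from the second block row of \eqref{eq:tilde_matrix_update_singleline} under assumption \eqref{eq:ema-pg-assumption}, which is the one-step recursion $\delta_{t+1} = D\delta_t + \alpha g$ iterated $k$ times.

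\textbf{Step 1 (diagonalize).} Since $F$ is a Fisher matrix, it is symmetric positive semidefinite. It therefore admits an orthonormal eigenbasis $\{v_i\}$ with real eigenvalues $\lambda_i \ge 0$. Because $D$ is a polynomial in $F$, it shares these eigenvectors, with $D v_i = \chi_i v_i$ where $\chi_i = \eta - \alpha\beta\lambda_i$. The same holds for $D^{k}$ and $S_k$, which act on $v_i$ as multiplication by $\chi_i^{k}$ and by $\sum_{\Delta=0}^{k-1}\chi_i^{\Delta}$, respectively.

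\textbf{Step 2 (project).} Taking the inner product of the recursion with $v_i$ and using the symmetry of $D$ and $S_k$ gives
\[
\delta^{(i)}_{t+k} = \chi_i^{k}\,\delta^{(i)}_t + \alpha\Bigl(\textstyle\sum_{\Delta=0}^{k-1}\chi_i^{\Delta}\Bigr) g^{(i)}.
\]
Summing the geometric series yields $\frac{1-\chi_i^{k}}{1-\chi_i}$, which is the claimed form. This requires $\chi_i \neq 1$. That holds because $\chi_i \le \eta < 1$, since $\lambda_i \ge 0$ and $\alpha,\beta > 0$; the same fact is what makes $I-D$ invertible in the previous lemma.

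\textbf{Step 3 (stability).} The lag is stable, meaning that the transient decays and the particular solution converges to $\alpha g^{(i)}/(1-\chi_i)$ as $k\to\infty$, if and only if $|\chi_i| < 1$ for every $i$. The upper side, $\chi_i < 1$, is automatic by Step 2. The lower side, $\chi_i > -1$, reads $\eta - \alpha\beta\lambda_i > -1$, that is, $\alpha\beta\lambda_i < 1+\eta$. Since $\chi_i$ is decreasing in $\lambda_i$, the binding constraint comes from $\lambda_{\max}$, which gives \eqref{eq:ema-pg-stability-condition}. For necessity, I would note that if $\alpha\beta\lambda_{\max} \ge 1+\eta$, then the component along $v_{\max}$ has $|\chi| \ge 1$ and fails to decay for generic $\delta_t^{(i)}$.

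The main subtlety is not computational. It lies in justifying that $F$ is symmetric PSD, so that the spectrum is real and nonnegative and the eigenbasis is orthonormal, and in checking that only the lower bound $\chi_i > -1$ is active. Both follow from the definition of the Fisher matrix as an expected outer product.
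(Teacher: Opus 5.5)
Your proposal is correct and matches the paper's proof essentially step for step. The paper likewise projects $\delta_{t+k}=D^{k}\delta_t+\alpha S_k g$ onto the orthonormal Fisher eigenvectors using $Dv_i=\chi_i v_i$, sums the geometric series (justified by $\chi_i\le\eta<1$), and reduces stability, i.e.\ $\rho(D)<1$, to the single active constraint $\eta-\alpha\beta\lambda_{\max}>-1$.
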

\textbf{Summary of Stability / Oscillation Conditions:}
\begin{enumerate}
    \item $0\leq \alpha\beta \lambda_{\max} \leq \eta$: stable, non-oscillatory.
    \item $\eta < \alpha\beta \lambda_{\max} < 1+\eta$: stable, oscillatory.
    \item $\alpha\beta\lambda_{\max} \geq 1+\eta$: unstable.
\end{enumerate}
Which characterizes the exact conditions for stability in EMA-PG's training dynamics under mild assumptions \eqref{eq:ema-pg-assumption}.

\begin{figure*}[t]
\vspace{-4pt}
  \centering
  \includegraphics[width=0.95\textwidth]{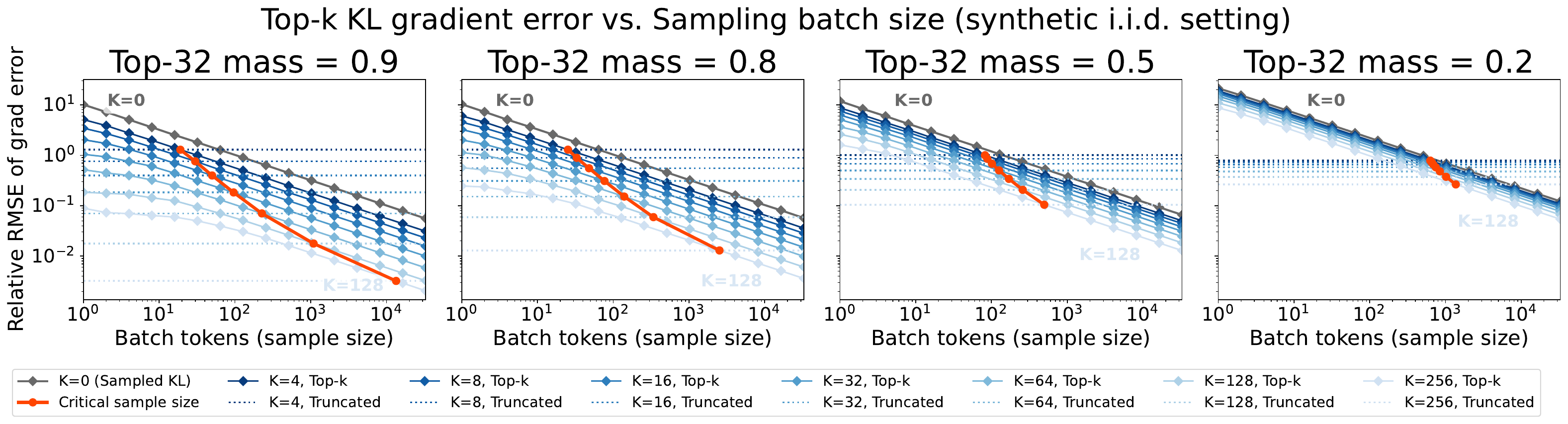}
  \vspace{-4pt}
  \caption{\textbf{Bias-variance tradeoff in Top-k KL estimator:} Top-$k$ KL (unbiased) has lower gradient error than sampled KL (unbiased) in all regimes, but only outperforms Truncated KL (biased) beyond a certain {\textit{critical sample size}}; we illustrate this bias-variance tradeoff in a synthetic setting (\S\ref{app:topk_kl_sim_exp}). {Rule of thumb: when using a small $k$, apply the tail correction \eqref{eq:tail-correction} (default); for a large $k$, consider Truncated KL.}}
  \label{fig:topk_kl_sim}
  \vspace{-12pt}
\end{figure*}

\section{KL Estimators: Which and Why}

\subsection{Sequence-level KL or Token-level KL?}
\label{sec:seq-kl-vs-token-kl}

Because LLMs have an autoregressive action space, it is worth discussing whether we should apply sequence-level KL or token-level KL. For a language model, the KL component in \eqref{eq:rlhf_objective} can be rewritten as the loss:
\begin{align}
    \widehat{\mathrm{KL}}_{\text{seq}} &\triangleq - \log (\pi_{\theta}(y|x)/\pi_{\text{ref}}(y|x)) \\
    \nabla_{\theta}L_{\mathrm{KL}}^{\text{seq}} &\triangleq \E_{\pi_{\theta}(y|x)} [- \mathrm{sg}( \widehat{\mathrm{KL}}_{\text{seq}} ) \nabla_{\theta}\log \pi_{\theta}(y|x)]
\end{align}
where $\mathrm{sg}(\cdot)$ denotes the stop gradient operator. Note that this is an unbiased \textbf{sequence-level KL} estimator:
\begin{align}
    \nabla_{\theta} \mathrm{KL}(\pi_{\theta}(\cdot\mid x) || \pi_{\text{ref}}(\cdot\mid x)) = \nabla_{\theta}L_{\mathrm{KL}}^{\text{seq}}(\theta)
    \label{eq:unbiased-seq-level-kl}
\end{align}
This KL regularizer was used in both for InstructGPT \citep{instructgpt} and WARP \citep{warp}. It is simple to implement: just adding a scalar sequence logprob difference to the scalar sequence-level reward would suffice. DPO \cite{dpo} also uses sequence-level KL.

In contrast, \textbf{token-level KL} regularization, as popularized by GRPO \citep{grpo, deepseekr1, deepseek-v3}, implements the following:
\begin{align}
    &\widehat{\mathrm{KL}}_{n} \triangleq {\mathrm{KL}}( \pi_{\theta}(\cdot \mid x, y_{0:n-1}) \| \pi_{\text{ref}}(\cdot \mid x, y_{0:n-1}) ) \\
    &\nabla_{\theta}L_{\mathrm{KL}}^{\text{token}} \triangleq \E_{\pi_{\theta}(y|x)} \Big[ {\sum}_{n=0}^{L-1} \nabla_{\theta} \widehat{\mathrm{KL}}_{n} \Big]
\end{align}
where the KL estimator is a design choice (See Tab \ref{tab:kl-estimators}). We assume the use of exact KL rather than sampled KL in our analysis of token-level KL. Token-level KL was traditionally seen as a surrogate objective for sequence-level KL;  here we analyze their differences. Rewrite sequence-level KL as:
\begin{align*}
    \rho_{n} &\triangleq \log (\pi_{\theta}(y_{n}|x, y_{0:n-1})/ \pi_{\text{ref}}(y_{n}|x, y_{0:n-1})) \\
    \omega_{n} &\triangleq \nabla_{\theta} \log \pi_{\theta}(y_{n}|x, y_{0:n-1})
\end{align*}
It can be shown that (\S\ref{app:seq-vs-token-kl}):
\begin{align}
    \nabla_{\theta} L_{\mathrm{KL}}^{\text{seq}} &= \E_{\pi_{\theta}(y|x)} \Big[\sum_{n}\Big( \sum_{{\color{red}m\geq n}} {\color{red}\rho_{m}} \Big)\, \omega_{n} \Big] \\
    \nabla_{\theta} L_{\mathrm{KL}}^{\text{token}} &= \E_{\pi_{\theta}(y|x)} \Big[\sum_{n}\big( {\color{red}\rho_{n}} \big) \, \omega_{n} \Big]
\end{align}
The difference is clear: {\textbf{sequence-level KL punishes each current token for the divergence of all its future tokens until the end of sequence}}, while token-level KL does not.

We find that in reasoning and agentic settings, since the sequence lengths of thinking traces are long and the rewards are reliable, {sequence-level KL does not help; \textbf{token-level KL should be used instead}}. In other words, during RL training for reasoning, KL should be token-wise in the loss, not added to the outcome reward, even though sequence-level KL may seem more “correct” \eqref{eq:unbiased-seq-level-kl} for objective \eqref{eq:rlhf_objective}.

\vspace{-5pt}
\subsection{Exact KL or Sampled KL?}
\label{sec:sampled-kl}

For token-level KL estimation, one can technically implement \textbf{exact KL}, which is unbiased and low-variance, but at the cost of memory. We use following notations:
\begin{align}
    h_n \triangleq (x, y_{0:n-1}),\quad
    \E_{n} [\cdot] \triangleq \mathbb{E}_{y_n \sim \pi_{\theta}(\cdot \mid h_n)}[\cdot]
    \label{eq:token-level-definition}
\end{align}
In the case of reverse KL, for each token $y_n \sim \pi_{\theta}(\cdot \mid h_n)$:
\begin{align}
    \mathrm{KL}_{n} \triangleq {\sum}_{j\in \mathcal{V}} \pi_{\theta}(j \mid h_{n}) \log \dfrac{\pi_{\theta}(j \mid h_{n})}{\pi_{\text{ref}}(j \mid h_{n})} 
\end{align}
Since this is the ground-truth definition of reverse KL, it naturally yields exact values and exact gradients. The problem with exact KL is that it requires storing full logits for every position in each sequence both for current policy $\pi_{\theta}$ and reference policy $\pi_{\text{ref}}$, costing a memory of $O(|\mathcal{V}|)$ per token. The alternative, which has negligible memory footprint $O(1)$, is \textbf{sampled KL}. For the $n$-th position in $y$, let
\begin{align}
    y_{n} \sim \pi_{\theta}(\cdot \mid h_{n}) \quad 
    w_{n} \triangleq \pi_{\text{ref}}(y_n \mid h_{n}) / \pi_{\theta}(y_n \mid h_{n}) 
    \label{eq:sampled-kl-sampling-def}
\end{align}
The K1, K2, K3 estimators \cite{approximating-kl} are given by:
\begin{align*}
    {(\text{K1})_{n}} &\triangleq -\log w_{n} \quad 
    {(\text{K2})_{n}} \triangleq (1/2)(\log w_{n})^{2} \\
    {(\text{K3})_{n}} &\triangleq -\log w_{n} + w_{n} -1
\end{align*}
{\textbf{None of the existing sampled KL estimators simultaneously gives \underline{both} unbiased value and unbiased gradient}} on the token level. Ideally, we would like an KL estimator $\widehat{\mathrm{KL}}_{n}$ to suffice both conditions (with notations from \eqref{eq:token-level-definition}):
\begin{equation}
\begin{aligned}
    \E_{n}[\widehat{\mathrm{KL}}_{n}] = \mathrm{KL}_{n} \quad
    \E_{n}[\nabla_{\theta} \widehat{\mathrm{KL}}_{n}] = \nabla_{\theta} \mathrm{KL}_{n} 
\end{aligned}
\label{eq:kl-estimator-two-conditions}
\end{equation}
K1 and K3 give the correct values but wrong gradients; K2 give the correct gradients but wrong values (see Tab \ref{tab:kl-estimators}). We show that K1 and K3 can be fixed with a simple trick. Let
\begin{align}
    r_{n} \triangleq \pi_{\theta}(y_n \mid h_{n}) / \mathrm{sg}( \pi_{\theta}(y_n \mid h_{n}) )
    \label{eq:r-trick}
\end{align}
where $\mathrm{sg}$ means stop gradient. We construct the following unbiased sampled KL estimators to satisfy \eqref{eq:kl-estimator-two-conditions}:
\begin{align}
    &{\boxed{({\color{RoyalBlue}\textbf{K3}^{\boldsymbol{++}}})_{n} \triangleq r_{n} (-\log w_{n} + w_{n} -1)}} \label{eq:k3++} \\
    &{\boxed{({\color{RoyalBlue}\textbf{K4}})_{n} \triangleq r_{n} \cdot \mathrm{sg}(-\log w_{n})}}
\end{align}
Eq \eqref{eq:k3++} is named ${\color{RoyalBlue}\textbf{K3}^{\boldsymbol{++}}}$ because of its resemblance to the modified K3 in \cite{deepseek-v3}, but here we disentangle generic off-policy correction (which can be applied to any sampled KL estimator, not just K3; moreover, importance weights should be applied with $\mathrm{sg}(\cdot)$ on themselves) from an actual change in KL direction:
\begin{itemize}
    \item ${\color{RoyalBlue}\textbf{K3}^{\boldsymbol{++}}}$ is obtained by multiplying existing K3 by $r_{n}$ in \eqref{eq:r-trick}. \textbf{This simple change alone converts the gradient from forward KL to reverse KL.}
    \item ${\color{RoyalBlue}\textbf{K4}}$ is obtained by first  doing stop gradient on K1 and then multiplying it by $r_{n}$ in \eqref{eq:r-trick}. It is simpler, and it is unbiased as well.
\end{itemize}
${\color{RoyalBlue}\textbf{K3}^{\boldsymbol{++}}}$ and ${\color{RoyalBlue}\textbf{K4}}$ satisfy the conditions in \eqref{eq:kl-estimator-two-conditions}: they yield \underline{both} unbiased value and unbiased gradient for reverse KL.

\begin{figure*}[t]
\vspace{-5pt}
  \centering
  \includegraphics[width=\textwidth]{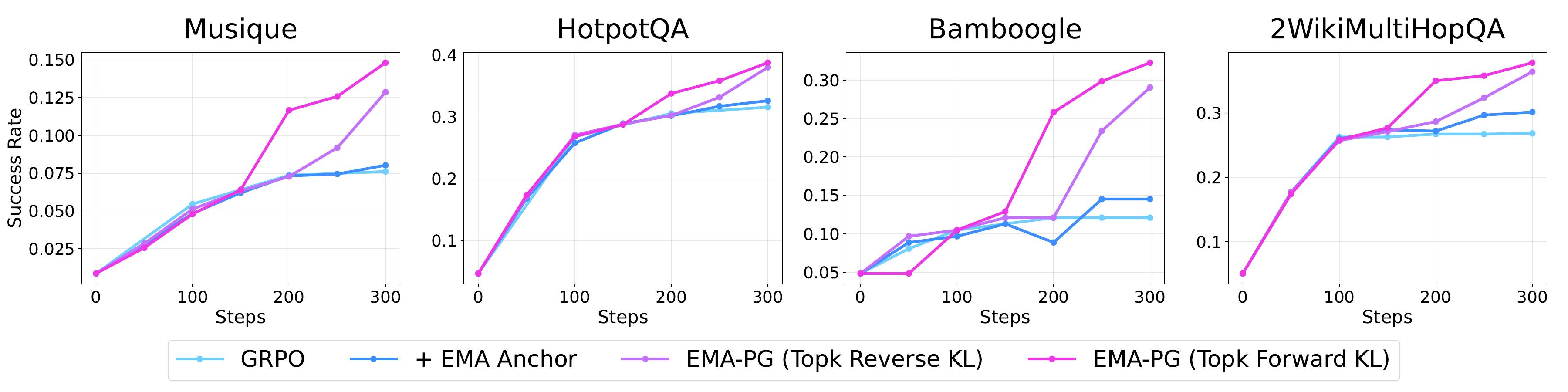}
  \vspace{-18pt}
  \caption{\textbf{EMA-PG not only learns faster and but reaches higher asymptotic performance} on agentic tasks of Q\&A with search engine. Using Topk KL Estimator, forward KL learns slightly faster than reverse KL. Both forward and reverse Top-k KL significantly outperforms GRPO (and GRPO with EMA Anchor). This shows that the combination of EMA Anchor and Top-k KL is highly effective.}
  \label{fig:ema-pg-search-r1-curves}
  \vspace{-10pt}
\end{figure*}

In addition, the success of K3's forward KL gradients raises the interesting question of whether forward KL is just as effective as reverse KL in regularizing RL training dynamics. 
We propose a \textbf{forward KL estimator} ${\color{RoyalBlue}\textbf{K5}}$, which produces unbiased forward KL values and unbiased forward KL gradients under $\E_{n} [\cdot]$ \eqref{eq:token-level-definition} when sampling from \eqref{eq:sampled-kl-sampling-def}:
\begin{align}
    {\boxed{({\color{RoyalBlue}\textbf{K5}})_{n} \triangleq \mathrm{sg}(w_{n})\log w_{n} + \log r_{n}}}
\end{align}
When sampling is off-policy ($\pi_{\text{sampling}}\neq \pi_{\theta}$), any sampled KL estimator $\widehat{\mathrm{KL}}_{n}$ above simply needs to multiply the (stop-gradient) importance weight $\mathrm{sg}({\textstyle\frac{\pi_{\theta}}{\pi_{\text{sampling}}}})$ to remain unbiased. Concretely, when $y_{n}\sim \pi_{\text{sampling}}$, any previously unbiased $\widehat{\mathrm{KL}}_{n}$ under $y_n\sim \pi_{\theta}$ will require an off-policy correction $\mathrm{KL}_{n}=\E_{y_n}[s_{n} \widehat{\mathrm{KL}}_{n}]$ with importance weight (IW) $s_n$:
\begin{align*}
    y_{n} \sim \pi_{\text{sampling}}(\cdot | h_{n}) \quad s_{n} \triangleq \mathrm{sg}( \pi_{\theta}(y_{n} | h_{n}) / \pi_{\text{sampling}}(y_{n}|h_{n}) )
\end{align*}
On the other hand, exact KL is always on-policy (but memory-intensive, with $O(|\mathcal{V}|)$ rather than $O(1)$ memory footprint), and thus does not require off-policy correction.

\vspace{-5pt}
\subsection{Top-k KL Estimator}
\label{sec:top-k-kl-estimator}

We now aim to combine the best of both worlds of exact KL and sampled KL. We start from the observation that for modern LLMs, the probability mass tends to concentrate on a small number of top tokens at each generation step; it was for this reason that top-$k$ sampling \citep{top-k-sampling} and nucleus sampling \citep{top-p-sampling} were effective. The vocabulary size for modern LMs is very large (typically 100k-200k), so most of the logit indices in $\mathcal{V}$ contribute very minimally to the KL value or KL gradients.

Our key insight is to \textbf{only partially compute exact KL on the top-k logit indices} of the policy under which the expectation is taken (which is current policy for reverse KL, reference policy for forward KL), and then \textbf{use a partially masked sampled KL to correct for the tail of the distribution}, so that we arrive at an unbiased estimator for KL values and gradients, with lower variance than sampled KL.

First, we consider Reverse KL. Let $q_n$ be the set of top-$k$ logit indices from $\pi_{\theta}(\cdot \mid h_{n})$ at sequence position $n$,
\begin{align*}
    q_n &\leftarrow \mathrm{Topk}(\pi_{\theta}(\cdot \mid h_{n})) \qquad
    q_n \subseteq \mathcal{V} \qquad |q_n|=k
\end{align*}
we can rewrite reverse KL by partitioning the vocabulary:
\begin{align*}
    \underbrace{\sum_{ \color{red}{j\in q_n } } {\pi_{\theta}(j | h_{n} ) \log \dfrac{ \pi_{\theta}(j | h_{n}) }{ \pi_{\text{ref}}(j | h_{n}) } }}_{ \text{`Truncated' KL: } (\widehat{\mathrm{KL}}_{\text{trun}})_{n} } + \underbrace{\sum_{ \color{red}{j\not\in q_n } } {\pi_{\theta}(j | h_{n} ) \log \dfrac{ \pi_{\theta}(j | h_{n}) }{ \pi_{\text{ref}}(j | h_{n}) } }}_{ \text{`Tail' correction: } (\widehat{\mathrm{KL}}_{\text{tail}})_{n} }
\end{align*}
The second term can be estimated using sampled KL:
\begin{align}
    (\widehat{\mathrm{KL}}_{\text{tail}})_{n} = \begin{cases}
    0 & \text{if } y_n \in q_n \\
    ({\color{RoyalBlue}\textbf{K4}})_{n} & \text{if } y_n \notin q_n
\end{cases} \quad y_n \sim \pi_{\theta}(\cdot|h_{n})
\label{eq:tail-correction}
\end{align}
The \textbf{Top-k reverse KL} estimator is an addition of the two:
\begin{align}
\boxed{({\color{RoyalBlue}\mathrm{Topk}\textbf{Reverse}\mathrm{KL}})_{n} = (\widehat{\mathrm{KL}}_{\text{trun}})_{n} + (\widehat{\mathrm{KL}}_{\text{tail}})_{n}}
\label{eq:topk-reverse-kl}
\end{align}
Similarly, we can estimate forward KL by splitting it into a truncated exact KL and a sampled tail term:
\begin{align*}
    q_{n} &\leftarrow \mathrm{Topk}(\pi_{\text{ref}}(\cdot \mid h_{n})) \\
    (\widehat{\mathrm{FKL}}_{\text{trun}})_{n} &= {\sum}_{j\in q_{n} } {\pi_{\text{ref}}(j \mid h_{n} ) \log \dfrac{ \pi_{\text{ref}}(j \mid h_{n}) }{ \pi_{\theta}(j \mid h_{n}) } } \\
    (\widehat{\mathrm{FKL}}_{\text{tail}})_{n} &= \begin{cases}
    0 & \text{if } y_n \in q_{n} \\
    ({\color{RoyalBlue}\textbf{K5}})_{n} & \text{if } y_n \notin q_{n}
    \end{cases} \quad y_n \sim \pi_{\theta}(\cdot \mid h_{n})
\end{align*}
The \textbf{Top-k forward KL} estimator is therefore:
\begin{align}
    \boxed{({\color{RoyalBlue}\mathrm{Topk}\textbf{Forward}\mathrm{KL}})_{n} = (\widehat{\mathrm{FKL}}_{\text{trun}})_{n} + (\widehat{\mathrm{FKL}}_{\text{tail}})_{n}}
\end{align}
Top-k KL estimators are unbiased regardless of whether the chosen logit indices are top-$k$ or not. 
Moreover, Top-k KL renders both exact KL and sampled KL as special cases. When $k=|\mathcal{V}|$, it is exact KL; when $k=0$, it is sampled KL. Top-$k$ KL allows for flexible interpolation between the two, remains unbiased at any $k$, and only requires $O(k)$ memory. {We generalize our estimator to any ${f}$-divergence} in \S\ref{app:topk-fdiv}.

\begin{table*}[t]
\centering
\small
\setlength{\tabcolsep}{4.5pt}
\renewcommand{\arraystretch}{1.15}
\begin{tabular}{lcccccccc}
\toprule
\multirow{2}{*}{\textbf{Method}} &
\multicolumn{3}{c}{\textbf{General QA}} &
\multicolumn{5}{c}{\textbf{Multi-Hop QA}} \\
\cmidrule(lr){2-4}\cmidrule(lr){5-9}
& NQ & TriviaQA & PopQA & HotpotQA & 2wiki & Musique & Bamboogle & Avg. \\
\midrule

\multicolumn{9}{l}{Qwen2.5-\textbf{3B-Instruct} (Prompting)} \\
CoT (no search) \cite{cot}    & 0.106 & 0.288 & 0.108 & 0.149 & 0.244 & 0.020 & 0.024 & 0.134 \\
Search-o1 \cite{search-o1}           & 0.238 & 0.472 & 0.262 & 0.221 & 0.218 & 0.054 & 0.320 & 0.255 \\
RAG \cite{rag}           & 0.348 & 0.544 & 0.387 & 0.255 & 0.226 & 0.047 & 0.080 & 0.270 \\
\multicolumn{9}{l}{Qwen2.5-\textbf{7B-Instruct} (Prompting)} \\
RAG & 0.349 & 0.585 & 0.392 & 0.299 & 0.235 & 0.058 & 0.208 & 0.304 \\
\midrule
\multicolumn{9}{l}{Qwen2.5-\textbf{3B-Base} (RL)} \\
\textit{before RL} & 0.060  &  0.122  &  0.076  &  0.047  &  0.051  &  0.009  &  0.048  &  0.059 \\
WARP  &  0.114  &  0.197  &  0.116  &  0.103  &  0.140  &  0.014  &  0.056  &  0.106 \\
PPO    & {0.406} & {0.587} & {0.435} & 0.284 & 0.273 & 0.049 & 0.088 & 0.303 \\
GRPO  &  0.421  &  0.583  &  0.413  &  0.297  & 0.274  &  0.066  &  0.128  &  0.312  \\
\hdashline
\textbf{EMA-PG}: \textit{w/} {EMA Anchor}  &  0.468  &  {0.624}  &  0.445  &  0.345  &  0.277  &  0.084  &  0.194  &  0.348  \\
$+$ Topk {Forward} KL  &  {0.469}  &  0.614  &  0.443  &  0.425  &  0.385  &  0.146  &  \textbf{0.363}  &  {0.406} \\
\rowcolor{rowblue}
$+$ Topk {Reverse} KL  &  \textbf{0.479}  &  \textbf{0.631}  &  \textbf{0.467}  &  \textbf{0.441}  &  \textbf{0.401}  &  \textbf{0.163}  &  0.331  &  \textbf{0.416} \\
\midrule
Improvement over GRPO & {\color{Green} 13.8$\%\uparrow$} & {\color{Green} 8.2$\%\uparrow$} & {\color{Green} 13.1$\%\uparrow$}  & {\color{Green} 48.5$\%\uparrow$}  &  {\color{Green} 46.3$\%\uparrow$}  &  {\color{Green} 147.0$\%\uparrow$}  &  {\color{Green} 158.6$\%\uparrow$}  &  {\color{Green} 33.3$\%\uparrow$} \\ 
\bottomrule
\end{tabular}
\vspace{4pt}
\caption{\centering
\textbf{EMA-PG excels at agentic RL tasks that require using a search engine inside CoTs to reason about questions.}}
\vspace{-18pt}
\label{tab:search-r1-table}
\end{table*}

\vspace{-5pt}
\section{Experiments}

We evaluate the efficacy of EMA Policy Gradients in reasoning RL and agentic RL. The base algorithm we use is GRPO \cite{grpo} with clip-higher \cite{dapo}, a popular default for LLMs. To show that the two techniques we have introduced (EMA anchor and Topk KL) can benefit the base RL algorithm, we focus on the following tasks:
\vspace{-5pt}
\paragraph{Math Reasoning.} We follow the DeepScaleR setting \cite{deepscaler} and train DeepSeek-R1-Distill-Qwen-1.5B \cite{deepseekr1} on an RL dataset of 40k question-answer pairs that comprise of: AIME problems from 1984-2023, Omni-MATH dataset \cite{omni-math}, AMC problems prior to 2023, Still datasets \cite{Slow_Thinking_with_LLMs_1, Slow_Thinking_with_LLMs_2}. We then evaluate on the datasets of: MATH500 \cite{math500}, Minerva \cite{minerva}, AIME 2024 / 2025, and OlympiadBench \cite{olympiadbench}.

\vspace{-5pt}
\paragraph{Agentic RL for Search.} We follow the Search-R1 setting \cite{search-r1} and train a Qwen-2.5 3B \cite{qwen2.5} to use the search engine with RL. GRPO being used here applies the RL loss only on CoT and action tokens, not on environment observation tokens that interleave with CoTs. RL uses the combined training sets of Natural Questions (NQ) \cite{nq} and HotpotQA \cite{hotpotqa}. After RL, we run the eval on both General QA and Multi-Hop QA. For General QA, we use the test set of NQ, TriviaQA \cite{triviaqa}, PopQA \cite{popqa}. For Multi-Hop QA, we use the test set of HotpotQA, 2WikiMultiHopQA (2wiki) \cite{2wiki}, Musique \cite{musique}, and Bamboogle \cite{bamboogle}.

\begin{figure}[t]
  \hspace{25pt}
  \includegraphics[width=0.65\linewidth]{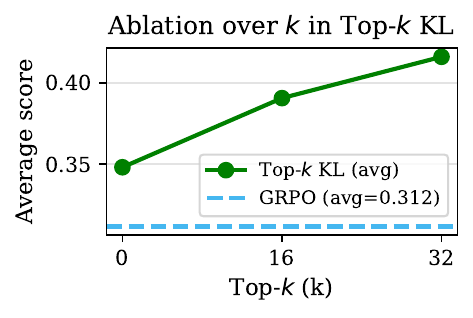}
  \vspace{-10pt}
  \caption{Ablation on $k$ of Top-k (reverse) KL in EMA-PG.}
  \label{fig:top-k-ablation-of-k}
  \vspace{-3pt}
\end{figure}

\vspace{-5pt}
\paragraph{Effect of EMA anchor} We empirically demonstrate the benefits of using an EMA target network instead of a fixed anchor for RL on LLMs. Note that using EMA does not require storing an additional set of weights during training, since the anchor (reference) policy weights already need to be stored for standard KL-regularized policy gradients. 

First of all, we empirically find that keeping the KL with a small $\beta>0$ is necessary for preventing training collapse when doing full-finetuning (no LoRA). This is corroborated by with the common practice of still using KL in large-scale RL training runs \cite{deepseek-v3, kimi-k2}.

Secondly, we find that in math reasoning, using an EMA anchor ($\eta<1$) improves GRPO across all evaluated datasets, as shown in Tab \ref{tab:pass1-math-datasets}. For instance, using the EMA anchor gives us a boost of 50.8\% $\rightarrow$ 53.9\% on OlympiadBench. In addition, using an EMA anchor improves not just Pass@1 but Pass@16 as well, as shown in Fig \ref{fig:math-results}; this suggests that target networks can effectively mitigate Pass@N collapse that is common in reasoning RL \cite{rlvr-pass-at-n-collapse}. 

Using token-level KL instead of sequence-level KL is crucial for good performance. On agentic search, we compare against WARP \cite{warp}, which uses sequence-level KL \eqref{eq:unbiased-seq-level-kl} and EMA anchor; we find that WARP performs a lot worse than vanilla GRPO (10.6\% vs 31.2\%). When using token-level KL + EMA anchor, we see a gain from 31.2\% $\rightarrow$ 34.8\%. This validates our analysis in \S\ref{sec:seq-kl-vs-token-kl}.

\begin{figure*}[t]
\vspace{-5pt}
  \centering
  \includegraphics[width=0.8\textwidth]{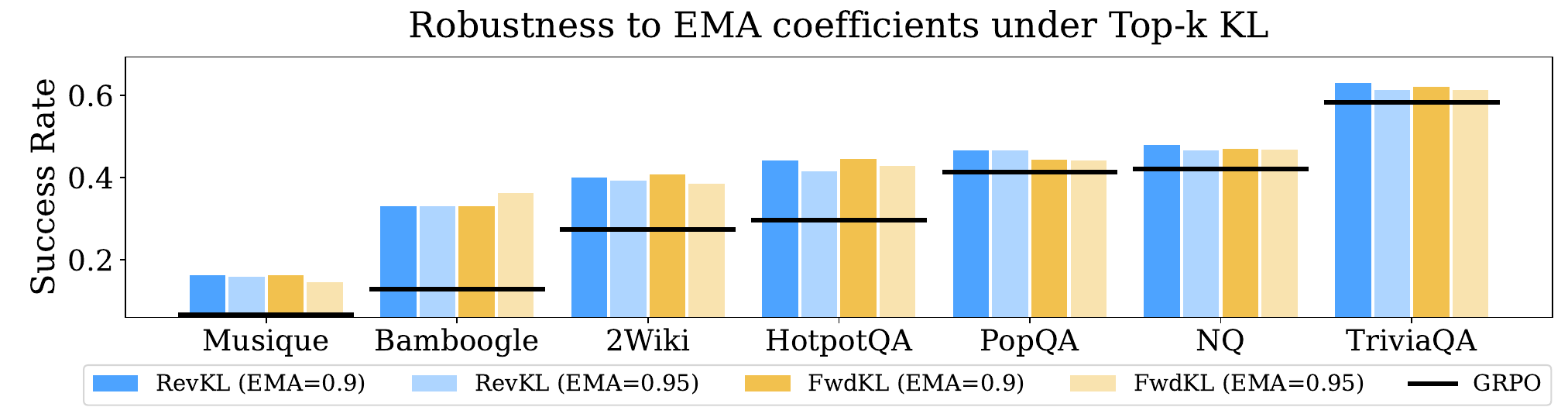}
  \caption{\textbf{When Top-k KL is used, EMA-PG is robust to various choices of EMA coefficient and KL direction}. Here we show that EMA-PG can work well under many choices of KL and EMA $\eta$ (with $k=32$). We recommend setting $\eta$ to be in the range of [0.9, 0.95].}
  \vspace{-8pt}
  \label{fig:four_envs}
\end{figure*}

\begin{figure*}[h]
\vspace{0pt}
  \centering
  \includegraphics[width=0.8\textwidth]{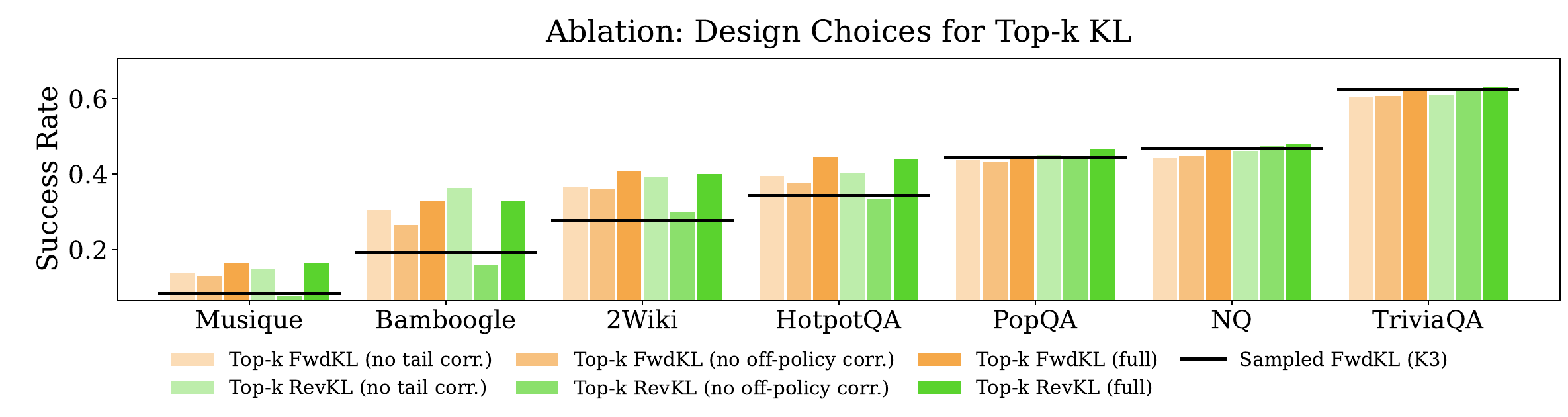}
  \vspace{-5pt}
  \caption{\textbf{Top-k KL performs best when tail correction and off-policy correction are jointly applied}. The truncated KL part is always on-policy, and truncated KL alone already performs well; the tail term can either be dropped, or require off-policy correction (especially for reverse KL). Forward KL is not as sensitive to the absence of off-policy correction as reverse KL.}
  \label{fig:tail-correction-ablation}
  \vspace{-5pt}
\end{figure*}

\vspace{-6pt}
\paragraph{Effect of Top-k KL} We find that the use of Top-k KL can make a substantial difference in RL performance; both Topk Forward KL (Alg \ref{alg:topk-kl-forward}) and Topk Reverse KL (Alg \ref{alg:topk-kl-reverse}) simultaneously improve sample efficiency and asymptotic performance of RL. It is shown in Fig \ref{fig:ema-pg-search-r1-curves} that, compared to sampled KL, Top-k KL enables the learning process to rapidly take off and avoid premature convergence. This is because Top-k KL works like knowledge distillation, which uses logit-level information to provide dense supervision and lower gradient variance. In terms of the final performance (Tab \ref{tab:search-r1-table}), when used with EMA anchor, Top-k forward KL obtains a 34.8\% $\rightarrow$ 40.6\% gain compared to K3; Top-k reverse KL performs even better, achieving 41.6\%.

The benefits of Top-k KL only appear when $k>0$, but to save memory we need to keep $k$ small. In general, the larger the $k$ the better (with the limit being exact KL), but when $k$ is large, the tail correction term ($\widehat{\mathrm{KL}}_{\text{tail}}$ and $\widehat{\mathrm{FKL}}_{\text{tail}}$) does not necessarily help, meaning that truncated KL alone might perform better. We illustrate this \emph{bias-variance tradeoff} in a synthetic setting in Fig \ref{fig:topk_kl_sim}. For RL at scale, we empirically validate that increasing $k$ from $0\rightarrow 16 \rightarrow 32$ does lead to improved performance (Fig \ref{fig:top-k-ablation-of-k}). Moreover, Fig \ref{fig:tail-correction-ablation} shows that, when tail correction and off-policy correction are jointly applied, Top-k KL outperforms truncated KL across datasets. This suggests that in realistic LLM settings, our token-level KL estimator is typically past the critical sample size as illustrated in Fig \ref{fig:topk_kl_sim}, and we should use unbiased Top-k KL.

We also study what EMA coefficients are effective when using Top-k KL. In Fig \ref{fig:four_envs}, we find that $\eta=0.9$ or $\eta=0.95$ for EMA are safe choices along with $k=32$ for Top-k KL.

\begin{table}[t]
\centering
\small
\vspace{-5pt}
\setlength{\tabcolsep}{4pt}
\begin{tabular}{lcc >{\columncolor{rowblue}}c c}
\toprule
R1-Distill-Qwen-1.5B & \textbf{Base} & \textbf{GRPO} & \textbf{EMA-PG} & {\color{Green}$\Delta$ (Rel)} \\
\midrule
MATH500        & 0.823 & 0.871 & \textbf{0.884} & {\color{Green}$27\%\uparrow$} \\ 
AMC 2024       & 0.499 & 0.656 & \textbf{0.686} & {\color{Green}$19\%\uparrow$} \\
Minerva        & 0.256 & 0.304 & \textbf{0.317} & {\color{Green}$27\%\uparrow$} \\
AIME 2024      & 0.306 & 0.400 & \textbf{0.446} & {\color{Green}$49\%\uparrow$} \\
AIME 2025      & 0.250 & 0.277 & \textbf{0.298} & {\color{Green}$77\%\uparrow$} \\
OlympiadBench (text)  & 0.446 & 0.508 & \textbf{0.539} & {\color{Green}$50\%\uparrow$} \\
\midrule
Average        & 0.430 & 0.503 & \textbf{0.528} & {\color{Green}$34\%\uparrow$} \\
\bottomrule
\end{tabular}
\vspace{2pt}
\caption{\centering
Pass@1 comparison across six math reasoning datasets. Relative Improvement $=(\text{EMA-PG}- \text{GRPO})/(\text{GRPO}-\text{Base})$.}
\label{tab:pass1-math-datasets}
\vspace{-10pt}
\end{table}

\paragraph{Other design choices} We now study two additional design choices in EMA-PG. \textbf{(1)} Firstly, since Fig \ref{fig:tail-correction-ablation} has found off-policy correction to be important especially for reverse KL, we study the clipping range for the importance weight (IW), which is $[s_{\min}, s_{\max}]$ in Alg \ref{alg:topk-kl-reverse} and \ref{alg:topk-kl-forward}. Prior literature has mixed views: \cite{deepseekr1} uses no off-policy correction ($s_{\min}=s_{\max}=1$); \cite{deepseek-v3} uses no clipping at all; \cite{RPG} uses aggressive clipping. Fig \ref{fig:iw-clip-and-adv-norm} finds that forward KL benefits from a tighter clipping range close to $1$, but reverse KL benefits from more lenient clipping. This observation is consistent with the prevailing common practice that either uses forward KL (K3) without any off-policy correction or uses reverse KL with no clipping on IW. \textbf{(2)} Secondly, we also study whether the advantage normalization strategy inside the RL algorithm makes a difference in the presence of Top-k KL. The two choices are either group-wise normalization as done in GRPO or global advantage normalization \cite{reinforce++} as done in the original PPO. We find that either choice did not make a noticeable difference to EMA-PG performance.

\vspace{0pt}
\section{Conclusion}
In this work, we introduced two simple changes to policy gradients on LLMs: EMA anchor, which changes what KL is regularizing against, and Top-k KL, which changes how KL gradients are computed. We showed that the two changes are highly effective. Future work includes further investigation into Top-k $f$-divergence estimators (\S\ref{app:a-general-sampled-f-divergence-estimator}) beyond just KL, the application of Top-k KL to both offline and on-policy knowledge distillation of LLMs, and the application of our sampled KL (and more broadly, $f$-divergence) estimators to the policy gradient operator itself (\S\ref{app:new-family-of-pg}-- \S\ref{app:pg-as-sampled-f-divergence}).

\clearpage

\section*{Acknowledgments}

LZ was supported by the Schwartz Reisman Institute for Technology and Society at the University of Toronto through a Schwartz Reisman Fellowship.

\bibliography{example_paper}
\bibliographystyle{icml2026}

\newpage
\appendix
\onecolumn

\clearpage

\section*{Appendix Contents}
\begin{itemize}
    \item \hyperref[sec:algo-box]{A\quad Algorithm Box for EMA Policy Gradient}
    \refpage{sec:algo-box}
    \item \hyperref[sec:additional-results]{B\quad Additional Experimental Results}
    \refpage{sec:additional-results}
    \item \hyperref[app:ema_pg_proofs]{C\quad Derivations for EMA Policy Gradient Dynamics}
        \begin{itemize}
            \item \hyperref[app:kl_quadratic_deriv]{C.1 \quad Policy gradient dynamics under local quadratic KL approximation} 
            \apxpage{app:kl_quadratic_deriv}
            
            \item \hyperref[app:quasi_steady]{C.2 \quad Quasi-steady state analysis over a $k$-step window (Proof of Lemma 4.1)} 
            \apxpage{app:quasi_steady}
            
            \item \hyperref[app:eigen_analysis]{C.3 \quad Eigen-decomposition of Fisher and mode-wise dynamics (Proof of Lemma 4.2)}
            \apxpage{app:eigen_analysis}
            
            \item \hyperref[app:stability]{C.4 \quad Stability and oscillations}
            \apxpage{app:stability}
            
            \item \hyperref[app:steady_state]{C.5 \quad Steady-state lag and bounds}
            \apxpage{app:steady_state}
        \end{itemize}
    \item \hyperref[app:seq-vs-token-kl]{D\quad Sequence-level KL vs.\ Token-level KL}
        \begin{itemize}
            \item \hyperref[app:seq-level-kl-gradients]{D.1 \quad Sequence-level KL gradients}
            \apxpage{app:seq-level-kl-gradients}
            \item \hyperref[app:analysis-of-seq-level-kl-gradients]{D.2 \quad Analysis of sequence-level KL gradients}
            \apxpage{app:analysis-of-seq-level-kl-gradients}
            \item \hyperref[app:token-level-kl-gradients]{D.3 \quad Token-level KL gradients}
            \apxpage{app:token-level-kl-gradients}
        \end{itemize}
    \item \hyperref[app:sampled-reverse-kl]{E\quad Derivation of Sampled Reverse KL Estimators}
        \begin{itemize}
            \item \hyperref[app:sampled-estimators-and-their-expected-values]{E.1 \quad Sampled KL estimators (K1 / K2 / K3) and their expected values}
            \apxpage{app:sampled-estimators-and-their-expected-values}
            
            \item \hyperref[app:k1-k2-k3-gradients]{E.2 \quad Gradients of K1 / K2 / K3 under autodiff $\mathbb{E}_p[\nabla_\theta(\cdot)]$}
            \apxpage{app:k1-k2-k3-gradients}
            
            \item \hyperref[app:the-r-trick]{E.3 \quad The $\boldsymbol{r}$-trick restores $\nabla_\theta \mathbb{E}_p[\cdot]$ from $\mathbb{E}_p[\nabla_\theta(\cdot)]$.}
            \apxpage{app:the-r-trick}
            
            \item \hyperref[app:k3-plus-plus-gradients]{E.4 \quad ${\color{RoyalBlue}\textbf{K3}^{\boldsymbol{++}}}$ yields unbiased reverse-KL values and gradients}
            \apxpage{app:k3-plus-plus-gradients}
            
            \item \hyperref[app:k4-gradients]{E.5 \quad ${\color{RoyalBlue}\textbf{K4}}$ yields unbiased reverse-KL values and gradients}
            \apxpage{app:k4-gradients}
        \end{itemize}
    \item \hyperref[app:forward-kl-k5]{F\quad Derivation of Sampled Forward KL Estimators}
        \begin{itemize}
            \item \hyperref[app:forward-kl-and-its-gradient]{F.1 \quad Forward KL and its gradient}
            \apxpage{app:forward-kl-and-its-gradient}
            
            \item \hyperref[app:the-k5-estimator-has-unbiased-value]{F.2 \quad {\color{RoyalBlue}\textbf{K5}} yields unbiased forward KL values}
            \apxpage{app:the-k5-estimator-has-unbiased-value}
            
            \item \hyperref[app:k5-estimator-has-unbiased-gradient]{F.3 \quad {\color{RoyalBlue}\textbf{K5}} yields unbiased forward KL gradients}
            \apxpage{app:k5-estimator-has-unbiased-gradient}
            
            \item \hyperref[app:baseline-1-minus-w]{F.4 \quad Variance reduction in {\color{RoyalBlue}\textbf{K5}} gradients}
            \apxpage{app:baseline-1-minus-w}
        \end{itemize}
    \item \hyperref[app:topk_kl_sim_exp]{G\quad Bias-Variance Tradeoff in Top-k KL: An Illustrative Synthetic Setup}
    \refpage{app:topk_kl_sim_exp}
    
    \item \hyperref[app:topk-fdiv]{H\quad Top-k $\boldsymbol{f}$-Divergence Estimator}
        \begin{itemize}
            \item \hyperref[app:f-divergences-and-its-gradients]{H.1 \quad $f$-divergence and its gradients}
            \apxpage{app:f-divergences-and-its-gradients}
            
            \item \hyperref[app:a-general-sampled-f-divergence-estimator]{H.2 \quad A generic, unbiased, sampled $f$-divergence estimator}
            \apxpage{app:a-general-sampled-f-divergence-estimator}
            
            \item \hyperref[app:top-k-f-divergence]{H.3 \quad Top-k $f$-divergence estimator: A generalization of Top-k KL estimator}
            \apxpage{app:top-k-f-divergence}
        \end{itemize}
    \item \hyperref[app:token-level-rl]{I\quad Optimal Policies under Forward vs Reverse KL Regularization}
        \begin{itemize}
            \item \hyperref[app:reward-transformation]{I.1 \quad Reward transformation for equivalent optima between Forward KL and Reverse KL}
            \apxpage{app:reward-transformation}
            
            \item \hyperref[app:reward-transform-for-f-divergence]{I.2 \quad Reward transformation for equivalent optima across $f$-divergences}
            \apxpage{app:reward-transform-for-f-divergence}
            
        \end{itemize}
    \item \hyperref[app:new-family-of-pg]{J\quad Policy Gradient Methods can be Derived from Sampled KL Estimators}
    \begin{itemize}
        \item \hyperref[app:k1-leads-to-ppo-grpo]{J.1 \quad K1 estimator leads to on-policy PPO / GRPO loss}
        \apxpage{app:k1-leads-to-ppo-grpo}

        \item \hyperref[app:k2-leads-to-apa-kimi]{J.2 \quad K2 estimator leads to APA / Kimi loss}
        \apxpage{app:k2-leads-to-apa-kimi}

        \item \hyperref[app:k5-leads-to-rwr-mpo]{J.3 \quad K5 estimator leads to RWR / MPO loss}
        \apxpage{app:k5-leads-to-rwr-mpo}

        \item \hyperref[app:k3-k4-lead-to-combined-loss]{J.4 \quad K3 / K3$^{++}$ / K4 estimators lead to combined losses}
        \apxpage{app:k3-k4-lead-to-combined-loss}
        
    \end{itemize}
    \item \hyperref[app:pg-as-sampled-f-divergence]{K\quad Policy Gradient as $\boldsymbol{f}$-Divergence Minimization}
    \refpage{app:pg-as-sampled-f-divergence}
\end{itemize}

\clearpage

\section{Algorithm Box for EMA Policy Gradient}
\label{sec:algo-box}

\begin{algorithm}[h]
\caption{{\color{magenta}\textbf{EMA}} Policy Gradient with {\color{RoyalBlue}$\mathrm{TopkKL}$}}

\begin{algorithmic}[1]
\label{alg:ema-pg}
\REQUIRE Prompt set $\mathcal{D}(x)$, LLM $\pi_\theta(y|x)$, reward $R(x,y)$.
\REQUIRE Anchor policy $\pi_{\theta_{\text{ema}}}(y|x)$.
\STATE {\color{magenta}$\theta_{\text{ema}} \leftarrow \theta$}
\FOR{iteration $t = 1,2,\dots$}
    \STATE Sample task prompt ${x}\sim \mathcal{D}$
    \STATE $\theta_{\text{old}} \leftarrow \theta$ for sampling
    \STATE Sample ${y} \sim \pi_{\theta_{\text{old}}}(\cdot \mid {x})$; let $L$ be the length of $y$
    \STATE Compute $\log \pi_{\theta_{\text{old}}}({y}_{n} \mid {x}, y_{0:n-1})$ and $\log \pi_{\text{ref}}({y}_{n} \mid {x}, y_{0:n-1})$ for $n=0,\dots, L-1$
    \IF{using \textit{reverse} KL}
        \STATE {\color{RoyalBlue}Compute $[q_{0}, \dots, q_{L-1}]$, where $q_{n}$ is the top-$k$ logit indices from $\pi_{\theta_{\text{old}}}(\cdot \mid x,y_{0:n-1})$ \hfill \COMMENT{In \textit{reverse} KL, the expectation is taken under $\pi_{\theta}$}}
    \ELSE
        \STATE {\color{RoyalBlue}Compute $[q_{0}, \dots, q_{L-1}]$, where $q_n$ is the top-$k$ logit indices from $\pi_{{\color{magenta}\theta_{\text{ema}}}}(\cdot\mid x, y_{0:n-1})$ \hfill \COMMENT{In \textit{forward} KL, the expectation is taken under $\pi_{{\color{magenta}\theta_{\text{ema}}}}$}}
    \ENDIF
    \STATE Use $R({x}, {y})$ to compute advantage function ${{A}}$
    \FOR{$\upsilon = 1,\dots, \Lambda$}
        \STATE Compute $\log \pi_{\theta}({y}_n \mid {x}, y_{0:n-1})$ for $n=0,\dots,L-1$
        \STATE Let ${s}_{n} = \mathrm{sg}(\pi_{\theta} (y_{n} \mid x, y_{0:n-1}) /\pi_{\theta_{\text{old}}}( y_{n} \mid x, y_{0:n-1} ))$
        \STATE $\mathcal{J}({\theta})=\sum_{n=0}^{L-1} s_{n}\, A\, \log \pi_{\theta}(y_n\mid x,y_{0:n-1})$\,,\quad clip $\mathcal{J}({\theta})$ in PPO-style
        \IF{using \textit{reverse} KL}
            \STATE {\color{RoyalBlue}$\widehat{\mathrm{KL}} = {\textstyle\sum}_{n=0}^{L-1} {\color{RoyalBlue}\mathrm{Topk}\textbf{Reverse}\mathrm{KL}}( \pi_{\theta}(\cdot | x, y_{0:n-1}), \pi_{\color{magenta}\theta_{\text{ema}}}(\cdot | x, y_{0:n-1}), \pi_{\text{old}}(\cdot | x, y_{0:n-1}), y_{n}, q_{n} )$}
        \ELSE
            \STATE {\color{RoyalBlue}$\widehat{\mathrm{KL}} = {\textstyle\sum}_{n=0}^{L-1} {\color{RoyalBlue}\mathrm{Topk}\textbf{Forward}\mathrm{KL}}( \pi_{\theta}(\cdot | x, y_{0:n-1}), \pi_{\color{magenta}\theta_{\text{ema}}}(\cdot | x, y_{0:n-1}), \pi_{\text{old}}(\cdot | x, y_{0:n-1}), y_{n}, q_{n} )$}
        \ENDIF
        \STATE $L(\theta) = - \mathcal{J}(\theta) + \beta \, \widehat{\mathrm{KL}}$
        \STATE $\theta \gets \theta - \alpha\, \nabla_\theta L(\theta)$ 
    \ENDFOR
    \IF{$t \text{ mod } T_{\text{ema}} = 0$}
        \STATE {\color{magenta}$\theta_{\text{ema}} \leftarrow \eta\,\theta_{\text{ema}} + (1-\eta)\, \theta$}
    \ENDIF
\ENDFOR
\end{algorithmic}
\end{algorithm}

The changes required to modify any policy gradient method into EMA-PG (with {\color{magenta}\textbf{EMA anchor}} and {\color{RoyalBlue}\textbf{Top-k KL}}) are highlighted above. 

\paragraph{Details of Top-k KL:}
For completeness, we write down the full expression of Top-k KL terms. Using common notations:
\begin{align*}
    y\sim \pi_{\text{old}}(\cdot \mid x) \quad h_{n} = (x, y_{0:n-1}) \quad s_{n}= \mathrm{sg} \bigg( {\frac{ \pi_{\theta}(y_{n}\mid h_{n}) }{ \pi_{\theta_{\text{old}}}(y_{n}\mid h_{n}) }} \bigg) \quad {s}_{n}^{\text{clip}}=\mathrm{clip}( s_{n} , s_{\min}, s_{\max})
\end{align*}
The Top-k Reverse KL can be written as:
\begin{align*}
    & q_{n} \leftarrow \mathrm{Topk}(\pi_{\theta_{\text{old}}}(\cdot \mid h_{n})) \qquad (\widehat{\mathrm{KL}}_{\text{sampled}})_{n} = \dfrac{ \pi_{\theta} (y_n \mid h_n) }{ \mathrm{sg}(\pi_{\theta} (y_n \mid h_n) ) } \, \mathrm{sg} \bigg( \log \dfrac{ \pi_{\text{ref}}(y_{n} \mid h_{n}) }{ \pi_{\theta}(y_{n} \mid h_{n}) } \bigg) \\
    &({\color{RoyalBlue}\mathrm{Topk}\textbf{Reverse}\mathrm{KL}})_{n} = \sum_{ \color{red}{j\in q_{n}} } {\pi_{\theta}(j \mid h_{n} ) \log \dfrac{ \pi_{\theta}(j \mid h_{n} ) }{ \pi_{\text{ref}}(j \mid h_{n}) } } + \mathbbm{1}(y_{n} \not\in q_{n} ) \cdot {s}_{n}^{\text{clip}} \cdot (\widehat{\mathrm{KL}}_{\text{sampled}})_{n}
\end{align*}
And the Top-k Forward KL can be written as:
\begin{align*}
    & q_{n} \leftarrow \mathrm{Topk}(\pi_{\text{ref}}(\cdot \mid h_{n})) \qquad (\widehat{\mathrm{FKL}}_{\text{sampled}})_{n} = \mathrm{sg} \bigg( \dfrac{ \pi_{\text{ref}} (y_n \mid h_n) }{ \pi_{\theta} (y_n \mid h_n) } \bigg) \log \dfrac{ \pi_{\text{ref}} (y_n \mid h_n) }{ \pi_{\theta} (y_n \mid h_n) } + \log \dfrac{\pi_{\theta}(y_n | h_{n})}{\mathrm{sg}( \pi_{\theta}(y_n | h_n ) )} \\
    &({\color{RoyalBlue}\mathrm{Topk}\textbf{Forward}\mathrm{KL}})_{n} = \sum_{ \color{red}{j\in q_{n}} } {\pi_{\text{ref}}(j \mid h_{n} ) \log \dfrac{ \pi_{\text{ref}}(j \mid h_{n} ) }{ \pi_{\theta}(j \mid h_{n}) } } + \mathbbm{1}(y_{n} \not\in q_{n} ) \cdot {s}_{n}^{\text{clip}} \cdot (\widehat{\mathrm{FKL}}_{\text{sampled}})_{n}
\end{align*}
Then we sum up the token-wise KL along the sequence length:
\begin{align*}
    \widehat{\mathrm{KL}} = {\sum}_{n=0}^{L-1} ({\color{RoyalBlue}\mathrm{Topk}\textbf{Reverse}\mathrm{KL}})_{n} \qquad \widehat{\mathrm{FKL}} = {\sum}_{n=0}^{L-1} ({\color{RoyalBlue}\mathrm{Topk}\textbf{Forward}\mathrm{KL}})_{n}
\end{align*}

\paragraph{Details of the Base RL Algorithm}
We discuss some additional implementation details below.
\begin{itemize}
    \item For the policy gradient loss function $\mathcal{J}(\theta)$, we use a standard version of GRPO: PPO-style clipping \cite{ppo}, Monte-Carlo estimation of advantages \cite{grpo, deepseek-v3}, with clip-higher \cite{dapo}:
    \begin{align*}
        &y^{(i)} \sim \log \pi_{\theta_{\text{old}}}(\cdot \mid x), \quad i=0,\dots, N-1 \\
        &V(x) = {\textstyle\frac{1}{N}} {\sum}_{i=0}^{N-1} R(x, y^{(i)}) \\
        &A(x,y^{(i)}) = R(x,y^{(i)}) - V(x) \qquad \mathrm{std}(x) = \sqrt{ {\textstyle\frac{1}{N-1}} {\sum}_{i=0}^{N-1} \Big( R(x, y^{(i)}) - V(x) \Big)^{2} } \\
        &s_{n}^{(i)} = \mathrm{sg} \bigg({ \frac{ \pi_{\theta}(y^{(i)}_{n} \mid x, y^{(i)}_{0: n-1}) }{ \pi_{\theta_{\text{old}}}(y^{(i)}_{n} \mid x, y^{(i)}_{0:n-1}) }} \bigg) \qquad
        M_{n}^{(i)} = \begin{cases}
            0 & \text{if } A(x,y^{(i)})>0 \text{ and } s_{l}^{(i)} > 1 + \epsilon_{\text{high}}, \\
            0 & \text{if } A(x,y^{(i)})<0 \text{ and } s_{l}^{(i)} < 1 - \epsilon_{\text{low}}, \\
            1 & \text{otherwise}.
        \end{cases} \\
        &\mathcal{J}^{\text{clip}}(x) = {\textstyle\frac{1}{N}} {\textstyle\frac{1}{L}} {\sum}_{i=0}^{N-1} \Big( {\sum}_{n=0}^{L-1} s_{n}^{(i)}\, M_{n}^{(i)} \, \log \pi_{\theta}(y^{(i)}_{n} \mid x, y^{(i)}_{0:n-1}) \Big) \, A(x,y^{(i)}) \\
        &\mathcal{J}(\theta) = \E_{x\sim \mathcal{D}} \Big[ \mathcal{J}^{\text{clip}}(x) \, / \,\mathrm{std}(x) \Big]
    \end{align*}
    where $\epsilon_{\text{high}}=0.28$ and $\epsilon_{\text{low}}=0.2$. $\mathcal{J}^{\text{clip}}$ applies the PPO-clip objective written in an equivalent form using binary masks \cite{truly-proximal, cispo}. Note that the two techniques of EMA-PG are orthogonal to how we compute $\mathcal{J}(\theta)$; we can in principle also choose to use CISPO \cite{cispo} or GSPO \cite{gspo} as the base algorithm.
    \item For the optimizer, we use Adam \cite{adam} with decoupled weight decay \cite{adamw}, with a learning rate of 1e-6 for \textit{full finetuning}. Since the focus of our study is to device a better algorithm for full-scale post-training, we do not use any low-rank adaptation (LoRA) \cite{lora} in our experiments.
    \item $\beta=0.001$ for KL regularization.
    \item $T_{\text{ema}}=10$, following prior empirical practice among practitioners of Deep Q-learning that target networks (even though under EMA soft update) still only get updated every $T_{\text{ema}}$ iterations; \cite{zhang2021world} set the value to $10$ (see \texttt{target\_update\_freq} $=10$ in their implementation).
\end{itemize}

\newpage

\section{Additional Results}
\label{sec:additional-results}

\begin{table*}[h]
\vspace{10pt}
\setlength{\tabcolsep}{4pt}
\centering
\begin{tabular}{lcccccccc}
\toprule
\textbf{Method} & \textbf{NQ} & \textbf{TriviaQA} & \textbf{PopQA} & \textbf{HotpotQA} & \textbf{2wiki} & \textbf{Musique} & \textbf{Bamboogle} & \textbf{Avg.} \\
\midrule
\multicolumn{9}{l}{Qwen2.5-\textbf{3B-Instruct} (RL)} \\
\textit{before RL}     & 0.205 & 0.427 & 0.277 & 0.199 & 0.200 & 0.044 & 0.153 & 0.215 \\
{PPO}     &  0.341  & 0.545  &  0.378  &  0.324  &  0.319  &  0.103 &  0.264  &  0.325 \\
\textsc{GRPO} &  0.397  &  0.565  &  0.391  &  0.331  &  0.310  &  0.124  &  0.232  &  0.336  \\
\rowcolor{rowblue}
\textbf{EMA-PG} (Reverse KL)      & \textbf{0.411} & \textbf{0.581} & \textbf{0.405} & \textbf{0.360} & \textbf{0.337} & \textbf{0.144} &  \textbf{0.355}  & \textbf{0.370} \\
\midrule
\text{Improvement over GRPO} &  {\color{Green} 3.5$\%\uparrow$}  &  {\color{Green} 2.8$\%\uparrow$}  &  {\color{Green} 3.6$\%\uparrow$}  &  {\color{Green} 8.8$\%\uparrow$}  & {\color{Green} 8.7$\%\uparrow$}  & {\color{Green} 16.1 $\%\uparrow$}  &  {\color{Green} 53.0 $\%\uparrow$}  &  {\color{Green} 10.1 $\%\uparrow$} \\
\bottomrule
\end{tabular}
\vspace{10pt}
\caption{\textbf{EMA-PG outperforms GRPO on Instruct model as well, but interestingly, the gain is not as significant as on base model.} This empirically shows that the \textit{cold-start} strategy \citep{deepseekr1} is a valid and well-justified strategy.}
\end{table*}

\begin{figure}[h]
  \centering
  \includegraphics[width=0.45\linewidth]{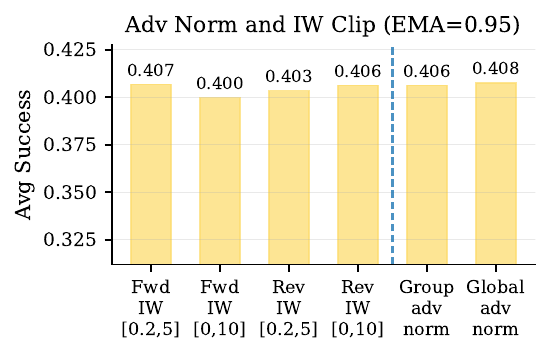}
  \caption{Ablation on importance weight (IW) clip range and advantage normalization strategies (group-wise vs global).}
  \label{fig:iw-clip-and-adv-norm}
\end{figure}

The off-policy importance weight (IW) clipping range is the one specified in
\begin{align*}
    {s}_{n}^{\text{clip}}=\mathrm{clip}( s_{n} , s_{\min}, s_{\max})
\end{align*}
which is used in both Reverse KL (Rev) and Forward KL (Fwd); we study the IW clip range in both situations. For Forward KL, using a tighter clipping range $[0, 2.5]$ helps; for reverse KL, using a more lenient clipping range $[0, 10]$ seems to perform better. Note that ignoring off-policy correction is equivalent to setting the clip range to $s_{\min}=s_{\max}=1$.

In addition, we also study the advantage normalization strategies in the $\mathcal{J}(\theta)$, in particular global advantage normalization in the original PPO \cite{ppo}. This has been shown to slightly improve performance in \cite{reinforce++}. It essentially adds additional normalization on advantages $\boldsymbol{A} \in \mathbb{R}^{B\times N}$ across an entire batch of $B$ problems $\{x^{(1)}, \dots, x^{(B)}\}$:
\begin{align*}
    (\boldsymbol{A} - \mathrm{mean}(\boldsymbol{A})) / \mathrm{std}(\boldsymbol{A})
\end{align*}
which does a $z$-score normalization on the \textbf{flattened} $\boldsymbol{A}$ of shape $(B\times N)$, and discards per-problem normalization based on $\mathrm{std}(x)$. We find that this helps slightly but did not make a significant difference on performance.

\clearpage

\section{Derivations for EMA Policy Gradient Dynamics}
\label{app:ema_pg_proofs}

\paragraph{Fisher matrix}
The Fisher information matrix of a policy $\pi_\theta$ is:
\begin{equation*}
F_\theta \triangleq \E_{x \sim \mathcal{D}} \E_{y \sim \pi_\theta(\cdot | x)}\bigl[(\nabla_\theta \log \pi_\theta(y|x))(\nabla_\theta \log \pi_\theta(y|x))^\top\bigr]
\end{equation*}
The Fisher matrix is positive semidefinite (PSD), so all eigenvalues satisfy $\lambda_i\ge 0$. Consequently,
\begin{align}
\lambda_{\max}(F_\theta) \le \tr(F_\theta)
= \E_{\substack{x \sim \mathcal{D}\\y \sim \pi_\theta(\cdot \mid x)}}\!\big[\|\nabla_{\theta}\log\pi_{\theta}(y|x)\|_{2}^{2}\big]
\end{align}
giving us a practical way of estimating an upper bound on the largest Fisher eigenvalue of a policy under $\mathcal{D}$.

\paragraph{Setup.}
Let $J(\theta)\coloneqq R(\pi_\theta)$ denote the (expected) RL objective.
Define the EMA parameters and the \emph{lag}
\begin{align*}
\theta_{\ema,t+1} &= \eta\,\theta_{\ema,t} + (1-\eta)\,\theta_t,\qquad \eta\in[0,1), \\
\delta_t &\coloneqq \theta_t - \theta_{\ema,t}.
\end{align*}
Let $g_t\coloneqq \nabla_\theta J(\theta)\rvert_{\theta=\theta_t}$ be the policy-gradient (or any ascent direction).
We consider the update
\begin{align}
\theta_{t+1}
= \theta_t + \alpha\Big[g_t - \beta\,\nabla_\theta \KL(\pi_\theta\,\|\,\pi_{\theta_{\ema,t}})\Big]_{\theta=\theta_t},
\label{eq:app_theta_update_def}
\end{align}
with step size $\alpha>0$ and regularization coefficient $\beta\ge 0$.

\subsection{Policy Gradient Dynamics under Local Quadratic KL Approximation}
\label{app:kl_quadratic_deriv}

We assume a local quadratic approximation of the KL around $\theta_t$:
\begin{align}
\KL(\pi_{\theta_t}\,\|\,\pi_{\theta_{\ema,t}})
\;\approx\;
\frac{1}{2}(\theta_t-\theta_{\ema,t})^\top F_t(\theta_t-\theta_{\ema,t})
\;=\; \frac{1}{2}\delta_t^\top F_t\delta_t,
\label{eq:app_kl_quadratic}
\end{align}
where $F_t \coloneqq F_{\theta_t}$ is the (empirical) Fisher information matrix at $\theta_t$. Differentiating the RHS of \eqref{eq:app_kl_quadratic} w.r.t.\ $\theta_t$ yields
\begin{align}
\nabla_{\theta_t}\,\KL(\pi_{\theta_t}\,\|\,\pi_{\theta_{\ema,t}})
\;\approx\;
\nabla_{\theta_t}\Big(\tfrac{1}{2}\delta_t^\top F_t\delta_t\Big)
\;=\; F_t\delta_t,
\label{eq:app_grad_kl}
\end{align}
where we treat $F_t$ as locally constant under this approximation.
Substituting \eqref{eq:app_grad_kl} into \eqref{eq:app_theta_update_def} gives
\begin{align}
\theta_{t+1}
&= \theta_t + \alpha g_t - \alpha\beta F_t\delta_t.
\label{eq:app_theta_update}
\end{align}

\begin{lemma}[Lag dynamics]
\label{lem:app_lag_dynamics}
Under \eqref{eq:app_theta_update} and the EMA recursion, the lag obeys
\begin{align}
\delta_{t+1} = (\eta I - \alpha\beta F_t)\,\delta_t + \alpha g_t.
\label{eq:app_delta_update}
\end{align}
\end{lemma}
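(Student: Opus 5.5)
The plan is a direct one-step computation that combines the two recursions. By definition \eqref{eq:def_delta}, $\delta_{t+1} = \theta_{t+1} - \theta_{\ema,t+1}$. I will substitute the linearized policy update \eqref{eq:app_theta_update} for $\theta_{t+1}$, namely $\theta_{t+1} = \theta_t + \alpha g_t - \alpha\beta F_t\delta_t$. For $\theta_{\ema,t+1}$ I will substitute the EMA recursion $\theta_{\ema,t+1} = \eta\,\theta_{\ema,t} + (1-\eta)\,\theta_t$. Importantly, the EMA update at step $t$ uses $\theta_t$ and not $\theta_{t+1}$, as stated in the setup.

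Subtracting the two gives
\begin{align*}
\delta_{t+1} = \theta_t - (1-\eta)\theta_t - \eta\,\theta_{\ema,t} + \alpha g_t - \alpha\beta F_t\delta_t = \eta(\theta_t - \theta_{\ema,t}) + \alpha g_t - \alpha\beta F_t\delta_t .
\end{align*}
Recognizing $\theta_t - \theta_{\ema,t} = \delta_t$ then collapses the right-hand side to $(\eta I - \alpha\beta F_t)\delta_t + \alpha g_t$, which is \eqref{eq:app_delta_update}. This also matches the second block row of \eqref{eq:tilde_matrix_update_singleline}.

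I do not expect a real obstacle here. The only points needing care are, first, the indexing convention of the EMA (it averages in $\theta_t$, so the $\theta_t$ coefficients combine to exactly $\eta$), and second, that the identity is exact given \eqref{eq:app_theta_update}. All of the approximation sits in the earlier quadratic KL step \eqref{eq:app_grad_kl}, where $F_t$ is treated as locally constant. I will state explicitly that the lemma inherits that approximation and introduces no further ones.
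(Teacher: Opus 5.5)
Your proposal is correct and is essentially the paper's own proof: both substitute the linearized update \eqref{eq:app_theta_update} and the EMA recursion (which uses $\theta_t$) into $\delta_{t+1}=\theta_{t+1}-\theta_{\text{ema},t+1}$. The only difference is grouping: the paper first isolates $\theta_t-\theta_{\text{ema},t+1}=\eta\,\delta_t$ and then writes $\delta_{t+1}=(\theta_{t+1}-\theta_t)+\eta\,\delta_t$, whereas you subtract the two recursions in one line.
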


\begin{proof}
By definition $\delta_{t+1}=\theta_{t+1}-\theta_{\ema,t+1}$, and by the EMA update,
\begin{align}
\theta_{\ema,t+1}
= \eta\theta_{\ema,t} + (1-\eta)\theta_t
\;\Rightarrow\;
\theta_t-\theta_{\ema,t+1}
= \theta_t - \eta\theta_{\ema,t} - (1-\eta)\theta_t
= \eta(\theta_t-\theta_{\ema,t})
= \eta\,\delta_t.
\end{align}
Therefore,
\begin{align}
\delta_{t+1}
&= (\theta_{t+1}-\theta_t) + (\theta_t-\theta_{\ema,t+1})
= (\theta_{t+1}-\theta_t) + \eta\,\delta_t.
\end{align}
Using \eqref{eq:app_theta_update}, $\theta_{t+1}-\theta_t=\alpha g_t - \alpha\beta F_t\delta_t$, so
\begin{align}
\delta_{t+1}
= \eta\delta_t + \alpha g_t - \alpha\beta F_t\delta_t
= (\eta I - \alpha\beta F_t)\delta_t + \alpha g_t.
\end{align}
\end{proof}

Define the (time-varying) linear operator
\begin{align}
D_t \coloneqq \eta I - \alpha\beta F_t.
\label{eq:app_Dt_def}
\end{align}

Collecting \eqref{eq:app_theta_update} and \eqref{eq:app_delta_update} gives the affine joint update
\begin{align}
\begin{bmatrix}
\theta_{t+1}\\
\delta_{t+1}
\end{bmatrix}
=
\begin{bmatrix}
I & -\alpha\beta F_t\\
0 & \eta I - \alpha\beta F_t
\end{bmatrix}
\begin{bmatrix}
\theta_t\\
\delta_t
\end{bmatrix}
+
\begin{bmatrix}
\alpha I\\
\alpha I
\end{bmatrix} g_t.
\label{eq:app_joint_dynamics_affine}
\end{align}

\subsection{Quasi-steady state analysis over a $k$-step window (Proof of Lemma 4.1)}
\label{app:quasi_steady}

\paragraph{Assumption (frozen gradient and Fisher).}
Fix an integer $k\ge 1$ and assume that over the window $\{t,\dots,t+k-1\}$,
\begin{align}
g_{t+j} \approx g, \qquad F_{t+j} \approx F \qquad \text{for } j=0,\dots,k-1,
\label{eq:app_frozen_assumption}
\end{align}
for some constant vectors/matrices $g,F$.
Define the corresponding constant operator
\begin{align}
D \coloneqq \eta I - \alpha\beta F.
\label{eq:app_D_def}
\end{align}

\subsubsection{Closed form for $\delta_{t+k}$}
Unrolling \eqref{eq:app_delta_update} with constant $D$ and $g$ yields
\begin{align}
\delta_{t+1} &= D\delta_t + \alpha g, \nonumber \\
\delta_{t+2} &= D\delta_{t+1} + \alpha g = D^2\delta_t + \alpha(D+I)g, \nonumber \\
&\;\;\vdots \nonumber\\
\delta_{t+k} &= D^k\delta_t + \alpha\sum_{j=0}^{k-1}D^j g.
\label{eq:app_delta_closed_form_raw}
\end{align}
Define the geometric-series matrix
\begin{align}
S_k \coloneqq \sum_{j=0}^{k-1} D^j.
\label{eq:app_Sk_def}
\end{align}
Then \eqref{eq:app_delta_closed_form_raw} becomes
\begin{align}
\delta_{t+k} = D^k\delta_t + \alpha S_k\,g.
\label{eq:app_delta_closed_form}
\end{align}

\subsubsection{Invertibility of $I-D$ and closed form for $S_k$}
\label{app:invertibility}
Let $v_i$ be an eigenvector of $F$, and $\lambda_i$ be its corresponding eigenvalue. Using $D=\eta I-\alpha\beta F$,
\begin{align}
Dv_i = (\eta I-\alpha\beta F)v_i = (\eta-\alpha\beta\lambda_i)v_i.
\end{align}
Define
\begin{align}
\chi_i \coloneqq \eta - \alpha\beta \lambda_i.
\label{eq:app_chi_def}
\end{align}
Then $Dv_i=\chi_i v_i$. Since $\lambda_i\ge 0$ ($F\succeq 0$), $\eta<1$, and $\beta,\alpha\ge 0$, we have
\begin{align}
\chi_i \le \eta < 1,
\end{align}
Hence $1-\chi_i>0$ for all $i$, so $I-D$ is invertible. Consequently,
\begin{align}
S_k
= \sum_{j=0}^{k-1}D^j
= (I-D^k)(I-D)^{-1}.
\label{eq:app_Sk_closed_form}
\end{align}

\subsubsection{Closed form for $\theta_{t+k}$ and the $M_k$ matrix}
From \eqref{eq:app_theta_update} with frozen $F,g$,
\begin{align}
\theta_{t+1} &= \theta_t + \alpha g - \alpha\beta F\delta_t, \\
\theta_{t+k} &= \theta_t + \sum_{j=0}^{k-1}\big(\alpha g - \alpha\beta F\delta_{t+j}\big)
= \theta_t + \alpha k\,g - \alpha\beta F\sum_{j=0}^{k-1}\delta_{t+j}.
\label{eq:app_theta_unroll}
\end{align}
Using \eqref{eq:app_delta_closed_form} at times $t+j$,
\begin{align}
\delta_{t+j} = D^j\delta_t + \alpha S_j g,
\end{align}
so
\begin{align}
\sum_{j=0}^{k-1}\delta_{t+j}
&= \sum_{j=0}^{k-1}D^j\delta_t + \alpha\sum_{j=0}^{k-1}S_j g
= S_k\delta_t + \alpha\Big(\sum_{j=0}^{k-1}S_j\Big)g.
\end{align}
Define
\begin{align}
M_k \coloneqq \sum_{j=0}^{k-1} S_j.
\label{eq:app_Mk_def}
\end{align}
Then
\begin{align}
\theta_{t+k}
= \theta_t + \alpha k\,g - \alpha\beta F S_k\delta_t - \alpha^2\beta F M_k\,g.
\label{eq:app_theta_closed_form}
\end{align}

\paragraph{Closed form for $M_k$.}
Using $S_j=(I-D^j)(I-D)^{-1}$ from \eqref{eq:app_Sk_closed_form},
\begin{align}
M_k
= \sum_{j=0}^{k-1}(I-D^j)(I-D)^{-1}
= \Big(kI - \sum_{j=0}^{k-1}D^j\Big)(I-D)^{-1}
= (kI - S_k)(I-D)^{-1}.
\label{eq:app_Mk_closed_form}
\end{align}

Collecting \eqref{eq:app_theta_closed_form} and \eqref{eq:app_delta_closed_form}, one may the dynamics as a homogeneous linear map in $(\theta,\delta,g)$:
\begin{equation*}
\begin{bmatrix}
\theta_{t+k}\\
\delta_{t+k}\\
g
\end{bmatrix}
=
\begin{bmatrix}
I & -\alpha\beta F S_{k} & \alpha kI - \alpha^{2}\beta F M_{k}\\
0 & D^{k} & \alpha S_{k}\\
0 & 0 & I
\end{bmatrix}
\begin{bmatrix}
\theta_t\\
\delta_t\\
g
\end{bmatrix}
\end{equation*}

\subsection{Eigen-decomposition of Fisher and mode-wise dynamics (Proof of Lemma 4.2)}
\label{app:eigen_analysis}

Since $F$ is a Fisher information matrix (or symmetric empirical approximation), we assume $F$ is symmetric PSD:
\begin{align*}
F \succeq 0.
\end{align*}
Therefore, there exists an orthonormal eigenbasis $\{v_i\}_{i=0}^{d-1}$ and eigenvalues $\lambda_i\ge 0$ such that
\begin{align*}
Fv_i = \lambda_i v_i,\qquad \lambda_i\ge 0.
\end{align*}
We refer to $v_i$ as the $i$-th Fisher eigenvector (eigen-direction). ``High curvature'' corresponds to large $\lambda_i$. Previously we have already shown that
\begin{align*}
\chi_i \coloneqq \eta - \alpha\beta \lambda_i \qquad \chi_i \le \eta < 1,
\end{align*}
and $\chi_i$ may be negative if $\alpha\beta\lambda_i>\eta$.
Now we define the scalar projections
\begin{align}
\delta_t^{(i)} \coloneqq v_i^\top \delta_t,\qquad g^{(i)}\coloneqq v_i^\top g.
\end{align}
Projecting \eqref{eq:app_delta_closed_form} onto $v_i$ yields
\begin{align}
\delta_{t+k}^{(i)}
&= v_i^\top D^k\delta_t + \alpha\,v_i^\top S_k g
= \chi_i^k \delta_t^{(i)} + \alpha\sum_{j=0}^{k-1}\chi_i^j\,g^{(i)} \\
&= \chi_i^k \delta_t^{(i)} + \alpha\,\frac{1-\chi_i^k}{1-\chi_i}\,g^{(i)},
\label{eq:app_delta_mode_solution}
\end{align}
This yields the decomposition into a transient term and a particular solution.

\subsection{Stability and oscillations}
\label{app:stability}

\subsubsection{Stability condition}
The transient part of the lag dynamics is $\delta_{t+1}=D\delta_t$.
Stability (transient $\to 0$) requires $\rho(D)<1$, equivalently
\begin{align}
\max_i |\chi_i| < 1.
\label{eq:app_stability_rho}
\end{align}
Since $\chi_i\le \eta<1$, the upper inequality $|\chi_i|<1$ is only violated if $\chi_i\le -1$ for some $i$.
The most negative eigenvalue occurs at $\lambda_{\max}\coloneqq \max_i \lambda_i$, so stability is equivalent to
\begin{align}
\eta - \alpha\beta \lambda_{\max} > -1
\quad\Longleftrightarrow\quad
\alpha\beta \lambda_{\max} < 1+\eta.
\label{eq:app_stability_condition}
\end{align}
This proves \eqref{eq:ema-pg-stability-condition}.

\subsubsection{Non-oscillatory vs.\ oscillatory transients}
From \eqref{eq:app_delta_mode_solution}, the sign of the transient in mode $i$ is governed by $\chi_i^k$.
If $\chi_i\in(0,1)$, the transient decays monotonically.
If $\chi_i\in(-1,0)$, the transient alternates sign each step (damped overshoot/undershoot), which we call ``oscillatory''.
Thus:
\begin{itemize}
\item \textbf{Non-oscillatory in all modes} iff $\chi_i\ge 0$ for all $i$, i.e.
\begin{align}
\eta - \alpha\beta\lambda_{\max} \ge 0
\quad\Longleftrightarrow\quad
\alpha\beta\lambda_{\max}\le \eta.
\end{align}
\item \textbf{Oscillatory in some high-curvature directions} if there exists $i$ with $\chi_i<0$, which can occur when $\alpha\beta\lambda_{\max}>\eta$.
\end{itemize}
Combined with stability \eqref{eq:app_stability_condition} yields the three regimes summarized in the main text: 
\begin{itemize}
\item If $0\le \alpha\beta\lambda_{\max}\le \eta$, then $\chi_i\in[0,\eta]\subset[0,1)$ for all $i$, so the dynamics is stable and all transients decay monotonically (non-oscillatory).
\item If $\eta<\alpha\beta\lambda_{\max}<1+\eta$, then stability holds ($|\chi_i|<1$ for all $i$) but there exist high-curvature directions with $\chi_i<0$, producing damped sign-flipping transients (oscillations).
\item If $\alpha\beta\lambda_{\max}\ge 1+\eta$, then $\chi_{\max}\le -1$ for the top-curvature direction, so $\max_i|\chi_i|\ge 1$ and the lag dynamics is unstable.
\end{itemize}

\subsection{Steady-state lag and bounds}
\label{app:steady_state}

Assume stability \eqref{eq:app_stability_condition}, $D^k\to 0$ as $k\to\infty$.
Taking $k\to\infty$ in \eqref{eq:app_delta_closed_form} gives the steady lag
\begin{align}
\delta_*
= \lim_{k\to\infty}\delta_{t+k}
= \alpha\Big(\sum_{j=0}^{\infty} D^j\Big) g
= \alpha(I-D)^{-1}g.
\label{eq:app_delta_star_geom}
\end{align}
Using $I-D = I-(\eta I-\alpha\beta F) = (1-\eta)I + \alpha\beta F$, we obtain
\begin{align}
\delta_* = \alpha\big((1-\eta)I+\alpha\beta F\big)^{-1}g,
\label{eq:app_delta_star}
\end{align}
In each eigen-direction,
\begin{align}
\delta_*^{(i)} = \frac{\alpha}{(1-\eta)+\alpha\beta\lambda_i}\,g^{(i)}.
\label{eq:app_delta_star_mode}
\end{align}

\subsubsection{Euclidean norm upper bound}
Since $F\succeq 0$, we have the PSD ordering
\begin{align*}
(1-\eta)I+\alpha\beta F \;\succeq\; (1-\eta)I.
\end{align*}
For positive definite matrices, PSD ordering reverses under inversion, hence
\begin{align*}
\big((1-\eta)I+\alpha\beta F\big)^{-1}
\;\preceq\; \frac{1}{1-\eta}I.
\end{align*}
Therefore,
\begin{align}
\|\delta_*\|_2
= \alpha\Big\|\big((1-\eta)I+\alpha\beta F\big)^{-1}g\Big\|_2 \le \alpha\Big\|\big((1-\eta)I+\alpha\beta F\big)^{-1}\Big\|_{2}\,\|g\|_2
\;\le\; \alpha\cdot \frac{1}{1-\eta}\,\|g\|_2,
\label{eq:app_delta_norm_bound}
\end{align}

\subsubsection{Quasi-steady KL magnitude}
Under the quadratic KL approximation \eqref{eq:app_kl_quadratic}, the quasi-steady KL is
\begin{align}
\widehat{\KL}_*
\;\approx\;
\frac{1}{2}\delta_*^\top F\delta_*.
\label{eq:app_kl_star_def}
\end{align}
Expanding in the eigenbasis, $F=\sum_i \lambda_i v_i v_i^\top$ and $\delta_*=\sum_i \delta_*^{(i)} v_i$, gives
\begin{align}
\delta_*^\top F\delta_*
= \sum_{i=0}^{d-1}\lambda_i\big(\delta_*^{(i)}\big)^2.
\end{align}
Substituting \eqref{eq:app_delta_star_mode} yields
\begin{align}
\widehat{\KL}_*
\;\approx\;
\frac{\alpha^2}{2}\sum_{i=0}^{d-1}
\frac{\lambda_i\big(g^{(i)}\big)^2}{\big((1-\eta)+\alpha\beta\lambda_i\big)^2},
\label{eq:app_kl_star_eigs}
\end{align}
Thus, within the quasi-steady model, increasing $\eta$ increases $\widehat{\KL}_*$ unless $\beta$ is increased or $\alpha$ is decreased.

\newpage

\section{Sequence-level KL vs.\ Token-level KL}
\label{app:seq-vs-token-kl}

\paragraph{Notation and setup}
For prompt $x$, consider the conditional distribution over output sequence $y=(y_0,\dots,y_{L-1})$.
\begin{align*}
\pi_\theta(y\mid x) = \prod_{n=0}^{L-1}\pi_\theta(y_n \mid x, y_{0:n-1}).
\end{align*}
For each position $l$, define the prefix (history)
\begin{align*}
h_{n} \triangleq (x, y_{0:n-1}),
\end{align*}
and the per-token log-ratio and score term
\begin{align*}
\rho_n \triangleq \log\frac{\pi_\theta(y_n\mid h_n)}{\pi_{\mathrm{ref}}(y_n \mid h_n)},
\qquad
\omega_n \triangleq \nabla_\theta \log \pi_\theta(y_n \mid h_n).
\end{align*}
Note that $\log\frac{\pi_\theta(y\mid x)}{\pi_{\mathrm{ref}}(y\mid x)} = \sum_{n=0}^{L-1}\rho_n$
and $\nabla_\theta \log \pi_\theta(y\mid x) = \sum_{n=0}^{L-1}\omega_n$.

\subsection{Sequence-level KL gradients}
\label{app:seq-level-kl-gradients}
Define the (sequence-level) conditional KL:
\begin{align*}
\mathrm{KL}\big(\pi_\theta(\cdot\mid x)\,\|\,\pi_{\mathrm{ref}}(\cdot\mid x)\big)
\triangleq
\mathbb{E}_{y\sim\pi_\theta(\cdot\mid x)}
\left[
\log\frac{\pi_\theta(y\mid x)}{\pi_{\mathrm{ref}}(y\mid x)}
\right].
\end{align*}
Differentiate it:
\begin{align*}
\nabla_\theta \mathrm{KL}
&=
\nabla_\theta
\int \pi_\theta(y\mid x)\,
\log\frac{\pi_\theta(y\mid x)}{\pi_{\mathrm{ref}}(y\mid x)}\,dy \\
&=
\int \nabla_\theta \pi_\theta(y\mid x)\,
\log\frac{\pi_\theta(y\mid x)}{\pi_{\mathrm{ref}}(y\mid x)}\,dy
+
\int \pi_\theta(y\mid x)\,\nabla_\theta \log \pi_\theta(y\mid x)\,dy.
\end{align*}
Using $\nabla_\theta \pi_\theta = \pi_\theta \nabla_\theta \log \pi_\theta$ and
$\int \pi_\theta \nabla_\theta \log \pi_\theta\,dy = \mathbb{E}_{\pi_\theta}[\nabla_\theta \log \pi_\theta]=0$,
we obtain the standard score-function form:
\begin{align}
\nabla_\theta \mathrm{KL}\big(\pi_\theta(\cdot\mid x)\,\|\,\pi_{\mathrm{ref}}(\cdot\mid x)\big)
=
\mathbb{E}_{y\sim\pi_\theta(\cdot\mid x)}
\left[
\log\frac{\pi_\theta(y\mid x)}{\pi_{\mathrm{ref}}(y\mid x)}
\;\nabla_\theta \log \pi_\theta(y\mid x)
\right].
\label{eq:seq_kl_score_fn}
\end{align}

\subsection{Analysis of sequence-level KL gradients}
\label{app:analysis-of-seq-level-kl-gradients}
Substitute the autoregressive decompositions:
\begin{align}
\nabla_\theta \mathrm{KL}\big(\pi_\theta(\cdot\mid x)\,\|\,\pi_{\mathrm{ref}}(\cdot\mid x)\big)
&=
\mathbb{E}_{y\sim\pi_\theta(\cdot\mid x)}
\left[
\left(\sum_{m=0}^{L-1}\rho_m\right)\left(\sum_{n=0}^{L-1}\omega_n\right)
\right] =
\mathbb{E}_{y\sim\pi_\theta(\cdot\mid x)}
\left[
\sum_{n=0}^{L-1}\sum_{m=0}^{L-1}\rho_m \omega_n
\right].
\end{align}
Split the inner sum into ``future'' and ``past'' relative to $n$:
\begin{align}
\sum_{m=0}^{L-1}\rho_m \omega_n
=
\left(\sum_{m\ge n}\rho_m\right)\omega_n
+
\left(\sum_{m<n}\rho_m\right)\omega_n.
\label{eq:future_past_split}
\end{align}

\paragraph{Why the ``past'' terms vanish} Recall the conditional expectation notation
\begin{align*}
\mathbb{E}_n[\cdot] \triangleq \mathbb{E}_{y_n \sim \pi_\theta(\cdot \mid h_n)}[\cdot],
\qquad
h_n \triangleq (x, y_{0:n-1}).
\end{align*}
We will use the standard Law of total expectation in the following explicit form: for any random variable
$Z=Z(x,y_{0:n})$,
\begin{align}
\mathbb{E}_{y \sim \pi_\theta(\cdot \mid x)}[Z]
=
\mathbb{E}_{y_{0:n-1} \sim \pi_\theta(\cdot \mid x)}
\Big[
\mathbb{E}_{y_n \sim \pi_\theta(\cdot \mid x, y_{0:n-1})}[Z]
\Big]
=
\mathbb{E}_{y_{0:n-1} \sim \pi_\theta(\cdot \mid x)}\big[\mathbb{E}_n[Z]\big].
\label{eq:iterated-expectation}
\end{align}

\paragraph{Step 1: the score has zero mean.}
Conditioning on a fixed prefix $h_n$, we have
\begin{align*}
\mathbb{E}_n[\omega_n]
&=
\sum_{a} \pi_\theta(a \mid h_n)\,\nabla_\theta \log \pi_\theta(a \mid h_n)
=
\sum_{a} \nabla_\theta \pi_\theta(a \mid h_n)
=
\nabla_\theta \sum_{a} \pi_\theta(a \mid h_n)
=
\nabla_\theta 1
=
0.
\end{align*}
\paragraph{Step 2: past log-ratios are constant under $\mathbb{E}_n[\cdot]$.}
Since $\sum_{m<n}\rho_m$ depends only on the prefix $h_n=(x,y_{0:n-1})$, it does not depend on $y_n$ and can be
pulled out of $\mathbb{E}_n[\cdot]$:
\begin{align*}
\mathbb{E}_n\!\left[\left(\sum_{m<n}\rho_m\right)\omega_n\right]
=
\left(\sum_{m<n}\rho_m\right)\mathbb{E}_n[\omega_n]
=
0.
\end{align*}
\paragraph{Step 3: unconditional past terms also vanish.}
Apply \eqref{eq:iterated-expectation} with
$Z=\left(\sum_{m<n}\rho_m\right)\omega_n$:
\begin{align}
\mathbb{E}_{y \sim \pi_\theta(\cdot \mid x)}
\left[\left(\sum_{m<n}\rho_m\right)\omega_n\right]
&=
\mathbb{E}_{y_{0:n-1} \sim \pi_\theta(\cdot \mid x)}
\Big[
\mathbb{E}_n\!\left[\left(\sum_{m<n}\rho_m\right)\omega_n\right]
\Big]
=
0,
\label{eq:past-term-unconditional-zero}
\end{align}
which justifies dropping the ``past tokens'' contribution in the main derivation.

\subsection{Token-level KL gradients}
\label{app:token-level-kl-gradients}
Define the token-level conditional KL at position $n$:
\begin{align*}
\widehat{\mathrm{KL}}_n
\triangleq
\mathrm{KL}\big(\pi_\theta(\cdot\mid h_n)\,\|\,\pi_{\mathrm{ref}}(\cdot\mid h_n)\big)
=
\mathbb{E}_{y_{n} \sim\pi_\theta(\cdot\mid h_n)}
\left[
\log\frac{\pi_\theta(y_{n} \mid h_n)}{\pi_{\mathrm{ref}}(y_{n} \mid h_n)}
\right].
\end{align*}
By the same argument as \eqref{eq:seq_kl_score_fn}, its gradient can be written as
\begin{align}
\nabla_\theta \widehat{\mathrm{KL}}_n
=
\mathbb{E}_{y_{n} \sim\pi_\theta(\cdot \mid h_n)}
\left[
\log\frac{\pi_\theta(y_{n} \mid h_n)}{\pi_{\mathrm{ref}}(y_{n} \mid h_n)}
\;\nabla_\theta \log \pi_\theta(y_{n} \mid h_n)
\right] = \mathbb{E}_{y_{n} \sim\pi_\theta(\cdot \mid h_n)} [ \rho_{n} \, \omega_{n} ].
\end{align}
Hence the token-level KL objective yields the per-token weighting:
\begin{align}
\nabla_\theta L_{\mathrm{kl}}^{\mathrm{token}}
=
\E_{y\sim \pi_{\theta}(\cdot \mid x)} \bigg[ \sum_{n=0}^{L-1} \nabla_{\theta} \widehat{\mathrm{KL}}_{n} \bigg]
=
\mathbb{E}_{y\sim\pi_\theta(\cdot\mid x)}
\left[
\sum_{n=0}^{L-1}\rho_n \omega_n
\right].
\end{align}

\paragraph{Comparison}
Sequence-level KL weights token $n$ by the cumulative \emph{future} log-ratio $\sum_{m\ge n}\rho_m$,
while token-level KL weights token $n$ only by the \emph{local} log-ratio $\rho_n$.
\begin{align*}
    \nabla_{\theta} L_{\mathrm{KL}}^{\text{seq}} = \E_{\pi_{\theta}(y|x)} \Big[\sum_{n}\Big( \sum_{{\color{red}m\geq n}} {\color{red}\rho_{m}} \Big)\, \omega_{n} \Big]
    \qquad \nabla_{\theta} L_{\mathrm{KL}}^{\text{token}} = \E_{\pi_{\theta}(y|x)} \Big[\sum_{n}\big( {\color{red}\rho_{n}} \big) \, \omega_{n} \Big]
\end{align*}
In other words, \textbf{sequence-level KL} \textbf{punishes each current token for the divergence of all its future tokens}; token-level KL is only concerned with the current token.

\newpage

\section{Derivation of Sampled Reverse KL Estimator}
\label{app:sampled-reverse-kl}

We focus on a fixed token position $n$ with prefix $h_n=(x,y_{0:n-1})$.
Let $p \sim \pi_\theta(\cdot\mid h_n)$ be the sampled token, and define
\begin{align*}
w(p) \triangleq \frac{\pi_{\mathrm{ref}}(p\mid h_n)}{\pi_\theta(p\mid h_n)},
\qquad
\ell(p) \triangleq -\log w(p) = \log\frac{\pi_\theta(p\mid h_n)}{\pi_{\mathrm{ref}}(p\mid h_n)}.
\end{align*}
To avoid ambiguity, throughout this subsection we use the shorthand
\begin{align*}
\mathbb{E}_{p}\big[\cdot\big]
\;\triangleq\;
\mathbb{E}_{p \sim \pi_\theta(\cdot\mid h_n)}\big[\cdot\big]
\end{align*}

\paragraph{Reverse KL at a fixed prefix.}
The token-level reverse KL at history $h_n$ is (with $\mathcal{V}$ being the vocabulary):
\begin{align*}
\mathrm{KL}_n(\theta)
\;\triangleq\;
\mathrm{KL}\big(\pi_\theta(\cdot\mid h_n)\,\|\,\pi_{\mathrm{ref}}(\cdot\mid h_n)\big)
=
\sum_{j\in\mathcal V}\pi_\theta(j\mid h_n)\log\frac{\pi_\theta(j\mid h_n)}{\pi_{\mathrm{ref}}(j\mid h_n)}
=
\mathbb{E}_{p}\big[\ell(p)\big].
\end{align*}

\paragraph{Exact reverse-KL gradient.}
\begin{align*}
\nabla_\theta \mathrm{KL}_n(\theta)
=
\sum_{j\in\mathcal V}\pi_\theta(j)\log\frac{\pi_\theta(j)}{\pi_{\mathrm{ref}}(j)}\,\nabla_\theta \log \pi_\theta(j)
=
\mathbb{E}_{p\sim\pi_\theta(\cdot\mid h_n)}
\Big[
\ell(p)\,\nabla_\theta \log \pi_\theta(p\mid h_n)
\Big].
\end{align*}

\paragraph{$\mathbb{E}_p[\nabla_\theta(\cdot)]$ versus \ $\nabla_\theta \mathbb{E}_p[\cdot]$}
Since $\pi_\theta(\cdot\mid h_n)$ depends on $\theta$, $\mathbb{E}_{p\sim \pi_\theta(\cdot\mid h_n)}[\cdot]$ is itself a function of $\theta$; we write it as $\mathbb{E}_{p}[\cdot]$.
For any scalar function $f(p,\theta)$, the gradient $\nabla_\theta \mathbb{E}_p[\cdot]$ is:
\begin{align}
\mathbb{E}_{p}\big[f(p,\theta)\big]
=
\sum_{j\in\mathcal V}\pi_\theta(j\mid h_n)\,f(j,\theta) \qquad
\nabla_\theta \mathbb{E}_{p}\big[f(p,\theta)\big]
=
\mathbb{E}_{p}\Big[
f(p,\theta)\,\nabla_\theta \log \pi_\theta(p\mid h_n)
+
\nabla_\theta f(p,\theta)
\Big].
\label{eq:grad-of-expected-f}
\end{align}
In contrast, $\mathbb{E}_{p}[\nabla_\theta f(p,\theta)]$ alone is in general is not equal to
$\nabla_\theta \mathbb{E}_{p}[f]$, as it has only one of the two terms in the full gradient.

\subsection{Sampled KL estimators (K1 / K2 / K3) and their expected values}
\label{app:sampled-estimators-and-their-expected-values}
Define the single-sample estimators
\begin{align}
(\mathrm{K1})_n \triangleq \ell(p),\qquad
(\mathrm{K2})_n \triangleq \frac{1}{2}\,\ell(p)^2,\qquad
(\mathrm{K3})_n \triangleq \ell(p)+w(p)-1.
\label{eq:K123-def}
\end{align}
Then
\begin{align}
\mathbb{E}_{p}\big[(\mathrm{K1})_n\big]
&= \mathbb{E}_{p}\big[\ell(p)\big]
= \mathrm{KL}_n(\theta),
\label{eq:K1-value}
\\
\mathbb{E}_{p}\big[w(p)\big]
&= \sum_{j\in\mathcal V}\pi_\theta(j\mid h_n)\frac{\pi_{\mathrm{ref}}(j\mid h_n)}{\pi_\theta(j\mid h_n)}
= \sum_{j\in\mathcal V}\pi_{\mathrm{ref}}(j\mid h_n)
= 1,
\label{eq:Ew-one}
\\
\mathbb{E}_{p}\big[(\mathrm{K3})_n\big]
&= \mathbb{E}_{p}\big[\ell(p)\big] + \mathbb{E}_{p}\big[w(p)\big] - 1
= \mathrm{KL}_n(\theta),
\label{eq:K3-value}
\end{align}
while in general $\mathbb{E}_{p}\big[(\mathrm{K2})_n\big]=\frac12\mathbb{E}_{p}[\ell(p)^2]\neq \mathbb{E}_{p}[\ell(p)]$,
so K2 has incorrect expected values.

\subsection{Gradients of K1 / K2 / K3 under autodiff $\mathbb{E}_p[\nabla_\theta(\cdot)]$}
\label{app:k1-k2-k3-gradients}
Here $\nabla_\theta(\cdot)$ denotes the autodiff gradient of the sampled scalar while treating the sampled token $p$ as fixed.

\paragraph{K1: correct value but wrong gradient.}
Since $(\mathrm{K1})_n=\ell(p)=\log \pi_\theta(p\mid h_n)-\log \pi_{\mathrm{ref}}(p\mid h_n)$ and $\pi_{\mathrm{ref}}$ is fixed,
\begin{align*}
\nabla_\theta (\mathrm{K1})_n = \nabla_\theta \log \pi_\theta(p\mid h_n).
\end{align*}
Averaging over $p\sim\pi_\theta(\cdot\mid h_n)$ gives
\begin{align*}
\mathbb{E}_{p}\big[\nabla_\theta (\mathrm{K1})_n\big]
=
\sum_{j\in\mathcal V}\pi_\theta(j\mid h_n)\nabla_\theta\log\pi_\theta(j\mid h_n)
=
\sum_{j\in\mathcal V}\nabla_\theta\pi_\theta(j\mid h_n)
=
\nabla_\theta 1
=
0,
\end{align*}
which is zero.

\paragraph{K2: correct reverse-KL gradient but wrong value.}
We have $(\mathrm{K2})_n=\frac12 \ell(p)^2$, so
\begin{align*}
\nabla_\theta (\mathrm{K2})_n
=
\ell(p)\,\nabla_\theta \ell(p)
=
\ell(p)\,\nabla_\theta \log \pi_\theta(p\mid h_n).
\end{align*}
Therefore,
\begin{align}
\mathbb{E}_{p}\big[\nabla_\theta (\mathrm{K2})_n\big]
=
\mathbb{E}_{p}\Big[\ell(p)\,\nabla_\theta \log \pi_\theta(p\mid h_n)\Big].
\label{eq:K2-explicit-grad}
\end{align}

\paragraph{K3: yields the forward-KL gradient under $\mathbb{E}_p[\nabla_\theta(\cdot)]$.}
Since $(\mathrm{K3})_n=\ell(p)+w(p)-1$ and,
\begin{align*}
w(p)=\pi_{\mathrm{ref}}(p\mid h_n)/\pi_\theta(p\mid h_n) \quad
\nabla_\theta w(p)
=
-w(p)\,\nabla_\theta \log \pi_\theta(p\mid h_n), \\
\nabla_\theta (\mathrm{K3})_n
=
\nabla_\theta \ell(p) + \nabla_\theta w(p)
=
\big(1-w(p)\big)\,\nabla_\theta \log \pi_\theta(p\mid h_n).
\end{align*}
Averaging over $p\sim\pi_\theta(\cdot\mid h_n)$ gives
\begin{align}
\mathbb{E}_{p}\big[\nabla_\theta (\mathrm{K3})_n\big]
&=
\mathbb{E}_{p}\big[\nabla_\theta \log \pi_\theta(p\mid h_n)\big]
-
\mathbb{E}_{p}\big[w(p)\,\nabla_\theta \log \pi_\theta(p\mid h_n)\big] \nonumber\\
&=
0
-
\sum_{j\in\mathcal V}\pi_\theta(j\mid h_n)\frac{\pi_{\mathrm{ref}}(j\mid h_n)}{\pi_\theta(j\mid h_n)}\nabla_\theta\log\pi_\theta(j\mid h_n) \nonumber\\
&=
-\sum_{j\in\mathcal V}\pi_{\mathrm{ref}}(j\mid h_n)\,\nabla_\theta\log\pi_\theta(j\mid h_n)
=
\nabla_\theta \mathrm{KL} \big(\pi_{\mathrm{ref}}(\cdot\mid h_n)\,\|\,\pi_\theta(\cdot\mid h_n)\big),
\label{eq:K3-forwardKL-grad}
\end{align}
which is the forward-KL gradient, not the reverse-KL gradient.

\subsection{The $\boldsymbol{r}$-trick restores $\nabla_\theta \mathbb{E}_p[\cdot]$ from $\mathbb{E}_p[\nabla_\theta(\cdot)]$.}
\label{app:the-r-trick}
Define $r(\cdot)$ as:
\begin{align}
r(p,\theta) \triangleq \frac{\pi_\theta(p\mid h_n)}{\mathrm{sg}\big(\pi_\theta(p\mid h_n)\big)}.
\label{eq:r-def}
\end{align}
In the forward pass $r(p,\theta)=1$, but in backpropagation the denominator is treated as constant, hence
\begin{align*}
\nabla_\theta r(p,\theta)
=
r(p,\theta)\,\nabla_\theta \log \pi_\theta(p\mid h_n)
=
\nabla_\theta \log \pi_\theta(p\mid h_n).
\end{align*}
For any scalar $f(p,\theta)$, differentiating $r(p,\theta)f(p,\theta)$ gives
\begin{align*}
\nabla_\theta\big(r f\big)
=
f\,\nabla_\theta r + r\,\nabla_\theta f
=
f(p,\theta)\,\nabla_\theta \log \pi_\theta(p\mid h_n)
+
\nabla_\theta f(p,\theta),
\end{align*}
where we used $r=1$ in the forward pass so $r\,\nabla_\theta f=\nabla_\theta f$.
Taking $\mathbb{E}_p[\cdot]$ on both sides and comparing with \eqref{eq:grad-of-expected-f} yields
\begin{align}
\mathbb{E}_{p}\big[\nabla_\theta\big(r(p,\theta) f(p,\theta)\big)\big]
=
\nabla_\theta \mathbb{E}_{p}\big[f(p,\theta)\big].
\label{eq:rf-unbiased-gradient}
\end{align}
This identity holds even when $f$ depends on $\theta$ (via the $\nabla_\theta f$ term).

\subsection{${\color{RoyalBlue}\textbf{K3}^{\boldsymbol{++}}}$ yields unbiased reverse-KL values and gradients}
\label{app:k3-plus-plus-gradients}

Define
\begin{align}
(\mathrm{K3}^{++})_n \triangleq r(p,\theta)\,(\mathrm{K3})_n
= r(p,\theta)\big(\ell(p)+w(p)-1\big).
\end{align}
\emph{Value:} since $r=1$ in the forward pass,
\begin{align}
\mathbb{E}_{p}\big[(\mathrm{K3}^{++})_n\big]
=
\mathbb{E}_{p}\big[(\mathrm{K3})_n\big]
=
\mathrm{KL}_n(\theta)
\quad\text{by \eqref{eq:K3-value}.}
\end{align}
\emph{Gradient:} by \eqref{eq:rf-unbiased-gradient} with $f=(\mathrm{K3})_n$,
\begin{align}
\mathbb{E}_{p}\big[\nabla_\theta(\mathrm{K3}^{++})_n\big]
=
\nabla_\theta \mathbb{E}_{p}\big[(\mathrm{K3})_n\big]
=
\nabla_\theta \mathrm{KL}_n(\theta),
\end{align}
because $\mathbb{E}_p[(\mathrm{K3})_n]=\mathrm{KL}_n(\theta)$.

\subsubsection{Alternative Proof}
\label{app:k3pp-alt-proof}
Alternatively, we compute the $\mathrm{K3}^{++}$ gradients in a brute force manner by directly differentiating using the product rule:
\begin{align}
\nabla_\theta (\mathrm{K3}^{++})_n
&=
\big(\ell+w-1\big)\,\nabla_\theta r
+
r\big(\nabla_\theta \ell + \nabla_\theta w\big).
\label{eq:k3pp-grad-raw}
\end{align}
Now compute each piece (all conditioned on the sampled $p$ and fixed $h_n$):
\begin{align}
\nabla_\theta r
&=
\nabla_\theta \log \pi_\theta(p\mid h_n),
\label{eq:grad-r}
\\
\nabla_\theta \ell
&=
\nabla_\theta \log \pi_\theta(p\mid h_n),
\label{eq:grad-ell}
\\
\nabla_\theta w
&=
\nabla_\theta\!\left(\frac{\pi_{\mathrm{ref}}(p\mid h_n)}{\pi_\theta(p\mid h_n)}\right)
=
-w\,\nabla_\theta \log \pi_\theta(p\mid h_n).
\label{eq:grad-w}
\end{align}
Substitute \eqref{eq:grad-r}--\eqref{eq:grad-w} into \eqref{eq:k3pp-grad-raw}. Using that $r=1$ in the forward pass,
\begin{align}
\nabla_\theta (\mathrm{K3}^{++})_n =
\big(\ell+w-1\big)\,\nabla_\theta \log \pi_\theta(p\mid h_n) + \big(1-w\big)\,\nabla_\theta \log \pi_\theta(p\mid h_n) = \ell(p)\,\nabla_\theta \log \pi_\theta(p\mid h_n)
\label{eq:k3pp-grad-simplified}
\end{align}
Taking expectation over the sampled token $p\sim\pi_\theta(\cdot\mid h_n)$ on both sides of \eqref{eq:k3pp-grad-simplified} yields
\begin{align*}
\mathbb{E}_{p\sim\pi_\theta(\cdot\mid h_n)}\big[\nabla_\theta (\mathrm{K3}^{++})_n\big]
&=
\mathbb{E}_{p\sim\pi_\theta(\cdot\mid h_n)}
\Big[
\ell(p)\,\nabla_\theta \log \pi_\theta(p\mid h_n)
\Big]
=
\nabla_\theta \mathrm{KL}_n(\theta),
\end{align*}

\subsection{${\color{RoyalBlue}\textbf{K4}}$ yields unbiased reverse-KL values and gradients}
\label{app:k4-gradients}
Define
\begin{align}
(\mathrm{K4})_n \triangleq r(p,\theta)\cdot \mathrm{sg} \big((\mathrm{K1})_n\big)
=
r(p,\theta)\cdot \mathrm{sg} \big(\ell(p)\big).
\end{align}
\emph{Value:} since $r=1$ in the forward pass,
\begin{align}
\mathbb{E}_{p}\big[(\mathrm{K4})_n\big]
=
\mathbb{E}_{p}\big[\ell(p)\big]
=
\mathrm{KL}_n(\theta).
\end{align}
\emph{Gradient:} $\mathrm{sg}(\ell(p))$ is treated as constant in backprop, so differentiating yields
\begin{align}
\nabla_\theta(\mathrm{K4})_n
=
\mathrm{sg} \big(\ell(p)\big)\,\nabla_\theta r(p,\theta)
=
\mathrm{sg} \big(\ell(p)\big)\,\nabla_\theta \log \pi_\theta(p\mid h_n).
\end{align}
Taking expectation over $p\sim\pi_\theta(\cdot\mid h_n)$ gives
\begin{align}
\mathbb{E}_{p}\big[\nabla_\theta(\mathrm{K4})_n\big]
=
\mathbb{E}_{p}\Big[\ell(p)\,\nabla_\theta \log \pi_\theta(p\mid h_n)\Big]
=
\nabla_\theta \mathrm{KL}_n(\theta),
\end{align}

\newpage

\section{Derivation of Sampled Forward KL Estimator}
\label{app:forward-kl-k5}

Similar to before, we focus on a fixed token position $n$ with prefix $h_n=(x,y_{0:n-1})$.
For brevity, write
\begin{align*}
\pi_\theta(j) \triangleq \pi_\theta(j\mid h_n),
\qquad
\pi_{\mathrm{ref}}(j) \triangleq \pi_{\mathrm{ref}}(j \mid h_n).
\end{align*}
At this position we sample a token
\begin{align*}
p \sim \pi_\theta(\cdot\mid h_n),
\end{align*}
and define the importance ratio and stop-gradient ratio
\begin{align*}
w(p) \triangleq \frac{\pi_{\mathrm{ref}}(p)}{\pi_\theta(p)},
\qquad
r(p,\theta) \triangleq \frac{\pi_\theta(p)}{\mathrm{sg}(\pi_\theta(p))}.
\end{align*}
The expectation over $p$ is also same as before:
\begin{align*}
\mathbb{E}_{p}\big[\cdot\big]
\triangleq
\mathbb{E}_{p\sim \pi_\theta(\cdot\mid h_n)}\big[\cdot\big]
\end{align*}

\subsection{Forward KL and its gradient}
\label{app:forward-kl-and-its-gradient}
The token-wise forward KL is
\begin{align*}
\mathrm{FKL}_n(\theta)
\triangleq
\mathrm{KL} \big(\pi_{\mathrm{ref}}(\cdot\mid h_n)\,\|\,\pi_\theta(\cdot\mid h_n)\big)
=
\sum_{j\in\mathcal V} \pi_{\mathrm{ref}}(j)\log\frac{\pi_{\mathrm{ref}}(j)}{\pi_\theta(j)}.
\end{align*}
Differentiating w.r.t.\ $\theta$ (note $\pi_{\mathrm{ref}}$ does not depend on $\theta$),
\begin{align}
\nabla_\theta \mathrm{FKL}_n(\theta)
&=
\sum_{j\in\mathcal V} \pi_{\mathrm{ref}}(j)\,\nabla_\theta\!\left(\log\pi_{\mathrm{ref}}(j)-\log\pi_\theta(j)\right)
=
-\sum_{j\in\mathcal V} \pi_{\mathrm{ref}}(j)\,\nabla_\theta \log\pi_\theta(j).
\label{eq:FKL-grad-exact}
\end{align}
We can rewrite \eqref{eq:FKL-grad-exact} as an expectation over $p\sim \pi_\theta(\cdot\mid h_n)$ using $w(p)$:
\begin{align}
-\sum_{j\in\mathcal V} \pi_{\mathrm{ref}}(j)\,\nabla_\theta \log\pi_\theta(j)
&=
-\sum_{j\in\mathcal V} \pi_{\theta}(j)\frac{\pi_{\mathrm{ref}}(j)}{\pi_{\theta}(j)}\,\nabla_\theta \log\pi_\theta(j)
=
-\mathbb{E}_{p}\!\Big[w(p)\,\nabla_\theta\log\pi_\theta(p)\Big].
\label{eq:FKL-grad-importance}
\end{align}
Finally, we use the score-mean identity $\mathbb{E}_{p}[\nabla_\theta\log\pi_\theta(p)]=0$ to obtain an equivalent form:
\begin{align}
\nabla_\theta \mathrm{FKL}_n(\theta)
=
-\mathbb{E}_{p}\!\Big[w(p)\,\nabla_\theta\log\pi_\theta(p)\Big]
=
\mathbb{E}_{p}\!\Big[(1-w(p))\,\nabla_\theta\log\pi_\theta(p)\Big].
\label{eq:FKL-grad-1-minus-w}
\end{align}

\subsection{The {\color{RoyalBlue}\textbf{K5}} estimator has unbiased value}
\label{app:the-k5-estimator-has-unbiased-value}
We propose the forward-KL estimator
\begin{align}
(\mathbf{K5})_n \triangleq \mathrm{sg}(w(p))\log w(p) + \log r(p,\theta).
\label{eq:K5-def}
\end{align}
For the KL value estimation, note that $r(p,\theta)=1$ in the forward pass, so $\log r(p,\theta)=0$ in value. Therefore
\begin{align}
\mathbb{E}_{p}\big[(\mathbf{K5})_n\big]
&=
\mathbb{E}_{p}\big[w(p)\log w(p)\big]
=
\sum_{j\in\mathcal V}\pi_\theta(j)\frac{\pi_{\mathrm{ref}}(j)}{\pi_\theta(j)}\log\frac{\pi_{\mathrm{ref}}(j)}{\pi_\theta(j)} \nonumber\\
&=
\sum_{j\in\mathcal V}\pi_{\mathrm{ref}}(j)\log\frac{\pi_{\mathrm{ref}}(j)}{\pi_\theta(j)}
=
\mathrm{FKL}_n(\theta),
\label{eq:K5-unbiased-value}
\end{align}
which is the token-wise forward KL.

\subsection{The {\color{RoyalBlue}\textbf{K5}} estimator has unbiased gradient}
\label{app:k5-estimator-has-unbiased-gradient}

We next show that $\mathbb{E}_{p}[\nabla_\theta (\mathbf{K5})_n] = \nabla_\theta \mathrm{FKL}_n(\theta)$.

First, compute the gradient of each term in \eqref{eq:K5-def}. Since $\mathrm{sg}(w(p))$ is treated as a constant in backprop,
\begin{align*}
\nabla_\theta\big(\mathrm{sg}(w(p))\log w(p)\big)
=
\mathrm{sg}(w(p))\,\nabla_\theta \log w(p).
\end{align*}
Because $\log w(p)=\log\pi_{\mathrm{ref}}(p)-\log\pi_\theta(p)$ and $\pi_{\mathrm{ref}}$ is fixed,
\begin{align*}
\nabla_\theta \log w(p) = -\nabla_\theta\log\pi_\theta(p) \qquad
\nabla_\theta \log r(p,\theta) = \nabla_\theta\log\pi_\theta(p).
\end{align*}
Combining them yields the per-sample gradient
\begin{align}
\nabla_\theta (\mathbf{K5})_n
=
-\mathrm{sg}(w(p))\,\nabla_\theta\log\pi_\theta(p) + \nabla_\theta\log\pi_\theta(p)
=
(1-w(p))\,\nabla_\theta\log\pi_\theta(p).
\end{align}
Taking expectation over $p\sim\pi_\theta(\cdot\mid h_n)$ gives
\begin{align}
\mathbb{E}_{p}\big[\nabla_\theta (\mathbf{K5})_n\big]
=
\mathbb{E}_{p}\!\Big[(1-w(p))\,\nabla_\theta\log\pi_\theta(p)\Big]
=
\nabla_\theta \mathrm{FKL}_n(\theta),
\label{eq:K5-unbiased-grad}
\end{align}
where the last equality follows from \eqref{eq:FKL-grad-1-minus-w}.

\subsection{Variance reduction in {\color{RoyalBlue}\textbf{K5}} gradients}
\label{app:baseline-1-minus-w}

We start from the exact forward-KL gradient written in the importance-weighted form (Eq.~\eqref{eq:FKL-grad-importance}):
\begin{align*}
\nabla_\theta \mathrm{FKL}_n(\theta)
=
-\mathbb{E}_{p \sim \pi_\theta(\cdot\mid h_n)}
\Big[
w(p)\,\nabla_\theta \log \pi_\theta(p\mid h_n)
\Big],
\qquad
w(p)=\frac{\pi_{\mathrm{ref}}(p\mid h_n)}{\pi_\theta(p\mid h_n)}.
\end{align*}
A key observation is that under the same sampling distribution $p\sim\pi_\theta(\cdot\mid h_n)$,
the weight $w(p)$ has mean $1$:
\begin{align*}
\mathbb{E}_{p \sim \pi_\theta(\cdot\mid h_n)}[w(p)]
&=
\sum_{j\in\mathcal V}\pi_\theta(j\mid h_n)\frac{\pi_{\mathrm{ref}}(j\mid h_n)}{\pi_\theta(j\mid h_n)}
=
\sum_{j\in\mathcal V}\pi_{\mathrm{ref}}(j\mid h_n)
=
1.
\end{align*}
Therefore, we can subtract this constant baseline from $w(p)$ without changing the expectation of the gradient estimator.
Indeed, using the score-mean identity $\mathbb{E}_{p \sim \pi_\theta(\cdot\mid h_n)}[\nabla_\theta \log \pi_\theta(p\mid h_n)]=0$, we have
\begin{align*}
-\mathbb{E}_{p}\!\Big[w(p)\,\nabla_\theta \log \pi_\theta(p\mid h_n)\Big]
&=
-\mathbb{E}_{p}\!\Big[(w(p)-1)\,\nabla_\theta \log \pi_\theta(p\mid h_n)\Big]
-\underbrace{\mathbb{E}_{p}\!\Big[1\cdot \nabla_\theta \log \pi_\theta(p\mid h_n)\Big]}_{=\,0 }
\nonumber\\
&=
\mathbb{E}_{p}\!\Big[(1-w(p))\,\nabla_\theta \log \pi_\theta(p\mid h_n)\Big].
\end{align*}
Thus, replacing $-w(p)$ with $(1-w(p))$ centered around $0$ preserves an unbiased gradient while typically reducing variance, known as the use of a \textit{baseline} in policy gradient methods.

\newpage

\section{Bias-Variance Tradeoff in Top-k KL: An Illustrative Synthetic Setup}
\label{app:topk_kl_sim_exp}

This appendix specifies the synthetic setting used to produce Fig.~\ref{fig:topk_kl_sim}. The goal is to isolate the \emph{bias--variance} behavior of the \textbf{Top-k Reverse KL} estimator in a controlled regime that reflects how KL penalties are applied to LLM policies, while removing confounders such as long-horizon autoregressive dependencies.

\subsection{Token-wise i.i.d. simplification}
\label{app:topk_kl_sim_exp:iid_tokens}

Let $\pi_\theta(\cdot\mid h_n)$ be an LLM policy over the vocabulary $\mathcal V$ at token position $n$ given history $h_n=(x,y_{<n})$, and let $\pi_{\text{ref}}(\cdot\mid h_n)$ be the reference policy.
The token-level reverse KL is:
\begin{align}
\KL \big(\pi_\theta(\cdot\mid h_n)\,\|\,\pi_{\text{ref}}(\cdot\mid h_n)\big)
=\sum_{j\in\mathcal V}\pi_\theta(j\mid h_n)\log\frac{\pi_\theta(j\mid h_n)}{\pi_{\text{ref}}(j\mid h_n)}.
\label{eq:app_revkl_token}
\end{align}
To focus purely on the estimator (rather than the dynamics that produce $h_n$), we adopt the following simplification:

\begin{quote}
\textbf{(i.i.d.\ token-position assumption)} Each token position contributes an i.i.d.\ draw from a common distribution over $(\pi,\pi_{\text{ref}})$, and we analyze a representative position whose ground-truth objective is the reverse KL in \eqref{eq:app_revkl_token}. In other words, we ignore autoregressive coupling across positions when constructing synthetic tasks.
\end{quote}

\subsection{Synthetic task and metric}
\label{app:topk_kl_sim_exp:task-and-metric}

Under this assumption, each token position is treated as a draw of a single-step categorical policy pair $(\pi,\pi_{\text{ref}})$ over a fixed vocabulary $\mathcal V$.
We fix $|\mathcal V|=32000$ and sample
\begin{align}
\pi_{\text{ref}} &= \mathrm{softmax}(\tilde z_{\text{ref}}), \qquad \tilde z_{\text{ref}}\sim\mathcal N(0,I_V),\\
\pi &= \mathrm{softmax}(s\cdot \tilde z), \qquad \tilde z\sim\mathcal N(0,I_V),\; s>0.
\end{align}

\paragraph{Controlling concentration via top-32 mass.}
Define the top-32 probability mass
\begin{align}
M_{32}(\pi)\;\triangleq\;\sum_{j\in \mathrm{TopK}(\pi,32)} \pi(j).
\end{align}
We tune the logit scale $s$ so that $M_{32}(\pi)$ matches one of the target values
\begin{align}
M_{32}(\pi)\approx m,\qquad m\in\{0.2,\,0.5,\,0.8,\,0.9\},
\end{align}
thereby probing a range of entropy regimes from diffuse to highly concentrated policies.

\paragraph{Estimators and error metric.}
We compare three per-token gradient estimators for the reverse-KL penalty:
(i) \emph{sampled-only} reverse KL (fully Monte Carlo),
(ii) \emph{truncated-only} (exact top-k head without tail correction; biased), and
(iii) \emph{Top-k} reverse KL (exact head plus masked sampled tail; unbiased).
We sweep
\begin{align*}
K\in\{4,8,16,32,64,128,256\},
\qquad
B\in\{1,2,4,8,16,32,64,128,256,512,1024,2048,4096,8192,16384,32768\},
\end{align*}
where $B$ denotes the number of i.i.d.\ samples drawn from $\pi$ (``batch tokens'') used by the sampled components of the estimators. For each estimator, we measure the relative root-mean-squared error (RMSE) of the per-token gradient:
\begin{align}
\mathrm{RelRMSE}
\;\triangleq\;
\frac{\sqrt{\mathbb E\big[\|\hat g - g_{\text{true}}\|_2^2\big]}}{\|g_{\text{true}}\|_2},
\label{eq:app_relrmse_def_paper}
\end{align}
where $g_{\text{true}}$ is the exact gradient of \eqref{eq:app_revkl_token} with respect to the token logits at that position, and $\hat g$ is the corresponding estimator. The expectation in \eqref{eq:app_relrmse_def_paper} is approximated by Monte Carlo sampling and averaging over many independently generated synthetic tasks.

\paragraph{Critical sample size.}
For each $K$, the truncated-only estimator induces a bias floor (its error does not vanish with more samples since it omits the tail).
The Top-k estimator is unbiased but has sampling variance from the tail correction.
We define the \emph{critical sample size} as the smallest $B$ beyond which Top-k KL achieves lower gradient error than Truncated KL, i.e., the intersection point between the Top-k RMSE curve and the truncated-only bias floor.

\clearpage
\newpage

\section{Top-k $\boldsymbol{f}$-Divergence Estimator}
\label{app:topk-fdiv}

We generalize the Top-$k$ ``truncated exact term + masked sampled tail'' construction from KL to a broad class of
$f$-divergences, while retaining (i) unbiased value estimates and (ii) unbiased gradients under the convention
$\mathbb{E}[\nabla_\theta(\cdot)]$ (i.e., the expectation of autodiff gradients of sampled scalars).

\paragraph{Setup and notation.}
Fix a token position $n$ and prefix $h_n=(x,y_{0:n-1})$.
Let the current policy at this position be
\begin{align*}
P_\theta(j) \triangleq \pi_\theta(j\mid h_n), \qquad j\in\mathcal V,
\end{align*}
and let the reference policy be
\begin{align*}
Q(j) \triangleq \pi_{\mathrm{ref}}(j\mid h_n),
\end{align*}
which is treated as independent of $\theta$.
We use an explicit expectation shorthand:
\begin{align*}
\mathbb{E}_{p}\big[\cdot\big]
\triangleq
\mathbb{E}_{p\sim P_\theta(\cdot)}\big[\cdot\big]
\end{align*}

\subsection{$f$-divergence and its gradients}
\label{app:f-divergences-and-its-gradients}

Let $f:\mathbb{R}_{+}\to\mathbb{R}$ be convex with $f(1)=0$. The (discrete) $f$-divergence is
\begin{align}
D_f(P_\theta\|Q)
\triangleq
\sum_{j\in\mathcal V} Q(j)\,
f\!\left(\frac{P_\theta(j)}{Q(j)}\right).
\label{eq:fdiv-def}
\end{align}
Define the likelihood ratio and its inverse
\begin{align}
t(j) \triangleq \frac{P_\theta(j)}{Q(j)}, \qquad
w(j) \triangleq \frac{Q(j)}{P_\theta(j)} = \frac{1}{t(j)}.
\label{eq:t-w-def}
\end{align}
Eq.~\eqref{eq:fdiv-def} can be written as an expectation under $Q$:
\begin{align}
D_f(P_\theta\|Q)
=
\mathbb{E}_{j\sim Q}\big[f(t(j))\big].
\label{eq:fdiv-Q-exp}
\end{align}
It also admits an equivalent importance-weighted expectation under $P_\theta$.
Define
\begin{align}
g_f(w) \triangleq w\, f(1/w).
\label{eq:gf-def}
\end{align}
Then
\begin{align}
D_f(P_\theta\|Q)
&=
\sum_{j\in\mathcal V} Q(j)\, f\!\left(\frac{P_\theta(j)}{Q(j)}\right)
=
\sum_{j\in\mathcal V} P_\theta(j)\,\frac{Q(j)}{P_\theta(j)}\,
f\!\left(\frac{P_\theta(j)}{Q(j)}\right) \nonumber\\
&=
\sum_{j\in\mathcal V} P_\theta(j)\, g_f\!\left(w(j)\right)
=
\mathbb{E}_{p}\big[g_f(w(p))\big],
\label{eq:fdiv-P-exp}
\end{align}
where $p\sim P_\theta(\cdot)$ and $w(p)=Q(p)/P_\theta(p)$.

\subsubsection{Exact gradient w.r.t.\ $\theta$ (reference policy fixed)}
Since $Q$ is fixed, differentiating \eqref{eq:fdiv-def} yields
\begin{align}
\nabla_\theta D_f(P_\theta\|Q)
&=
\sum_{j\in\mathcal V} Q(j)\, f'(t(j))\,\nabla_\theta\!\left(\frac{P_\theta(j)}{Q(j)}\right)
=
\sum_{j\in\mathcal V} f'(t(j))\,\nabla_\theta P_\theta(j)
\nonumber\\
&=
\sum_{j\in\mathcal V} P_\theta(j)\, f'(t(j))\,\nabla_\theta \log P_\theta(j)
=
\mathbb{E}_{p}\Big[f'(t(p))\,\nabla_\theta \log P_\theta(p)\Big].
\label{eq:fdiv-grad-score}
\end{align}
Eq.~\eqref{eq:fdiv-grad-score} is the policy-gradient-shaped form used throughout this appendix.

\subsection{A generic, unbiased, sampled $f$-divergence estimator}
\label{app:a-general-sampled-f-divergence-estimator}

We define the following quantities as before:
\begin{align*}
r(p) \triangleq \frac{P_\theta(p)}{\mathrm{sg}\big(P_\theta(p)\big)} \qquad
g_f(w(p)) \triangleq w(p)\,f(1/w(p))
\qquad
w(p)=\frac{Q(p)}{P_\theta(p)}
\end{align*}
Then we have our sampled $f$-divergence estimator, $\mathbf{K}f^{++}$:
\begin{align}
(\mathbf{K}f^{++})_n \triangleq r(p)\,g_f(w(p)),
\qquad p\sim P_\theta(\cdot).
\label{eq:Kfpp-def}
\end{align}
Which in effect becomes:
\begin{align}
    \boxed{(\mathbf{K}f^{++})_n = \mathrm{sg} \bigg(\frac{ Q(p) }{ P_{\theta}(p) } \bigg) f \bigg( \frac{ P_{\theta}(p) }{ Q(p) } \bigg) = \mathrm{sg}(w)\, f(1/w) }
    \label{eq:f-div-estimator-canonical}
\end{align}
Another way to write this expression is $(\mathbf{K}f^{++})_n=\mathrm{sg} \left(\frac{\pi_{\text{ref}}(p \mid h_{n})}{ \pi_{\theta}(p \mid h_{n}) } \right) f \left( \frac{ \pi_{\theta}(p\mid h_{n}) }{ \pi_{\text{ref}}(p\mid h_{n}) } \right)$.

\paragraph{Unbiased value.}
Since $r=1$ in the forward pass, from \eqref{eq:fdiv-P-exp} we have
\begin{align}
\mathbb{E}_{p}\big[(\mathbf{K}f^{++})_n\big]
=
\mathbb{E}_{p}\big[g_f(w(p))\big]
=
D_f(P_\theta\|Q).
\label{eq:Kfpp-unbiased-value}
\end{align}

\paragraph{Unbiased gradient under $\mathbb{E}[\nabla_\theta(\cdot)]$.}
Differentiate \eqref{eq:Kfpp-def} using product rule:
\begin{align}
\nabla_\theta(\mathbf{K}f^{++})_n
=
g_f(w(p))\,\nabla_\theta r(p) + r(p)\,\nabla_\theta g_f(w(p))
=
g_f(w(p))\,\nabla_\theta \log P_\theta(p)
+
\nabla_\theta g_f(w(p))
\label{eq:Kfpp-grad}
\end{align}
Taking expectation over $p\sim P_\theta(\cdot)$ gives
\begin{align}
\mathbb{E}_{p}\big[\nabla_\theta(\mathbf{K}f^{++})_n\big]
=
\nabla_\theta \mathbb{E}_{p}\big[g_f(w(p))\big]
=
\nabla_\theta D_f(P_\theta\|Q),
\label{eq:Kfpp-unbiased-grad}
\end{align}
where the last equality follows from \eqref{eq:fdiv-P-exp}.
Thus $(\mathbf{K}f^{++})_n$ provides both unbiased values and unbiased gradients for any $f$-divergence.

\subsection{Top-k $f$-divergence estimator: A generalization of Top-k KL estimator}
\label{app:top-k-f-divergence}

Let $S\subseteq\mathcal V$ be any index set of size $K$ (e.g., $S=\mathrm{Topk}(P_\theta)$ or $S=\mathrm{Topk}(Q)$).
Being an unbiased estimator does not require $S$ to be Top-$K$; Top-$K$ is a variance/compute heuristic.
\paragraph{Truncated exact term.}
Define the truncated $f$-divergence (computed exactly given logits on $S$):
\begin{align}
\boxed{(\widehat{D}_{f,\mathrm{trun}})_n
\triangleq
\sum_{j\in S} Q(j)\,f\!\left(\frac{P_\theta(j)}{Q(j)}\right)}
\label{eq:fdiv-trun}
\end{align}
\paragraph{Masked sampled tail term.}
Sample $p\sim P_\theta(\cdot)$ and define the masked tail estimator
\begin{align}
\boxed{(\widehat{D}_{f,\mathrm{tail}})_n
\triangleq
\mathbf{1}\{p\notin S\}\,(\mathbf{K}f^{++})_n}
=
\mathbf{1}\{p\notin S\}\,r(p)\,g_f\!\left(\frac{Q(p)}{P_\theta(p)}\right)
\label{eq:fdiv-tail}
\end{align}

\paragraph{Top-k $f$-divergence estimator.}
We define the Top-$K$ $f$-divergence estimator
\begin{align}
\boxed{(\mathrm{TopK}\text{-}D_f)_n
\triangleq
(\widehat{D}_{f,\mathrm{trun}})_n
+
(\widehat{D}_{f,\mathrm{tail}})_n}
\label{eq:topk-fdiv-final}
\end{align}

\paragraph{Unbiased value.}
Using \eqref{eq:fdiv-P-exp} and the definition of $g_f$,
\begin{align}
\mathbb{E}_{p}\big[(\widehat{D}_{f,\mathrm{tail}})_n\big]
&=
\sum_{j\in\mathcal V} P_\theta(j)\,\mathbf{1}\{j\notin S\}\, g_f\!\left(\frac{Q(j)}{P_\theta(j)}\right)
=
\sum_{j\notin S} Q(j)\, f\!\left(\frac{P_\theta(j)}{Q(j)}\right),
\end{align}
so
\begin{align}
\mathbb{E}_{p}\big[(\mathrm{TopK}\text{-}D_f)_n\big]
=
(\widehat{D}_{f,\mathrm{trun}})_n
+
\mathbb{E}_{p}\big[(\widehat{D}_{f,\mathrm{tail}})_n\big]
=
D_f(P_\theta\|Q).
\label{eq:topk-fdiv-unbiased-value}
\end{align}

\paragraph{Unbiased gradient.}
Since $(\widehat{D}_{f,\mathrm{trun}})_n$ is computed exactly on $S$, it has the correct gradient contribution from indices in $S$. For the tail term,
\begin{align}
\mathbb{E}_{p}\big[\nabla_\theta (\widehat{D}_{f,\mathrm{tail}})_n\big]
=
\mathbb{E}_{p}\big[\mathbf{1}\{p\notin S\}\,\nabla_\theta (\mathbf{K}f^{++})_n\big]
=
\nabla_\theta \sum_{j\notin S} Q(j)\, f\!\left(\frac{P_\theta(j)}{Q(j)}\right),
\end{align}
where the last step follows by expanding the expectation as a sum over $j\notin S$ and using the property
\eqref{eq:Kfpp-unbiased-grad} of our sampled $f$-divergence estimator $\mathbf{K}f^{++}$ restricted to the tail indices. Therefore,
\begin{align}
\mathbb{E}_{p}\big[\nabla_\theta (\mathrm{TopK}\text{-}D_f)_n\big]
=
\nabla_\theta D_f(P_\theta\|Q).
\label{eq:topk-fdiv-unbiased-grad}
\end{align}

\paragraph{A note on variance reduction in Forward and Reverse KL gradient estimators.}
\begin{itemize}
    \item The Forward KL estimator derived from the standard $f$-divergence formula in Eq \eqref{eq:f-div-estimator-canonical} is $\mathrm{sg}(w) \log w$, which differs from our {\color{RoyalBlue}\textbf{K5}} estimator $\mathrm{sg}(w)\log w + \log r$. This is explained in \S\ref{app:baseline-1-minus-w}: the additional term $\log r$ is added as a \textit{baseline} to reduce gradient variance, where the baseline is $\E[\mathrm{sg}(w)]=1$.
    \item The Reverse KL estimator derived from Eq \eqref{eq:f-div-estimator-canonical} is $r\cdot (-\log w)$; this also differs from our {\color{RoyalBlue}\textbf{K4}} estimator $r\cdot \mathrm{sg}(-\log w)$. The reason is similar: for reverse KL, the baseline is $\E[-\log w]=0$, and $\E[-\nabla_{\theta} \log w] = \E[ \nabla_{\theta} \log \pi_{\theta}(y|x) ]=0$. As a result, doing a stop gradient operation on $(-\log w)$ keeps the gradients unbiased while reducing variance, so we do a stop gradient on the $(-\log w)$ term in {\color{RoyalBlue}\textbf{K4}}.
\end{itemize}

\begin{table}[h]
\vspace{10pt}
\centering
\small
\setlength{\tabcolsep}{8pt}
\vspace{15pt}
\caption{
\centering
Top-k $f$-divergence estimators.
\textbf{Let} $P_\theta(j)=\pi_\theta(j| h_n),\ Q(j)=\pi_{\mathrm{ref}}(j| h_n),\ t_j=P_\theta(j)/Q(j)$. \\ {Sample} $p\sim P_\theta(\cdot)$.\quad  We compute $r=P_{\theta}(p) / \mathrm{sg}( P_{\theta}(p))$, \quad $w=Q(p)/P_\theta(p)$. \quad Note that $\mathrm{sg}(w)/w=r$.
}
\vspace{5pt}
\begin{tabularx}{0.9\linewidth}{lll}
\toprule
& $\boldsymbol{f(t)}$ (so $D_f=\sum_j Q(j)\,f(t_j)$)
& \textbf{Sampled $f$-divergence estimator $=\mathrm{sg}(w)\, f(1/w)$} \\
\midrule

Forward KL
& $-\log t$
& $\mathrm{sg}(w) \cdot \log w$ \\

Reverse KL
& $t\log t$
& $ r\cdot ( -\log w )$ \\

Pearson $\chi^2$
& $(t-1)^2$
& $r \cdot \frac{(1-w)^2}{w}$ \\

Neyman $\chi^2$
& $\dfrac{(t-1)^2}{t}$
& $r \cdot (1-w)^2$ \\

Squared Hellinger
& $\tfrac12(\sqrt{t}-1)^2$
& $r \cdot \tfrac12(1-\sqrt{w})^2$ \\

Jensen--Shannon (JS)
& $\displaystyle {\textstyle\frac{1}{2}} \left(t\log t-(t+1)\log\!\Big(\frac{t+1}{2}\Big) \right)$
& $\displaystyle r \cdot {\textstyle\frac{1}{2}} \Big[-\log w -(1+w)\log\!\Big(\frac{1+w}{2w}\Big)\Big]$ \\

Total variation (TV)
& $\tfrac12|t-1|$
& $r \cdot \tfrac12|1-w(p)|$ \\

$\alpha$-divergence
& $\displaystyle \frac{t^\alpha-1-\alpha(t-1)}{\alpha(\alpha-1)},\ \alpha\neq 0,1$
& $\displaystyle r\cdot
\frac{w^{1-\alpha}+(\alpha-1)w -\alpha}{\alpha(\alpha-1)}$ \\

\bottomrule
\end{tabularx}
\label{tab:fdiv-compact}
\end{table}

\clearpage
\newpage

\section{Optimal Policies under Forward vs Reverse KL Regularization}
\label{app:token-level-rl}

We now study the optimal policies under different types of regularizers and see if the optimal policies are interchangeable. 

The RL objectives under reverse and forward KL regularization are:
\begin{align}
    J &= \mathbb{E}_{y \sim \pi_{\theta}(\cdot \mid x)}[R(x, y)] - \beta\, \mathrm{KL}(\pi_{\theta}(\cdot \mid x) \| \pi_{\text{ref}}(\cdot \mid x) ) \label{eq:reverse-kl-obj} \\
    J_{f} &= \mathbb{E}_{y\sim \pi_{\theta}(\cdot \mid x)}[R(x, y)] - \beta \mathrm{KL}(\pi_{\text{ref}}(\cdot \mid x) \| \pi_{\theta}(\cdot \mid x) ) \label{eq:forward-kl-obj}
\end{align}

\paragraph{Optimal Policy under Reverse KL}
It is well known that the optimal policy for objective \eqref{eq:reverse-kl-obj} is \cite{reps, equivalence-between-pg-and-sq, mpo}:
\begin{align}
    \boxed{\pi^{*}(y|x) = \pi_{\text{ref}}(y|x) \, \dfrac{\exp(R(x,y) / \beta)}{Z(x)}} \qquad Z(x) = {\sum}_{y} \pi_{\text{ref}}(y|x) \exp(R(x,y)/\beta)
\end{align}

\paragraph{Optimal Policy under Forward KL}
We derive the optimal policy for objective \eqref{eq:forward-kl-obj}. Expanding the forward KL:
\begin{align}
    J_f &= {\sum}_{y} \pi_\theta(y|x) R(x, y) + \beta\; {\sum}_{y} \pi_{\text{ref}}(y|x) \log \pi_\theta(y|x) + \text{const}
\end{align}
where the constant absorbs terms independent of $\pi_\theta$. This objective decomposes into an RL term (first sum) and a behavioral cloning (BC) term (second sum). Forming the Lagrangian:
\begin{align*}
    \mathcal{L} &= {\sum}_{y} \pi_\theta(y|x) R(x, y) + \beta \; {\sum}_{y} \pi_{\text{ref}}(y|x) \log \pi_\theta(y|x) - \lambda \left( {\sum}_{y} \pi_\theta(y|x) - 1 \right) \\
    \frac{\partial \mathcal{L}}{\partial \pi_\theta(y| x)} &= R(x,y) + \beta \; \frac{\pi_{\text{ref}}(y|x)}{\pi_\theta(y|x)} - \lambda = 0 
    \\
    \pi_\theta(y\mid x) &= \frac{\beta \; \pi_{\text{ref}}(y\mid x)}{\lambda - R(x,y)}
\end{align*}
For this to be a valid probability distribution, we need $\lambda > \max_{y} R(x,y)$. Denote $Z^*(x) := \lambda$, we require $Z^*(x)$ to satisfy:
\begin{align}
    {\sum}_{y} \frac{\pi_{\text{ref}}(y \mid x)}{Z^*(x) - R(x, y)} = \frac{1}{\beta} \label{eq:zstar-constraint}
\end{align}
Equation \eqref{eq:zstar-constraint} implicitly defines $Z^*(x)$ as a function of the reward distribution as well. The optimal policy is:
\begin{align}
    \boxed{\pi_{f}^{*}(y\mid x) = \pi_{\text{ref}}(y \mid x) \, \dfrac{\beta}{Z^{*}(x) - R(x,y)}}
\end{align}
A similar result was obtained in \cite{mcts-as-rpo} under a different context.

\subsection{Reward transformation for equivalent optima between forward KL and reverse KL}
\label{app:reward-transformation}

We now derive the reward transformation that makes the two objectives yield the same optimal policy.
\begin{align}
    \pi^*(y\mid x) = \pi_f^*(y \mid x) \qquad \frac{\pi_{\text{ref}}(y|x) \exp(\tilde{R}/\beta)}{Z(x)} = \frac{\beta \; \pi_{\text{ref}}(y \mid x)}{Z^*(x) - R(x, y)}
\end{align}
Therefore, the optimal policy under {reward function $R$ and forward KL regularizer} is equivalent to the counterpart under {reward function $\tilde{R}$ and reverse KL regularizer}:
\begin{align}
    \text{Forward KL (with $R$)}\rightarrow\text{Reverse KL (with $\tilde{R}$)} \qquad  \boxed{\tilde{R} = -\beta \log (Z^{*} - R) + \beta \log (\beta Z)}
\end{align}

\begin{proposition}
The function $\tilde{R} = -\beta \log(Z^* - R) + \text{const}$ is strictly {monotonically increasing} and {strictly convex} in $R$.
\end{proposition}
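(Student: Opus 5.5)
We treat $\tilde{R}$ as a scalar function of the reward value $R$, with $x$ fixed and the normalizer $Z^*=Z^*(x)$ held fixed, since $Z^*$ is a single number per prompt determined implicitly by \eqref{eq:zstar-constraint}. The additive constant $\beta\log(\beta Z)$ likewise does not depend on the particular value $R(x,y)$ being transformed. We therefore study $\phi(R) = -\beta\log(Z^*-R)$ on the domain $R<Z^*$, and show it is strictly increasing and strictly convex there.

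\textbf{Step 1: the domain.} We first recall from the derivation of $\pi_f^*$ that positivity of the optimal policy requires $\lambda = Z^*(x) > \max_y R(x,y)$. Hence $Z^*-R>0$ for every attainable reward, and $\phi$ is well defined and smooth on the open interval $(-\infty, Z^*)$. We also use $\beta>0$.

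\textbf{Step 2: derivatives.} Differentiating directly gives
\begin{align*}
\phi'(R) = \frac{\beta}{Z^*-R} > 0, \qquad \phi''(R) = \frac{\beta}{(Z^*-R)^2} > 0
\end{align*}
on the whole interval. A positive first derivative everywhere on an interval gives strict monotonicity. A positive second derivative everywhere gives strict convexity, either via the mean value theorem or via the standard second-order criterion. Alternatively, we can note that $-\log$ is strictly decreasing and strictly convex, and that composing it with the decreasing affine map $R\mapsto Z^*-R$ preserves strict convexity and flips the monotonicity to increasing.

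\textbf{Main obstacle.} The only subtle point is justifying that $Z^*$ may be treated as a constant independent of $R$. The plan is to state explicitly that the claim concerns the pointwise map $R(x,y)\mapsto\tilde{R}(x,y)$ at fixed $x$. In that setting, $Z^*(x)$ is fixed by the entire reward distribution through \eqref{eq:zstar-constraint}, not by the individual value being transformed. Once this is stated, the rest of the argument is a one-line calculus check.
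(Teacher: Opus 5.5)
Your proposal is correct and matches the paper's proof. Both arguments hold $Z^*$ fixed per prompt, use $Z^* > \max_y R(x,y)$ to get $Z^*-R>0$, and conclude from $\beta/(Z^*-R)>0$ and $\beta/(Z^*-R)^2>0$; your remarks on $\beta>0$ and on why $Z^*$ may be treated as constant just spell out what the paper states parenthetically.
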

\begin{proof}
Taking the derivative with respect to $R$ (treating $Z^*$ as constant):
$\frac{\partial \tilde{R}}{\partial R} = -\beta \cdot \frac{-1}{Z^* - R} = \frac{\beta}{Z^* - R} > 0$.
The last step was due to $Z^* > \max_{y} R(x,y)$ by construction. Thus $\tilde{R}$ is strictly monotonically increasing in $R$.

Taking the second derivative:
$\frac{\partial^2 \tilde{R}}{\partial R^2} = \frac{\partial}{\partial R}\left(\frac{\beta}{Z^* - R}\right) = \beta \cdot \frac{1}{(Z^* - R)^2} = \frac{\beta}{(Z^* - R)^2} > 0$.
Thus $\tilde{R}$ is strictly convex in $R$.
\end{proof}

\paragraph{Inverse transformation:} Given a reward $R$ used in the reverse KL objective $J$, what reward $R_f$ should be used in the forward KL objective $J_f$ to obtain the same optimal policy?

Setting $\pi^*(y|x) = \pi_f^*(y|x)$, we have ${\exp(R/\beta)}/{Z} = {\beta}/{(Z^* - R_f)}$, which simplifies to:
\begin{align}
    \text{Reverse KL (with $R$)}\rightarrow\text{Forward KL (with $R_{f}$)} \qquad \boxed{R_f = Z^* - \beta Z \exp(-R/\beta)}
\end{align}

\begin{proposition}
The function $R_f = Z^* - \beta Z \exp(-R/\beta)$ is strictly monotonically increasing and strictly concave in $R$.
\end{proposition}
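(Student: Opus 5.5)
I will mirror the proof of the preceding proposition: treat $Z^*$ and $Z$ as constants in $R$ (they are fixed by $x$ via the normalizations already derived), and differentiate $R_f(R) = Z^* - \beta Z \exp(-R/\beta)$ twice with respect to $R$.

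The first step is to record the signs of the constants. By definition $Z(x) = \sum_y \pi_{\text{ref}}(y|x)\exp(R(x,y)/\beta)$ is a sum of nonnegative terms. At least one term is positive, since $\pi_{\text{ref}}(\cdot|x)$ is a distribution and the exponential is strictly positive. Hence $Z>0$, and with $\beta>0$ we have $\beta Z>0$.

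Next, the first derivative is
\begin{align*}
\frac{\partial R_f}{\partial R} = -\beta Z \cdot \Big(-\tfrac{1}{\beta}\Big)\exp(-R/\beta) = Z\exp(-R/\beta) > 0,
\end{align*}
which gives strict monotonicity.

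The second derivative is
\begin{align*}
\frac{\partial^2 R_f}{\partial R^2} = -\tfrac{1}{\beta} Z \exp(-R/\beta) < 0,
\end{align*}
which gives strict concavity.

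As a cross-check, I will note that $R \mapsto R_f$ is the inverse of the map $R_f \mapsto \tilde R$ from the previous proposition, up to the additive constants. An increasing strictly convex function has an increasing strictly concave inverse, so the result is consistent with that proposition.

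The only point needing care is the positivity of $Z$ and $\beta$, together with the convention that $Z^*$ and $Z$ are held fixed as $R$ varies. I will state this convention explicitly, as in the preceding proof. Everything else is a routine calculation.
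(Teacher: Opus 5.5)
Your proposal is correct and is essentially the paper's proof: with $Z^*$ and $Z$ held constant, $\partial R_f/\partial R = Z\exp(-R/\beta) > 0$ gives strict monotonicity, and $\partial^2 R_f/\partial R^2 = -\frac{Z}{\beta}\exp(-R/\beta) < 0$ gives strict concavity. Your explicit check that $Z>0$ and $\beta>0$ and your inverse-function cross-check are small additions that the paper leaves implicit.
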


\begin{proof}
Taking first-order derivative
$\frac{\partial R_f}{\partial R} = -\beta Z \cdot \left(-\frac{1}{\beta}\right) \exp(-R/\beta) = Z \exp(-R/\beta) > 0$; therefore, $R_{f}$ is strictly monotonically increasing (preserving reward ordering).

The second-order derivative is
$\frac{\partial^2 R_f}{\partial R^2} = Z \cdot \left(-\frac{1}{\beta}\right) \exp(-R/\beta) = -\frac{Z}{\beta} \exp(-R/\beta) < 0$, we find that $R_f$ is strictly concave (compressing high rewards).
\end{proof}

\begin{center}
\vspace{5pt}
\begin{tabular}{lcccc}
\toprule
\textbf{Direction} & \textbf{Transformation} & \textbf{Monotonic?} & \textbf{Curvature} & \textbf{Effect} \\
\midrule
\vspace{10pt}
Forward KL $\to$ Reverse KL & $-\beta \log(Z^* - R) + \text{const}$ & Yes & Convex & Amplify high rewards \\
\vspace{5pt}
Reverse KL $\to$ Forward KL & $Z^* - \beta Z e^{-R/\beta}$ & Yes & Concave & Compress high rewards \\
\bottomrule
\end{tabular}
\end{center}

\subsection{Reward transformation for equivalent optima across $f$-divergences}
\label{app:reward-transform-for-f-divergence}

We now generalize beyond KL divergence to arbitrary $f$-divergence regularization. Consider the objective:
\begin{align}
    J_f = \mathbb{E}_{y \sim \pi_\theta(\cdot|x)}[R(x,y)] - \beta \, D_f(\pi_\theta(\cdot|x) \| \pi_{\text{ref}}(\cdot|x))
\end{align}
where the $f$-divergence is defined as $D_f(P \| Q) = \sum_y Q(y) \, f\left(\frac{P(y)}{Q(y)}\right)$ where $f: \mathbb{R}^+ \to \mathbb{R}$ is a convex function with $f(1) = 0$. Standard KKT conditions yield the optimal policy:
\begin{align}
    \boxed{\pi_f^*(y\mid x) = \pi_{\text{ref}}(y\mid x) \cdot (f')^{-1}\left(\frac{R(x,y) - \lambda(x)}{\beta}\right)}
\end{align}
where $\lambda(x)$ is determined by the normalization constraint:
\begin{align}
    \sum_y \pi_{\text{ref}}(y\mid x) \cdot (f')^{-1}\left(\frac{R(x,y) - \lambda(x)}{\beta}\right) = 1 \label{eq:f-div-normalization}
\end{align}
which was previously derived in \cite{f-dpo}.

Given two $f$-divergences with $f_1$ and $f_2$, the optimal policies $\pi_{f_1}^*$ and $\pi_{f_2}^*$ coincide when:
\begin{align}
    (f_1')^{-1}\left(\frac{R_1 - \lambda_1}{\beta}\right) = (f_2')^{-1}\left(\frac{R_2 - \lambda_2}{\beta}\right)
\end{align}
This yields the general reward transformation:
\begin{align}
    \boxed{R_2 = \beta \cdot f_2'\left( (f_1')^{-1}\left(\frac{R_1 - \lambda_1}{\beta}\right) \right) + \lambda_2}
\end{align}
The curvature of this transformation is determined by the composition $(f_2') \circ (f_1')^{-1}$.

\begin{table}[h]
\vspace{15pt}
\centering
\small
\renewcommand{\arraystretch}{1.2}
\setlength{\tabcolsep}{8pt}
\caption{
\centering
\textbf{Optimal Policy form} under the each $f$-divergence regularization.
}
\begin{tabular}{lcccc}
\toprule
\textbf{Divergence} & $f(t)$ & $f'(t)$ & $(f')^{-1}(s)$ & {Optimal Policy Form} \\
\midrule
Reverse KL
& $t\log t - t + 1$
& $\log t$
& $e^{s}$
& $\pi_{\text{ref}}\cdot \exp\!\big((R-\lambda)/\beta\big)$ \\[7pt]

Forward KL
& $-\log t + t - 1$
& $1-\frac{1}{t}$
& $\frac{1}{1-s}$
& $\pi_{\text{ref}}\cdot \frac{1}{1-(R-\lambda)/\beta}$ \\[7pt]

Jensen-Shannon
& $\frac12\!\left[t\log t-(t+1)\log\!\Big(\frac{t+1}{2}\Big)\right]$
& $\frac12\log\!\Big(\frac{2t}{t+1}\Big)$
& $\frac{1}{2e^{-2s}-1}$
& $\pi_{\text{ref}}\cdot \frac{\exp\!\big(2(R-\lambda)/\beta\big)}{2-\exp\!\big(2(R-\lambda)/\beta\big)}$ \\[7pt]

Squared Hellinger
& $(\sqrt{t}-1)^2=t-2\sqrt{t}+1$
& $1-\frac{1}{\sqrt{t}}$
& $\frac{1}{(1-s)^2}$
& $\pi_{\text{ref}}\cdot \frac{1}{\big(1-(R-\lambda)/\beta\big)^2}$ \\[7pt]

Pearson $\chi^2$
& $\frac12(t-1)^2$
& $t-1$
& $s+1$
& $\pi_{\text{ref}}\cdot \Big(1+\frac{R-\lambda}{\beta}\Big)$ \\[7pt]

Neyman $\chi^2$
& $\frac{(t-1)^2}{t}=t-2+\frac{1}{t}$
& $1-\frac{1}{t^2}$
& $\frac{1}{\sqrt{1-s}}$
& $\pi_{\text{ref}}\cdot \frac{1}{\sqrt{1-(R-\lambda)/\beta}}$ \\[7pt]

$\alpha$-divergence
& $\frac{4}{1-\alpha^{2}}\Big(1-t^{\frac{1+\alpha}{2}}\Big)$
& $-\frac{2}{1-\alpha}\,t^{\frac{\alpha-1}{2}}$
& $\left(-\frac{1-\alpha}{2}\,s\right)^{\frac{2}{\alpha-1}}$
& $\pi_{\text{ref}}\cdot \left(-\frac{1-\alpha}{2}\cdot\frac{R-\lambda}{\beta}\right)^{\frac{2}{\alpha-1}}$ \\
\bottomrule
\end{tabular}
\end{table}

\begin{table}[h]
\vspace{10pt}
\centering
\small
\renewcommand{\arraystretch}{1.25}
\setlength{\tabcolsep}{7pt}
\caption{
\centering
\textbf{Reward transformations that produce the \emph{same optimal policy} as each $f$-divergence under Reverse KL.}\\
Let $\lambda$ and $\widetilde{\lambda}$ be normalization constants enforcing ${\sum}_y \pi^{*} (y\mid x)=1$ for the $f$-divergence and Reverse KL objectives, respectively.\\
Then the equivalence is achieved by $\widetilde R(x,y)=\beta\log\!\big((f')^{-1}((R-\lambda)/\beta)\big)+\widetilde\lambda$, for each $f(\cdot)$ in $f$-divergence.\\
}
\vspace{3pt}
\begin{tabular}{lccc}
\toprule
\textbf{Divergence $f$ (target)} & $(f')^{-1}(s)$ & \textbf{Reverse-KL reward transform $\widetilde R(x,y)$} & {Ranges of} $\lambda(x)$ \\
\midrule

Forward KL
& $\dfrac{1}{1-s}$
& $-\beta\log\!\Big(1-\dfrac{R-\lambda}{\beta}\Big)+\widetilde\lambda$
& $\lambda > R - \beta$
\\[6pt]

Jensen--Shannon
& $\dfrac{1}{2e^{-2s}-1}$
& $-\beta\log\!\Big(2\exp\!\big(-2\dfrac{R-\lambda}{\beta}\big)-1\Big)+\widetilde\lambda$
& $\lambda > R-\frac{\beta}{2}\log 2$
\\[6pt]

Squared Hellinger
& $\dfrac{1}{(1-s)^2}$
& $-2\beta\log\!\Big(1-\dfrac{R-\lambda}{\beta}\Big)+\widetilde\lambda$
& $\lambda > R-\beta$
\\[6pt]

Pearson $\chi^2$
& $1+s$
& $\beta\log\!\Big(1+\dfrac{R-\lambda}{\beta}\Big)+\widetilde\lambda$
& $\lambda<R+\beta$
\\[6pt]

Neyman $\chi^2$
& $\dfrac{1}{\sqrt{1-s}}$
& $-\dfrac{\beta}{2}\log\!\Big(1-\dfrac{R-\lambda}{\beta}\Big)+\widetilde\lambda$
& $\lambda > R-\beta$ 
\\[6pt]

$\alpha$-divergence
& $\left(-\dfrac{1-\alpha}{2}\,s\right)^{\frac{2}{\alpha-1}}$
& $\dfrac{2\beta}{\alpha-1}\log\!\Big(-\dfrac{1-\alpha}{2}\cdot\dfrac{R-\lambda}{\beta}\Big)+\widetilde\lambda$
& $-\,(1-\alpha)(R-\lambda)>0$
\\

\bottomrule
\end{tabular}
\label{tab:rkl_reward_transforms_from_f}
\end{table}

\clearpage
\newpage

\section{Policy Gradient Methods can be Derived from Sampled KL Estimators}
\label{app:new-family-of-pg}

If RL optimization is simply written as closing the reverse KL between current policy $\pi_{\theta}$ and the reference policy distribution $\pi_{\text{ref}}$ transformed by the return $R$ \cite{rwr, equivalence-between-pg-and-sq, sac}, we have:
\begin{align}
    \min_{\theta} \mathrm{KL}\bigg(\pi_{\theta}(\cdot | x) \Big|\Big| \dfrac{1}{Z(x)} \pi_{\text{ref}}(\cdot | x)\exp \Big( R(x,\cdot) \Big) \bigg)
    \label{eq:f-div-policy-gradient}
\end{align}
to make the connection with sampled KL estimators clear, we define the ratio between the optimal policy $\pi^{*}$ and current policy $\pi_{\theta}$ as $w$, same as before:
\begin{align}
    w = \dfrac{ \pi_{\text{ref}}(y|x) \, \exp R(x,y) }{ \pi_{\theta}(y|x)\, Z(x) }
\end{align}
where the partition function $Z(x)$ is approximated via a group of $N$ trajectories per prompt $y^{(i)} \sim \pi_{\theta}(\cdot \mid x)$ for $i=1,\dots, N$:
\begin{align}
    \log Z(x) \triangleq \log \E_{\pi_{\text{ref}}(y | x)} [\exp R(x,y)] &\approx \log \dfrac{1}{N} \sum_{i=1}^{N} \mathrm{sg}\left( { \frac{ \pi_{\text{ref}}(y^{(i)}\mid x) }{ \pi_{\theta}(y^{(i)} \mid x) }} \right) \exp \Big( R(x,y^{(i)}) \Big) \\
    &\geq \dfrac{1}{N} \sum_{i=1}^{N} \left( R(x,y^{(i)}) - \mathrm{sg}\left(\log { \frac{ \pi_{\theta}(y^{(i)} \mid x) }{ \pi_{\text{ref}}(y^{(i)} \mid x) }} \right) \right)
\end{align}
where the last step uses Jensen's inequality. We split it into two terms, value function ($V(x)$) and aggregate KL ($\widehat{\mathrm{KL}}$):
\begin{align}
    V(x) \triangleq \E_{ \substack{y^{(i)} \sim \pi_{\theta}(\cdot \mid x) \\ i=1,\dots, N} } \left[ \dfrac{1}{N} \sum_{i=1}^{N} R(x,y^{(i)}) \right] \qquad \widehat{\mathrm{KL}} \triangleq \E_{ \substack{y^{(i)} \sim \pi_{\theta}(\cdot \mid x) \\ i=1,\dots, N} } \left[ \dfrac{1}{N} \sum_{i=1}^{N} \mathrm{sg}\left(\log { \frac{ \pi_{\theta}(y^{(i)} \mid x) }{ \pi_{\text{ref}}(y^{(i)} \mid x) }} \right) \right]
\end{align}
As a side note, \textit{to sharpen or flatten the target distribution, we simply need to multiply all rewards by $1/\beta$}; as a result, we omit $\beta$ in our derivation and let it be absorbed by $R(x,y)$.

Next, we show that \textbf{every sampled KL estimator in Tab \ref{tab:kl-estimators} can be turned into a policy gradient algorithm}.

\subsection{K1 estimator leads to on-policy PPO / GRPO loss}
\label{app:k1-leads-to-ppo-grpo}
We start with $\mathbf{K1}=-\log w$:
\begin{align*}
    \text{K1:} \quad \min_{\theta}-\log w \qquad  \nabla_{\theta}L_{1} &= -\nabla_{\theta} \log { \pi_{\theta}(y|x)} \bigg( R(x,y) - \log Z(x) \bigg) \\
    &= -\nabla_{\theta} \log { \pi_{\theta}(y|x)} \left( R(x,y) - V(x) + \frac{1}{N} \sum_{i=1}^{N} \mathrm{sg}\left( \log \dfrac{ \pi_{\theta}(y^{(i)} \mid x) }{ \pi_{\text{ref}}(y^{(i)}\mid x) } \right) \right)
\end{align*}
Summing over all $N$ rollouts, we get
\begin{align*}
    \underbrace{-\sum_{j=1}^{N} \nabla_{\theta} \log { \pi_{\theta}(y^{(j)}\mid x)} \bigg( R(x,y^{(j)}) - V(x) \bigg)}_{\text{REINFORCE}} + \underbrace{\sum_{j=1}^{N} \nabla_{\theta} \log { \pi_{\theta}(y^{(j)}\mid x)} \left( \frac{1}{N} \sum_{i=1}^{N} \mathrm{sg} \left( \log \dfrac{ \pi_{\theta}(y^{(i)} \mid x) }{ \pi_{\text{ref}}(y^{(i)}\mid x) } \right) \right)}_{\text{The regularizer gradient $\E_{y^{(j)} \sim \pi_{\theta}} [\sum_{j=1}^{N} \nabla_{\theta} \log \pi_{\theta}(y^{(j)} \mid x) \widehat{\mathrm{KL}}]$}}
\end{align*}
\begin{lemma} The regularizer gradient $\E_{y^{(j)} \sim \pi_{\theta}} [\sum_{j=1}^{N} \nabla_{\theta} \log \pi_{\theta}(y^{(j)} \mid x) \widehat{\mathrm{KL}}]$ is equivalent to the expected KL gradient:
    \begin{equation}
        \begin{aligned}
        \E_{ \substack{y^{(j)} \sim \pi_{\theta}(\cdot \mid x) \\ j=1,\dots, N} } \left[ \sum_{j=1}^{N} \nabla_{\theta} \log { \pi_{\theta}(y^{(j)}\mid x)} \, \widehat{\mathrm{KL}} \right] &= \E_{ \substack{y^{(j)} \sim \pi_{\theta}(\cdot \mid x) \\ j=1,\dots, N} } \left[ \dfrac{1}{N} \sum_{j=1}^{N} \nabla_{\theta} \log { \pi_{\theta}(y^{(j)}\mid x)} \, \mathrm{sg} \left(\log {\textstyle \frac{ \pi_{\theta}(y^{(j)} \mid x) }{ \pi_{\text{ref}}(y^{(j)} \mid x) }} \right) \right] \\
        &= \nabla_{\theta} \mathrm{KL}( \pi_{\theta}(\cdot \mid x) \parallel \pi_{\text{ref}}(\cdot \mid x) )
        \end{aligned}
    \end{equation}
\label{lemma:regularizer-grad}
\end{lemma}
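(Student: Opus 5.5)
We interpret the statement with $\widehat{\mathrm{KL}}$ understood as the sampled batch quantity $\frac{1}{N}\sum_{i=1}^N \mathrm{sg}(\ell_i)$ inside the expectation, where $\ell_i \triangleq \log\frac{\pi_\theta(y^{(i)}\mid x)}{\pi_{\text{ref}}(y^{(i)}\mid x)}$ and $s_j \triangleq \nabla_\theta\log\pi_\theta(y^{(j)}\mid x)$. The plan is to expand the double sum and kill cross terms using independence of the rollouts together with the zero-mean score identity, exactly as in the ``past terms vanish'' argument of \S\ref{app:analysis-of-seq-level-kl-gradients}. Writing the regularizer as
\[
\sum_{j=1}^N s_j\cdot\frac{1}{N}\sum_{i=1}^N \mathrm{sg}(\ell_i)=\frac{1}{N}\sum_{j}\sum_{i} s_j\,\ell_i ,
\]
we split the double sum into the diagonal part $i=j$ and the off-diagonal part $i\neq j$.

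\textbf{Off-diagonal terms.} For $i\neq j$, the samples $y^{(i)}$ and $y^{(j)}$ are independent draws from $\pi_\theta(\cdot\mid x)$, and $\ell_i$ is treated as a constant under stop-gradient. Conditioning on $y^{(i)}$ gives
\[
\E[s_j\ell_i]=\E[\ell_i]\,\E[s_j].
\]
By the score-mean identity (Step 1 of \S\ref{app:analysis-of-seq-level-kl-gradients}, at the sequence level), $\E[s_j]=\sum_y\nabla_\theta\pi_\theta(y\mid x)=0$. Hence all $N(N-1)$ cross terms vanish. This yields the first equality, since the remaining diagonal sum is precisely $\frac1N\sum_j s_j\,\mathrm{sg}(\ell_j)$.

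\textbf{Diagonal terms.} Each diagonal term has the same expectation $\E_{y\sim\pi_\theta}[\ell(y)\,\nabla_\theta\log\pi_\theta(y\mid x)]$. By the score-function form \eqref{eq:seq_kl_score_fn}, this equals $\nabla_\theta\mathrm{KL}(\pi_\theta(\cdot\mid x)\,\|\,\pi_{\text{ref}}(\cdot\mid x))$. Averaging $N$ identical terms with the factor $\frac1N$ gives the second equality. Equivalently, each diagonal term is a sequence-level K4-type estimator, $\nabla_\theta\big(r\cdot\mathrm{sg}(\ell)\big)$, whose unbiasedness was shown in \S\ref{app:k4-gradients}.

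\textbf{Main obstacle.} The argument is short; the delicate point is bookkeeping rather than analysis. The stop-gradient must make $\ell_i$ enter only as a constant weight, so no $\nabla_\theta\ell_i$ term appears. The off-diagonal factorization requires genuine independence of the $N$ rollouts, which holds here because they are drawn i.i.d.\ given $x$; it would fail for correlated sampling such as sampling without replacement. Finally, the diagonal weight is $1/N$ rather than $1$, which is why the normalization $\frac1N$ appears in the first equality.
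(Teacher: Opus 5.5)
Your proposal is correct and follows the paper's proof essentially step for step. Like the paper, you expand $\sum_j s_j \cdot \frac{1}{N}\sum_i \mathrm{sg}(\ell_i)$ into a double sum, kill the $i\neq j$ terms using independence of the rollouts and the zero-mean score identity, and identify the remaining diagonal average with the score-function form of $\nabla_\theta \mathrm{KL}(\pi_\theta(\cdot\mid x)\,\|\,\pi_{\text{ref}}(\cdot\mid x))$. You also read $\widehat{\mathrm{KL}}$ as the in-batch average $\frac{1}{N}\sum_i \mathrm{sg}(\ell_i)$ rather than its expectation, which is the interpretation the paper's proof uses without saying so; the paper's literal definition with an outer expectation would make the left-hand side vanish.
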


\begin{proof}
Expanding the product of sums gives
\[
\E_{ \substack{y^{(j)} \sim \pi_{\theta}(\cdot \mid x) \\ j=1,\dots, N} } \left[ \sum_{j=1}^{N} \nabla_{\theta} \log { \pi_{\theta}(y^{(j)}\mid x)} \, \widehat{\mathrm{KL}} \right] = 
\E_{ \substack{y^{(j)} \sim \pi_{\theta}(\cdot \mid x) \\ j=1,\dots, N} } \left[ \frac{1}{N}\sum_{i=1}^{N}\sum_{j=1}^{N}
\nabla_{\theta} \log \pi_{\theta}(y^{(j)}\mid x)\,
\mathrm{sg}\!\left(
\log \frac{ \pi_{\theta}(y^{(i)} \mid x) }{ \pi_{\text{ref}}(y^{(i)}\mid x) }
\right) \right]
\]
which can be written as two components:
\begin{align*}
    &\frac{1}{N}\sum_{i=1}^{N}\sum_{j=1}^{N} \nabla_{\theta} \log \pi_{\theta}(y^{(j)}\mid x)\, \mathrm{sg}\!\left( \log \frac{ \pi_{\theta}(y^{(i)} \mid x) }{ \pi_{\text{ref}}(y^{(i)}\mid x) } \right) \\
    = & \underbrace{\frac{1}{N}\sum_{j=1}^{N} \nabla_{\theta} \log \pi_{\theta}(y^{(j)}\mid x)\, \mathrm{sg} \Big( \log \frac{ \pi_{\theta}(y^{(j)} \mid x) }{ \pi_{\text{ref}}(y^{(j)}\mid x) } \Big)}_{\text{= $\mathbb{E}_{y\sim\pi_\theta}\!\left[\nabla_{\theta} \log \pi_{\theta}(y\mid x)\,\mathrm{sg}\!\left(\log \frac{ \pi_{\theta}(y \mid x) }{ \pi_{\text{ref}}(y\mid x) }\right)\right]$}} + \underbrace{ \frac{1}{N} \bigg(\sum_{j=1}^{N} \nabla_{\theta} \log \pi_{\theta}(y^{(j)}\mid x)\bigg) \, \bigg( \sum_{i\neq j}^{N} \mathrm{sg}\Big( \log \frac{ \pi_{\theta}(y^{(i)} \mid x) }{ \pi_{\text{ref}}(y^{(i)}\mid x) } \Big) \bigg) }_{\text{has an expected value $=0$, since \(\mathbb{E}_{y\sim\pi_\theta}[\nabla_\theta \log \pi_\theta(y\mid x)]=0\)}}
\end{align*}
For \(i\neq j\), independence of samples and the score-function identity
\(\mathbb{E}_{y\sim\pi_\theta}[\nabla_\theta \log \pi_\theta(y\mid x)]=0\)
imply that these off-diagonal terms have zero expectation. The only
non-vanishing contribution comes from the diagonal \(i=j\), yielding
\[
\mathbb{E}_{y\sim\pi_\theta}\!\left[
\nabla_{\theta} \log \pi_{\theta}(y\mid x)\,
\mathrm{sg}\!\left(
\log \frac{ \pi_{\theta}(y \mid x) }{ \pi_{\text{ref}}(y\mid x) }
\right)
\right] = \nabla_{\theta} \mathrm{KL}( \pi_{\theta}(\cdot \mid x) \parallel \pi_{\text{ref}}(\cdot \mid x) ).
\]
Thus, the regularizer gradient is equivalent to the common KL gradient.
\end{proof}

Therefore, $L_{1}$ is simply a typical KL-regularized policy gradient objective:
\begin{align}
    \text{K1:} \quad \min_{\theta}-\log w \quad 
    \boxed{\nabla_{\theta}L_{1} = -\nabla_{\theta} \log { \pi_{\theta}(y\mid x)} \Big( R(x,y) - V(x) \Big) + \nabla_{\theta} \mathrm{KL}( \pi_{\theta}(\cdot \mid x) \parallel \pi_{\text{ref}}(\cdot \mid x) )}
\end{align}
Which is what typical policy gradient algorithms like PPO \cite{ppo} and GRPO \cite{grpo} optimize in the on-policy case.

\subsection{K2 estimator leads to APA / Kimi loss}
\label{app:k2-leads-to-apa-kimi}
Using the K2 estimator: 
\begin{align*}
    \text{K2:} \qquad & \min_{\theta} \dfrac{1}{2} (\log w)^{2} \quad &&  \nabla_{\theta}L_{2} = \nabla_{\theta} \bigg( \log \pi_{\theta}(y|x) - \log \pi_{\text{ref}}(y|x) - R(x,y) + \log Z(x) \bigg)^{2}
\end{align*}
Expanding the partition function:
\begin{align*}
    &\nabla_{\theta} \bigg( \log \pi_{\theta}(y|x) - \log \pi_{\text{ref}}(y|x) - (R(x,y) - V(x)) + \widehat{\mathrm{KL}} \bigg)^{2} \\
    =& \underbrace{\nabla_{\theta} \bigg( \log \pi_{\theta}(y|x) - \log \pi_{\text{ref}}(y|x) - (R(x,y) - V(x)) \bigg)^{2}}_{\text{APA Loss}} + \underbrace{\nabla_{\theta}\log \pi_{\theta}(y\mid x) \, \widehat{\mathrm{KL}}}_{\substack{\text{Becomes $\nabla_{\theta}\mathrm{KL}( \pi_{\theta}(\cdot \mid x) \parallel \pi_{\text{ref}}(\cdot \mid x) )$} \\ \text{when summed over $N$ trajectories (Lemma \ref{lemma:regularizer-grad})}}}
\end{align*}
The loss $L_2$ is the Advantage-induced Policy Alignment (\textbf{APA}) \cite{apa}, which was also used by Kimi models \cite{kimi-thinking}. The second term was often ignored, since prior works often approximate $\log Z(x) \approx V(x)$ rather than $\log Z(x) \approx V(x) - \widehat{\mathrm{KL}}$. The APA loss already incorporates the same KL term:
\begin{align*}
    \nabla_{\theta} ( \log \pi_{\theta}(y|x) - \log \pi_{\text{ref}}(y|x) - (R(x,y) - V(x)) )^{2} = \nabla_{\theta} \log \pi_{\theta}(y|x) ( R(x,y) - V(x) ) + \nabla_{\theta} \mathrm{KL}(\pi_{\theta}(\cdot\mid x) \| \pi_{\text{ref}}(\cdot\mid x))
\end{align*}
Which means that the additional $\nabla_{\theta}\log \pi_{\theta}(y\mid x) \, \widehat{\mathrm{KL}}$ term just upweights the KL. Both APA and Kimi showed that this loss is effective in the off-policy regime without necessarily adding off-policy correction:
\begin{align}
    \text{K2:} \qquad & \min_{\theta} \dfrac{1}{2} (\log w)^{2} \qquad  \boxed{\nabla_{\theta}L_{2} = \nabla_{\theta} \Big( \log \pi_{\theta}(y|x) - \log \pi_{\text{ref}}(y|x) - (R(x,y) - V(x)) \Big)^{2}}
\end{align}

\subsection{K5 estimator leads to RWR / MPO loss}
\label{app:k5-leads-to-rwr-mpo}
Under the K5 estimator, the divergence minimization in policy optimization becomes:
\begin{align*}
    \text{K5:} \quad \min_{\theta} \mathrm{sg}(w) \log w + \log r \qquad 
    \nabla_{\theta} L_{5} = - \nabla_{\theta} \log \pi_{\theta}(y|x) \bigg( \dfrac{ \pi_{\text{ref}}(y|x) \, \exp R(x,y) }{ \pi_{\theta}(y|x)\, Z(x) } \bigg)
\end{align*}
Averaging over all $N$ rollouts per $x$ to simplify the partition function:
\begin{equation*}
\begin{aligned}
    &- \dfrac{1}{N} \sum_{i=1}^{N} \nabla_{\theta} \log \pi_{\theta}(y^{(i)} \mid x) \left( \dfrac{ \pi_{\text{ref}}(y^{(i)} \mid x) \, \exp R(x, y^{(i)}) }{ \pi_{\theta}(y^{(i)} \mid x)\, Z(x) } \right) \\
    = & - \dfrac{1}{N} \sum_{i=1}^{N} \nabla_{\theta} \log \pi_{\theta}(y^{(i)} \mid x) \left( \dfrac{ { \mathrm{sg} \left(\frac{\pi_{\text{ref}}(y^{(i)} \mid x)}{\pi_{\theta}(y^{(i)} \mid x)} \right) \, \exp R(x, y^{(i)}) } }{ \dfrac{1}{N} \sum_{j=1}^{N} \mathrm{sg}\Big( {\textstyle \frac{ \pi_{\text{ref}}(y^{(j)}\mid x) }{ \pi_{\theta}(y^{(j)} \mid x) }} \Big) \exp R(x,y^{(j)} ) } \right) \\
    = & - \sum_{i=1}^{N} \nabla_{\theta} \log \pi_{\theta}(y^{(i)} \mid x) \cdot \mathrm{sg} \underbrace{\left( \dfrac{ { \exp \left( R(x, y^{(i)}) - \log \frac{\pi_{\theta}(y^{(i)} \mid x)}{\pi_{\text{ref}}(y^{(i)} \mid x)} \right) } }{ \sum_{j=1}^{N} { \exp \left( R(x, y^{(j)}) - \log \frac{\pi_{\theta}(y^{(j)} \mid x)}{\pi_{\text{ref}}(y^{(j)} \mid x)} \right) } } \right)}_{\text{$\mathrm{softmax}$ over $[y^{(1)}, \dots, y^{(N)}]$}}
\end{aligned}
\end{equation*}
Which can be written in a batched manner over a group of $N$ rollouts per prompt $\boldsymbol{y}=[y^{(1)}, \dots, y^{(N)}]$, where $\mathrm{mean}$ and $\mathrm{softmax}$ are taken along the axis of the group dimension $i=1,\dots, N$ (which we denote as $\mathrm{mean}_{y}$ and $\mathrm{softmax}_{y}$):
\begin{align*}
    \mathrm{mean}_{y} \left(\nabla_{\theta} \log \pi_{\theta}(\boldsymbol{y} \mid {x}) \cdot \mathrm{softmax}_{y} \left( {R}({x}, \boldsymbol{y}) - \log \frac{ \pi_{\theta}(\boldsymbol{y} \mid {x}) }{ \pi_{\text{ref}}(\boldsymbol{y} \mid {x}) } \right) \right)
\end{align*}
This loss closely resembles Reward-Weighted Regression (\textbf{RWR}) \cite{rwr} and Maximum a Posteriori Policy Optimisation (\textbf{MPO}) \cite{mpo}.
\begin{align}
    \text{K5:} \quad \min_{\theta} \mathrm{sg}(w) \log w + \log r \quad 
    \boxed{\nabla_{\theta} L_{5} = - \nabla_{\theta} \log \pi_{\theta}(y|x) \cdot \mathrm{softmax}_{y} \Big( R(x,y) - \log (\pi_{\theta}(y|x)/\pi_{\text{ref}}(y|x)) \Big)}
\end{align}

\subsection{K3 / K3$^{++}$ / K4 estimators lead to combined losses}
\label{app:k3-k4-lead-to-combined-loss}
The K3 estimator leads to a combined $L_1$ and $L_5$ (GRPO + MPO):
\begin{align}
    \text{K3:} \quad &\min_{\theta}(-\log w + w -1) \nonumber \\
    \nabla_{\theta}L_{3} &= \underbrace{-\nabla_{\theta} \log { \pi_{\theta}(y|x)} ( R(x,y) - \log Z(x) )}_{\text{Typical policy gradient, } L_{1}} - \underbrace{ \Big( \nabla_{\theta} \log \pi_{\theta}(y|x) \Big) \bigg( \dfrac{ \pi_{\text{ref}}(y|x) \, \exp R(x,y) }{ \pi_{\theta}(y|x)\, Z(x) } \bigg)}_{\text{Same term as the one in $L_{5}$}} = \nabla_{\theta} L_{1} + \nabla_{\theta} L_{5}
\end{align}
The $\text{K3}^{++}$ estimator leads to a combined $L_1$ and $L_2$ (GRPO + APA):
\begin{align}
    \text{K3}^{++}\text{:} \quad & \min_{\theta} r \cdot (- \log w + w - 1) \qquad r=\dfrac{\pi_{\theta}(y|x)}{ \mathrm{sg}( \pi_{\theta}(y|x) ) } \nonumber \\
    \nabla_{\theta} L_{3++} &= \nabla_{\theta}L_{1} + \Big( \nabla_{\theta} \log \pi_{\theta} (y|x) \Big) \bigg( \log \pi_{\theta}(y|x) - \log \pi_{\text{ref}}(y|x) - R(x,y) + \log Z(x) \bigg) = \nabla_{\theta}L_{1} + \nabla_{\theta}L_{2}
\end{align}

The K4 estimator leads to $L_2$ (same as the APA loss):
\begin{align}
    &\text{K4:} \quad \min_{\theta} r \cdot \mathrm{sg}(- \log w) \nonumber \\
    &\nabla_{\theta} L_{4} = \Big( \nabla_{\theta} \log \pi_{\theta}(y|x) \Big) \bigg( \log \pi_{\theta}(y|x) - \log \pi_{\text{ref}}(y|x) - R(x,y) + \log Z(x) \bigg) = \nabla_{\theta} L_{2}
\end{align}

\paragraph{A Note on Off-policy Correction}
Just like the off-policy correction for any sampled KL, any policy loss derived above can simply multiply a stop gradient version of the importance weight to remain unbiased when sampling is not on-policy:
\begin{align*}
    y \sim \pi_{\text{sampling}}(\cdot\mid x) \qquad s = \mathrm{sg} \left(\dfrac{ \pi_{\theta} (y \mid x) }{ \pi_{\text{sampling}}(y\mid x) } \right)
\end{align*}
With the option of using additional PPO-like binary masks on the loss
\begin{align*}
    M(x,y)=\begin{cases}
        0 &\text{ if } R(x,y) - V(x) > 0 \text{ and } s>1+\epsilon_{\text{high}} \\
        0 &\text{ if } R(x,y) - V(x) < 0 \text{ and } s<1-\epsilon_{\text{low}} \\
        1 &\text{ otherwise}
    \end{cases}
\end{align*}
Leading us to conclude that the off-policy gradient under any loss function $L$ above is
\begin{align*}
    \E_{y\sim \pi_{\text{sampling}}(\cdot \mid x)} [s(x,y) \, M(x,y)\; \nabla_{\theta} L(x,y)]
\end{align*}
Though it is worth noting that many prior works find those losses to be effective in the absence of off-policy correction.

\paragraph{Summary of the Equivalence between Policy Gradient and Divergence Minimization}
We have shown that many existing policy gradient methods can be derived from sampled KL estimators. In other words, better techniques in sampled KL estimation may directly lead to better policy gradient algorithms under such an equivalence.

\begin{table}[h]
\vspace{25pt}
\centering
\small
\renewcommand{\arraystretch}{1.18}
\setlength{\tabcolsep}{4pt}
\begin{tabularx}{1.0\linewidth}{llll}
\toprule
Estimator
& Expression
& Corresponding policy gradient & Prior literature? \\
\midrule
K1
& $-\log w$
& $\nabla_{\theta}L_{1}=-\,\nabla_{\theta} \log { \pi_{\theta}(y|x)} \Big( R(x,y) - V(x) \Big) + \nabla_{\theta} \mathrm{KL}( \pi_{\theta}(\cdot| x) \| \pi_{\text{ref}}(\cdot| x) )$ & \textbf{PPO}, \textbf{GRPO}, etc \\
\midrule
K2 
& $\dfrac{1}{2} (\log w)^{2}$
& $\nabla_{\theta}L_{2} = \,\nabla_{\theta} \Big( \log \pi_{\theta}(y|x) - \log \pi_{\text{ref}}(y|x) - R(x,y) + V(x) \Big)^{2}$ & \textbf{APA}, \textbf{Kimi} \\
\midrule
K4 
& $r \cdot \mathrm{sg}(-\log w)$
& $\nabla_{\theta}L_{4} = \nabla_{\theta}L_{2}$
& APA, Kimi \\
\midrule
K5
& $\mathrm{sg}(w) \log w + \log r$
& $\nabla_{\theta} L_{5} = - \, \nabla_{\theta} \log \pi_{\theta}(y|x) \; \mathrm{softmax}_{y} \Big( R(x,y) - \log \big( \pi_{\theta}(y|x) / \pi_{\text{ref}}(y|x) \big) \Big) $
& \textbf{RWR}, \textbf{MPO}
\\
\midrule
K3
& $-\log w + w -1$
& $\nabla_{\theta} L_{3} = \nabla_{\theta}L_{1} + \nabla_{\theta}L_{5}$ 
& GRPO $\boldsymbol{+}$ MPO
\\
\midrule
$\text{K3}^{++}$
& $r (- \log w + w - 1)$
& $\nabla_{\theta} L_{3++} = \nabla_{\theta}L_{1} + \nabla_{\theta}L_{2} $ 
& GRPO $\boldsymbol{+}$ APA
\\
\bottomrule
\end{tabularx}
\vspace{5pt}
\caption{\textbf{Each KL estimator directly corresponds to a policy gradient loss} under the divergence minimization view of RL.}
\label{tab:family-of-pg}
\end{table}

\clearpage
\newpage

\section{Policy Gradient as $\boldsymbol{f}$-Divergence Minimization}
\label{app:pg-as-sampled-f-divergence}

Once a problem is framed as minimizing a KL, it can typically be recast into the general framework of $f$-divergence minimization, leading to a family of loss functions. We have seen this in a variety of other optimization problems: the generalization of GAN \cite{gan} to $f$-GAN \cite{f-gan}, GAIL \cite{gail} to $f$-MAX \cite{f-max} and $f$-VIM \cite{ke2020imitation}, DPO \cite{dpo} to $f$-DPO \cite{f-dpo}. In this section, we still assume that the optimal policy takes the form of a Boltzmann policy \cite{max-ent-irl, equivalence-between-pg-and-sq} due to its partition function having an intuitive interpretation (which says that the log partition function is equivalent to the value function), but explores using a sampled $f$-divergence estimator for policy optimization.

We now extend the results in \S\ref{app:new-family-of-pg} to generic $f$-divergence estimators. Let the optimal policy $\pi^{*}$ be:
\begin{align*}
    \pi^{*}(y|x) = \dfrac{1}{Z(x)} \pi_{\text{ref}}(y | x)\exp (R(x, y)) \qquad w = \dfrac{\pi^{*}(y|x)}{\pi_{\theta}(y|x)} = \dfrac{ \pi_{\text{ref}}(y|x) \, \exp R(x,y) }{ \pi_{\theta}(y|x)\, Z(x) }
\end{align*}
Re-iterating the definition of $f$-divergence, with $f(\cdot)$ being convex and $f(1)=0$:
\begin{align*}
    D_{f}(p || q) = \E_{y\sim q}\bigg[ f\bigg( \dfrac{p(y)}{q(y)} \bigg) \bigg]
\end{align*}
We now study two directions of $f$-divergence minimization: $D_{f}(\pi_{\theta}(\cdot | x) || \pi^{*}(\cdot | x) )$ and $D_{f}(\pi^{*}(\cdot | x) || \pi_{\theta}(\cdot | x) )$.

\subsection{Minimizing $D_{f}(\pi_{\theta} , \pi^{*})$}
\label{app:minimize-d_f-theta-star}

If we define function $g_{f}(\cdot)$ as:
\begin{align}
    g_{f}(w) \triangleq w\, f\!\left(\frac{1}{w}\right)
\end{align}
Then, using the definition of $f$-divergence:
\begin{align}
    D_{f}(\pi_{\theta}(\cdot | x) || \pi^{*}(\cdot | x) ) = \E_{y\sim \pi^{*}(\cdot | x)} \bigg[ f\bigg( \dfrac{1}{w} \bigg) \bigg] = \E_{y\sim \pi_{\theta}(\cdot | x)} \bigg[ w \, f\bigg( \dfrac{1}{w} \bigg) \bigg] = \E_{y\sim \pi_{\theta}(\cdot|x)}\big[ g_{f}(w(x,y)) \big]
\end{align}
Note that $w(x,y)$ has the following properties:
\begin{align}
    \nabla_{\theta} \log w(x,y) = - \nabla_{\theta}\log \pi_{\theta}(y|x),
    \qquad
    \nabla_{\theta} w(x,y) = - w(x,y)\,\nabla_{\theta}\log \pi_{\theta}(y|x).
\end{align}
Therefore, using the score-function identity, we arrive at:
\begin{align}
    \nabla_{\theta} D_{f}(\pi_{\theta}\|\pi^{*})
    = \nabla_{\theta}\E_{y\sim \pi_{\theta}(\cdot|x)}\big[g_f(w)\big] &= \E_{y\sim \pi_{\theta}(\cdot|x)}\Big[\nabla_{\theta}\log \pi_{\theta}(y|x)\, g_f(w) + \nabla_{\theta} g_f(w)\Big] \nonumber\\
    &= \E_{y\sim \pi_{\theta}(\cdot|x)}\Big[\nabla_{\theta}\log \pi_{\theta}(y|x)\,\big(g_f(w) - w\, g_f'(w)\big)\Big].
\end{align}
Hence, minimizing $D_{f}(\pi_{\theta}\|\pi^{*})$ is equivalent to a policy gradient update with scalar weight $-\phi_{f}(w)$ where $\phi_{f}(w)$ is
\begin{align}
    \phi_f(w) \triangleq g_f(w) - w\, g_f'(w) = f'(1/w).
    \label{eq:phi_f_def}
\end{align}

\subsection{Minimizing $D_{f}(\pi^{*} , \pi_{\theta})$}
\label{app:minimize-d_f-star-theta}

We now consider the reverse direction:
\begin{align}
    D_{f}(\pi^{*}(\cdot|x)\,\|\,\pi_{\theta}(\cdot|x))
    = \E_{y\sim \pi_{\theta}(\cdot|x)}\bigg[ f\bigg( \dfrac{\pi^{*}(y|x)}{\pi_{\theta}(y|x)} \bigg) \bigg]
    = \E_{y\sim \pi_{\theta}(\cdot|x)}\big[ f(w(x,y)) \big].
\end{align}
By the same derivation as above,
\begin{align}
    \nabla_{\theta} D_{f}(\pi^{*}\|\pi_{\theta})
    &= \E_{y\sim \pi_{\theta}(\cdot|x)}\Big[\nabla_{\theta}\log \pi_{\theta}(y|x)\,\big(f(w) - w\, f'(w)\big)\Big].
\end{align}
Thus the induced policy-gradient weight for the reverse direction is
\begin{align}
    \psi_f(w) \triangleq f(w) - w\, f'(w).
    \label{eq:psi_f_def}
\end{align}

\paragraph{Summary of $f$-divergence loss.}
The two directions of $f$-divergence minimization are reduced to the following:
\begin{align}
    \nabla_{\theta} D_f(\pi_{\theta},\pi^{*})
    &= \E_{y\sim \pi_{\theta}}\Big[\nabla_{\theta}\log \pi_{\theta}(y|x)\, \phi_f(w)\Big],
    \qquad \phi_f(w)=f'(1/w), \\
    \nabla_{\theta} D_f(\pi^{*},\pi_{\theta})
    &= \E_{y\sim \pi_{\theta}}\Big[\nabla_{\theta}\log \pi_{\theta}(y|x)\, \psi_f(w)\Big],
    \qquad \psi_f(w)=f(w)-w f'(w).
\end{align}
\paragraph{The EM Algorithm} The expectation-maximization (EM) process can be summarized as the following. For each prompt or task $x$, we sample $N$ rollouts $[y^{(1)}, \dots, y^{(N)}]$ from $\pi_{\theta}(\cdot\mid x)$ to estimate the partition function $Z(x)$ and the ratio $w$ as:
\begin{align*}
    \textbf{E-Step} \qquad Z(x) \approx \dfrac{1}{N} \sum_{i=1}^{N} \mathrm{sg}\left( { \frac{ \pi_{\text{ref}}(y^{(i)}\mid x) }{ \pi_{\theta}(y^{(i)} \mid x) }} \right) \exp \Big( R(x,y^{(i)}) \Big) \qquad w = \frac{\pi_{\text{ref}}(y\mid x)}{\pi_{\theta}(y\mid x)} \cdot \frac{ \exp R(x,y) }{ Z(x) }
\end{align*}
where the returns $R(x,y^{(i)})$ are (pre-) scaled by $1/\beta$ to control the regularization effect of the anchor $\pi_{\text{ref}}$. After the E-Step, M-Step performs gradient updates based on $f$-divergence minimization; the following two tables, Table \ref{tab:f-div-objectives-weight-reverse} and Table \ref{tab:f-div-objectives-weight-forward}, summarize what to minimize under each $f$-divergence (\textbf{M-Step}).

\begin{figure*}[t]
  \centering
  \includegraphics[width=1.0\textwidth]{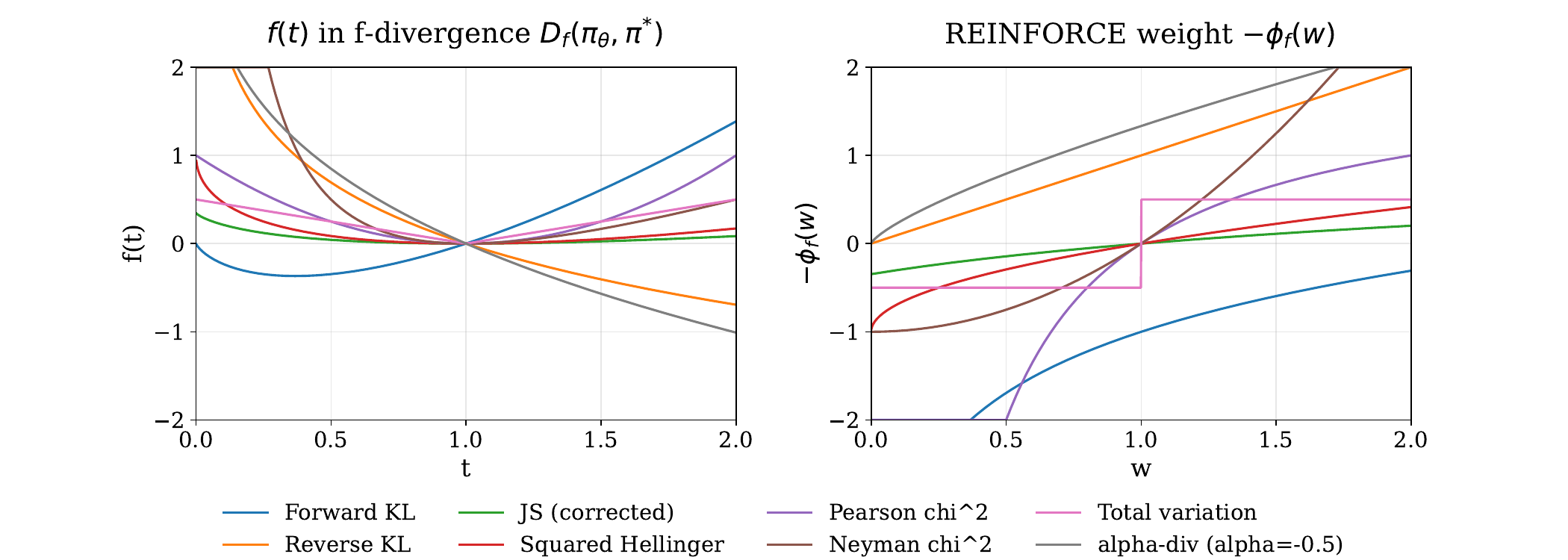}
  \caption{Policy gradient based on $\nabla_{\theta} D_f(\pi_{\theta}, \pi^{*})$, where the REINFORCE weight for $\nabla_{\theta}\log \pi_{\theta}(y|x)$ is $-\phi_{f}(w) = -f'(1/w)$.}
  \label{fig:pg-as-f-div-minimization}
  \vspace{10pt}
\end{figure*}

\begin{table}[h]
\centering
\small
\setlength{\tabcolsep}{5pt}
\renewcommand{\arraystretch}{1.25}
\vspace{20pt}
\caption{\centering \textbf{M-Step:} Minimizing $D_f(\pi_{\theta},\pi^{*})$ results in the gradient update $\E_{y\sim \pi_{\theta}}[\nabla_{\theta}\log \pi_{\theta}(y|x)\,\phi_f(w)]$}
\label{tab:fdiv_pg_forward}
\begin{tabular}{l l l}
\toprule
\textbf{$f$-divergence} & \textbf{$f(t)$} & $\phi_f(w)=f'(1/w)$ \\
\midrule
Forward KL
& $t\log t$
& $-\log w + 1$ \\

Reverse KL
& $-\log t$
& $-w$ \\

Jensen--Shannon (JS)
& $\frac{1}{2}\left(t\log t - (t+1)\log\!\Big(\frac{t+1}{2}\Big)\right)$
& $\frac{1}{2}\log\!\Big(\frac{2}{1+w}\Big)$ \\

Squared Hellinger
& $(\sqrt{t}-1)^2$
& $1-\sqrt{w}$ \\

Pearson $\chi^2$
& $(t-1)^2$
& $-2 + 2/w$ \\

Neyman $\chi^2$
& $\frac{(t-1)^2}{t}$
& $1-w^{2}$ \\

Total variation (TV)
& $\frac{1}{2}\,\lvert t-1\rvert$
& $-\frac{1}{2}\,\mathrm{sgn}(w-1)$ \\

$\alpha$-divergence ($\alpha\neq \pm 1$)
& $\frac{4}{1-\alpha^{2}}\Big(1-t^{\frac{1+\alpha}{2}}\Big)$
& $\frac{2}{\alpha-1}w^{\frac{1-\alpha}{2}}$ \\
\bottomrule
\end{tabular}
\label{tab:f-div-objectives-weight-reverse}
\end{table}

\begin{figure*}[t]
  \centering
  \includegraphics[width=1.0\textwidth]{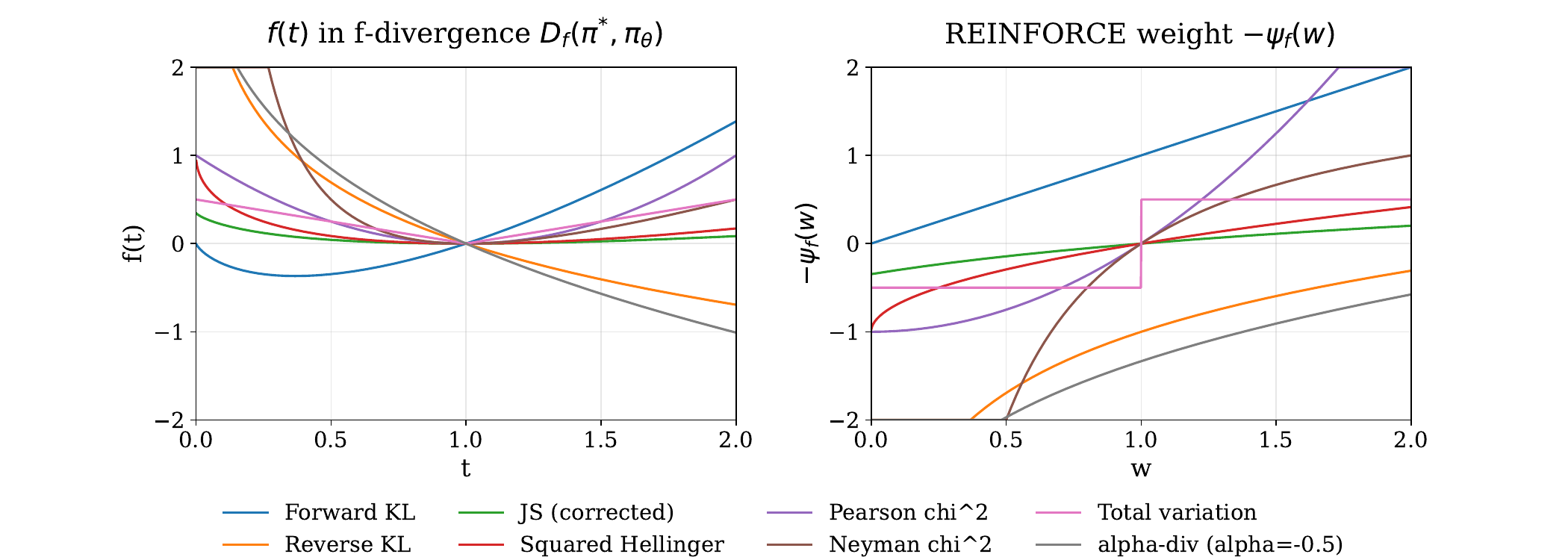}
  \caption{Policy gradient based on $\nabla_{\theta} D_f(\pi^{*}, \pi_{\theta} )$, where the REINFORCE weight for $\nabla_{\theta}\log \pi_{\theta}(y|x)$ is $-\psi_f(w)=-f(w)+w f'(w)$.}
  \label{fig:pg-as-f-div-reverse-minimization}
  \vspace{10pt}
\end{figure*}

\begin{table}[h]
\centering
\small
\setlength{\tabcolsep}{5pt}
\renewcommand{\arraystretch}{1.25}
\vspace{-20pt}
\caption{\centering \textbf{M-Step:} Minimizing $D_f(\pi^{*},\pi_{\theta})$ results in the gradient update
$\E_{y\sim \pi_{\theta}}[\nabla_{\theta}\log \pi_{\theta}(y|x)\,\psi_f(w)]$}
\label{tab:fdiv_pg_reverse}
\begin{tabular}{l l l}
\toprule
\textbf{$f$-divergence} & \textbf{$f(t)$} & $\psi_f(w)=f(w)-w f'(w)$ \\
\midrule
Forward KL
& $t\log t$
& $-w$ \\

Reverse KL
& $-\log t$
& $-\log w + 1$ \\

Jensen--Shannon (JS)
& $\frac{1}{2} \left(t\log t - (t+1)\log\!\Big(\frac{t+1}{2}\Big) \right)$
& $\frac{1}{2} \log\!\Big(\frac{2}{1+w}\Big)$ \\

Squared Hellinger
& $(\sqrt{t}-1)^2$
& $1 - \sqrt{w}$ \\

Pearson $\chi^2$
& $(t-1)^2$
& $1 - w^{2}$ \\

Neyman $\chi^2$
& $\frac{(t-1)^2}{t}$
& $-2 + 2/w$ \\

Total variation (TV)
& $\frac{1}{2}\,\lvert t-1\rvert$
& $-\frac{1}{2}\,\mathrm{sgn}(w-1)$ \\

$\alpha$-divergence ($\alpha\neq \pm 1$)
& $\frac{4}{1-\alpha^{2}}\Big(1-t^{\frac{1+\alpha}{2}}\Big)$
& $-\frac{2}{\alpha+1} w^{\frac{\alpha+1}{2}} - \frac{4}{(\alpha-1)(\alpha+1)}$ \\
\bottomrule
\end{tabular}
\label{tab:f-div-objectives-weight-forward}
\end{table}

\end{document}